\documentclass{article}
\PassOptionsToPackage{numbers, compress}{natbib}

\usepackage[final]{neurips_2025}
\usepackage{multirow}
\usepackage{microtype}
\usepackage{booktabs}

\usepackage{wrapfig}
\usepackage[utf8]{inputenc}
\usepackage[T1]{fontenc}
\usepackage{hyperref}
\usepackage{url}
\usepackage{booktabs}
\usepackage{amsfonts}
\usepackage{nicefrac}
\usepackage{microtype}
\usepackage{xcolor}
\usepackage{graphicx}
\usepackage{amsmath}
\usepackage{enumitem}
\usepackage{pifont}
\usepackage{wrapfig}
\usepackage{diagbox}
\usepackage{enumitem}
\usepackage{multirow}
\usepackage{algorithmicx,algorithm}
\usepackage{algpseudocode}
\usepackage{mathtools}
\usepackage{slashed}
\usepackage{braket}
\usepackage{orcidlink}
\usepackage{amssymb}
\usepackage{bbm}
\usepackage{pifont}
\usepackage{hhline}
\usepackage{url}
\usepackage{todonotes}
\usepackage{subcaption}

\definecolor{myred}{RGB}{215,48,39}
\definecolor{mygreen}{RGB}{26,152,80}
\newcommand{\cmark}{\textcolor{mygreen}{\ding{51}}}
\newcommand{\xmark}{\textcolor{myred}{\ding{55}}}

\usepackage{comment}
\usepackage{wrapfig}
\usepackage{diagbox}
\usepackage{enumitem}
\usepackage{multirow}
\usepackage{soul}
\usepackage{float}
\usepackage{amsthm}
\usepackage{thmtools,thm-restate}
\usepackage{tikz}
\usetikzlibrary{fit}
\usepackage{bm}
\usetikzlibrary{backgrounds}
\pgfdeclarelayer{background}
\pgfsetlayers{background,main}
\usepackage{mathrsfs}
\usepackage{titletoc}

\definecolor{darkred}{rgb}{0.55, 0.0, 0.0}
\definecolor{darkgreen}{rgb}{0.0, 0.5, 0.0}

\newtheorem{definition}{Definition}
\newtheorem{remark}{Remark}

\title{\Large Measure-Theoretic Anti-Causal Representation Learning}

\author{
  Arman Behnam \\
  Department of Computer Secience\\
  Illinois Institute of Technology\\
   Chicago, Illinois, USA\\
  \texttt{abehnam@hawk.illinoistech.edu} \\
  \And
  Binghui Wang \\
  Department of Computer Secience\\
  Illinois Institute of Technology\\
   Chicago, Illinois, USA\\
  \texttt{bwang70@illinoistech.edu} \\}

\begin{document}

\maketitle

\vskip 0.3in

\begin{abstract}
Causal representation learning in the anti-causal setting—labels cause features rather than the reverse—presents unique challenges requiring specialized approaches. We propose Anti-Causal Invariant Abstractions (ACIA), a novel measure-theoretic framework for anti-causal representation learning. ACIA employs a two-level design: low-level representations capture how labels generate observations, while high-level representations learn stable causal patterns across environment-specific variations. ACIA addresses key limitations of existing approaches by: (1) accommodating prefect and imperfect interventions through interventional kernels, (2) eliminating dependency on explicit causal structures, (3) handling high-dimensional data effectively, and (4) providing theoretical guarantees for out-of-distribution generalization. Experiments on synthetic  and real-world medical datasets demonstrate that ACIA consistently outperforms state-of-the-art methods in both accuracy and invariance metrics. Furthermore, our theoretical results establish tight bounds on performance gaps between training and unseen environments, confirming the efficacy of our approach for robust anti-causal learning. {{Code is available at \url{https://github.com/ArmanBehnam/ACIA}}}.
\end{abstract}
\section{Introduction}
Causal representation learning discovers causal relationships underlying data rather than statistical associations~\cite{scholkopf2021toward}. At its core, causal representation learning seeks to identify \textit{high-level causal variables} from low-level observations, bridging the gap between statistical pattern recognition and causal reasoning. Learning these causal variables offers transformative potential for artificial intelligence systems that can reason about cause and effect.

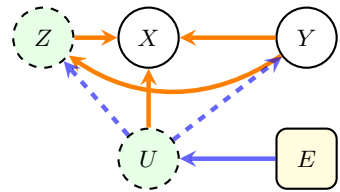
\begin{wrapfigure}{r}{0.3\textwidth}
\vspace{-4mm}
\centering
\begin{tikzpicture}[scale=0.7,
    var/.style={draw, circle, inner sep=0pt, minimum size=0.8cm, font=\small, fill=white},
    latent/.style={draw, circle, dashed, inner sep=0pt, minimum size=0.8cm, font=\small, fill=green!10},
    env/.style={draw, rectangle, rounded corners, inner sep=0pt, minimum size=0.8cm, font=\small, fill=yellow!15},
    arrow/.style={->, >=stealth, line width=0.6mm},
    causal/.style={arrow, orange},
    spurious/.style={arrow, blue, opacity=0.6},
    confound/.style={arrow, blue, opacity=0.6, dashed},
    thick]

\node (X) [var] at (0, 3) {$X$};
\node (Y) [var] at (3, 3) {$Y$};
\node (Z) [latent] at (-2, 3) {$Z$};
\node (U) [latent] at (0, 0.7) {$U$};
\node (E) [env] at (3, 0.7) {$E$};

\path[causal] (Y) edge (X);
\path[causal] (Y) edge[bend left=35] (Z);
\path[causal] (Z) edge (X);
\path[causal] (U) edge (X);
\path[spurious] (E) edge (U);
\path[confound] (U) edge (Y);
\path[confound] (U) edge (Z);
\end{tikzpicture}
\caption{Anti-Causal Diagram: causal (orange), spurious (blue), and confounding (dashed blue) dependencies.}
\label{fig:solution}
\vspace{-6mm}
\end{wrapfigure}
A particularly challenging yet promising domain is learning representations in the \textit{anti-causal} setting, where the causal direction is reversed from traditional prediction tasks. {Figure~\ref{fig:solution} diagrams the anti-causal setting, where $Y$ (target) is the causal variable, $X$ (observation) the observable variables, $E$ (Environment) the environment variable introducing spurious correlations, $U$ is confounder which affects both $X$ and $Y$, and $Z$ (latent variable) is an unmeasured intermediary. The orange arrows represent direct paths to the observed variables $X$, and blue arrows represent confounding effects.}

Consider a disease diagnosis from chest X-rays across different hospitals~\cite{castro2020causality}. A disease (${Y}$) causes observable symptoms and measurements ($X$), with the relationship represented as $Y \rightarrow X$. The confounding factors $U$ (e.g., age and sex) affect both disease and  symptoms. Environmental factors $E$ (e.g., hospital-specific protocols) introduce spurious correlations by creating hospital-specific variations, forming the anti-causal structure $Y \rightarrow X \leftarrow E$. The orange arrows therefore depict the true disease-to-symptom mechanism ($Y \rightarrow X$), while the paths involving $E$ and $U$ introduce noise. This anti-causal structure requires specialized methods to disentangle causal mechanisms from environmental artifacts.

Early works~\cite{scholkopf2012causal, janzing2015semi} formalized anti-causal learning and showed that traditional methods fail in this setting. Follow-up works can be categorized into three main approaches: \textit{1) Intervention-based causal learning} \cite{NEURIPS2023_97fe251c,buchholz2024learning, mejia2025causal} that models causal effects through interventions; \textit{2) Structure-based causal methods} \cite{wang2024disentangled,bellot2022partial,lv2022causality,squires2023causal,pmlr-v202-squires23a}  explicitly models causal structures through Directed Acyclic Graphs (DAGs), requiring complete knowledge of the underlying Structural Causal Model (SCM);  
\textit{3) Invariant learning methods} \cite{jiang2022invariant,wang2022unified,gui2024joint,wang2022causal,veitch2021counterfactual,yu2023rethinking,NEURIPS2021_ecf9902e} that seeks representations invariant across distributions. {See more related work in Appendix \ref{app:related}.}

However, existing methods face several critical limitations. First, intervention-based approaches \cite{NEURIPS2023_97fe251c,buchholz2024learning} \textit{assume perfect interventions}---where intervened variables are completely disconnected from their causes—a restrictive assumption rarely satisfied in real-world scenarios. Second, structure-based methods' reliance on \textit{explicit structural dependencies} through SCM {\cite{squires2023causal, pmlr-v202-squires23a}} poses significant hurdles when the underlying SCM is unknown. Third, distribution-invariant approaches' assumptions on \textit{independent and identically distributed data or known test distributions} limit the methods' generalization capabilities \cite{yu2023rethinking,NEURIPS2021_ecf9902e}. 
Fundamentally, these limitations arise because the single-level representations learned by these methods cannot simultaneously capture the causal mechanism from $Y$ to $X$ while filtering spurious correlations from $E$ to $X$ in anti-causal structure \cite{arjovsky2019invariant,krueger2021out}. 

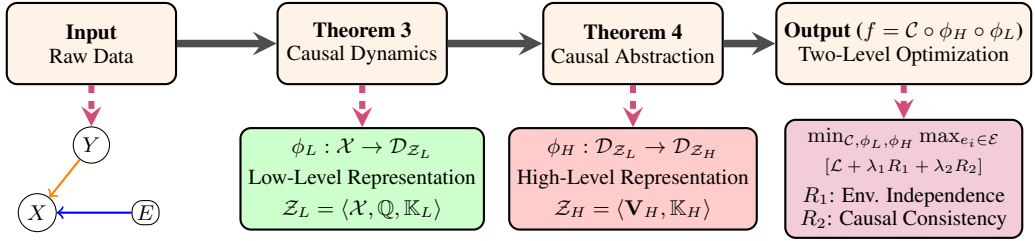
\begin{figure}[t]
\centering
\begin{tikzpicture}[
    scale=0.9,
    every node/.style={transform shape},
    box/.style={draw, rectangle, rounded corners, minimum width=2.5cm, minimum height=1.2cm, align=center, font=\small},
    step/.style={box, fill=blue!10, thick},
    result/.style={box, fill=green!10, thick},
    process/.style={box, fill=orange!10, thick},
    arrow/.style={->, >=stealth, line width=1mm, draw=black!70},
    dasharrow/.style={->, >=stealth, line width=0.8mm, dashed, draw=purple!70}]

\node[process] (step1) at (0, 0) {
    \textbf{Input}\\
    {Raw Data}};

\node[draw, circle, inner sep=2pt, font=\footnotesize] (Y1) at (0, -1.5) {$Y$};
\node[draw, circle, inner sep=2pt, font=\footnotesize] (X1) at (-0.8, -2.5) {$X$};
\node[draw, rectangle, rounded corners, inner sep=2pt, font=\footnotesize] (E1) at (0.8, -2.5) {$E$};
\draw[->, thick, orange] (Y1) -- (X1);
\draw[->, thick, blue] (E1) -- (X1);

\node[process] (step2) at (4, 0) {
    \textbf{Theorem 3}\\
    Causal Dynamics};

\node[result, fill=green!20, minimum width=3.5cm, minimum height=1.5cm] (lowrep) at (4, -2) {
    $\phi_L: \mathcal{X} \rightarrow \mathcal{D}_{\mathcal{Z}_L}$\\[3pt]
    \small Low-Level Representation\\[3pt]
    \footnotesize $\mathcal{Z}_L = \langle \mathcal{X}, \mathbb{Q}, \mathbb{K}_L \rangle$};

\node[process] (step3) at (8, 0) {
    \textbf{Theorem 4}\\
    Causal Abstraction};

\node[result, fill=red!20, minimum width=3.5cm, minimum height=1.5cm] (highrep) at (8, -2) {
    $\phi_H: \mathcal{D}_{\mathcal{Z}_L} \rightarrow \mathcal{D}_{\mathcal{Z}_H}$\\[3pt]
    \small High-Level Representation\\[3pt]
    \footnotesize $\mathcal{Z}_H = \langle \mathbf{V}_H, \mathbb{K}_H \rangle$};

\node[process] (step4) at (12, 0) {
    \textbf{Output ($f=\mathcal{C} \circ \phi_H \circ \phi_L$) }\\
    Two-Level 
    Optimization};

\node[result, fill=purple!20, minimum width=3.5cm, minimum height=1.5cm] (optim) at (12, -2) {
    \footnotesize $\min_{\mathcal{C},\phi_L,\phi_H} \max_{e_i \in \mathcal{E}}$\\[2pt]
    \scriptsize $[\mathcal{L} + \lambda_1 R_1 + \lambda_2 R_2]$\\[3pt]
    \small $R_1$: Env. Independence\\
    \small $R_2$: Causal Consistency};

\draw[arrow] (step1) -- (step2);
\draw[arrow] (step2) -- (step3);
\draw[arrow] (step3) -- (step4);

\draw[dasharrow] (step1.south) -- (Y1.north);
\draw[dasharrow] (step2.south) -- (lowrep.north);
\draw[dasharrow] (step3.south) -- (highrep.north);
\draw[dasharrow] (step4.south) -- (optim.north);
\end{tikzpicture}
\caption{ACIA: Anti-Causal Invariant Abstraction Framework. }
\label{fig:acia_overview}
\vspace{-4mm}
\end{figure}

We develop Anti-Causal Invariant Abstraction (ACIA), a two-level representation learning method (Figure \ref{fig:acia_overview}), to address above limitations.
ACIA is inspired by recent causal representation learning studies~\cite{gui2024joint,NEURIPS2023_97fe251c,ahuja2023interventional,xia2024neural} and measure-theoretic causality~\cite{park2023measure}. Specifically, to address the first limitation, we introduce a \textit{generalized intervention model that accommodates both perfect and imperfect interventions via the designed interventional kernels}. To address the second limitation, ACIA \textit{learns directly from raw input without requiring explicit SCM specification}. To address the third limitation, we introduce \textit{environment-invariant regularizers} that enable stable identification of invariant causal variables across environments. All together, ACIA involves:

- {\bf causal dynamics} to learn \textit{low-level representations} directly from data---without requiring explicit DAGs/SCMs---that capture the anti-causal structure by encoding how labels generate observable features while preserving environment-specific variations. E.g., in medical diagnosis, this reflects how diseases ($Y$) manifest as symptoms and measurements ($X$) in X-ray images, encompassing both disease-related patterns and hospital-specific factors. The learnt low-level representations support reasoning under both perfect and imperfect interventions, enabled by the interventional kernels.

- {\bf causal abstraction} to learn \textit{high-level representations} that distill environment-invariant causal features from the low-level representations. 
These abstractions generalize across environments by discarding spurious environmental correlations while preserving label-relevant mechanisms. For example, this involves identifying patterns that are consistently associated with specific diseases regardless of the hospital environment.

- {\bf theoretical guarantees} that establish convergence rates, out-of-distribution/domain generalization bounds, and environmental robustness for the learned representations.

We extensively evaluate ACIA on multiple synthetic and real-world datasets with perfect and imperfect interventions. For instance, our results demonstrate ACIA achieves almost perfect accuracy (e.g., 99\%) on widely-studied CMNIST and RMNIST synthetic datasets with perfect environment independence and intervention robustness, significantly outperforming SOTA baselines. Most notably, on the real-world Camelyon17 medical dataset, ACIA achieves 84.40\% accuracy—an 19\% improvement over the best baseline (65.5\% by LECI \cite{gui2024joint})—while maintaining competitive environment independence and low-level invariance metrics. These results validate our theoretical framework's ability to learn robust anti-causal representations across both synthetic and real-world settings, with particularly strong performance gains in scenarios with complex environmental variations.
\section{Background on Measure Theory and Causality}
\label{sec:prelim}
A measurable space $(\Omega, \mathscr{F}, \mu)$ consists of a sample space $\Omega$, a $\sigma$-algebra $\mathscr{F}$ of measurable sets, and a probability measure $\mu$. Within the causal context, $\Omega$ represents possible states of the world, $\mathscr{F}$ represents events we can measure, and $\mu$ assigns probabilities to these events.
The important notations in this paper are summarized in Table \ref{tab:notations} in Appendix.

\subsection{Measure Theory}
\label{sec:measuretheory}
\begin{definition}[Environment Measurable Space]
\label{def:env_measurable_space}
Given a finite set of environments $\mathcal{E}$, each $e \in \mathcal{E}$ is associated with a measurable input space $(\mathcal{X}_e, \mathscr{F}_{\mathcal{X}_e})$, a measurable output space $(\mathcal{Y}_e, \mathscr{F}_{\mathcal{Y}_e})$, and a probability measure $P_e$ on the product space $(\mathcal{X}_e \times \mathcal{Y}_e, \mathscr{F}_{\mathcal{X}_e} \otimes \mathscr{F}_{\mathcal{Y}_e})$.
\end{definition}

\begin{definition}[Data Space]
\label{def:data_space}
For each environment $e \in \mathcal{E}$, the data space is a tuple $(D_{e}, \mathscr{F}_{D_{e}}, p_e)$ where $D_{e} = \{(x^{e}_j, y^{e}_j)\}_{j=1}^{|D_{e}|}$ is a finite collection of input-output pairs from environment $e$, $x^{e}_j$ are elements of the input space $\mathcal{X}_{e}$, $y^{e}_j$ are elements of the output space $\mathcal{Y}_{e}$, $T_{e}$ is the index set that defines the component-wise sample space structure for environment $e$.  Specifically, it indexes the components of the product space such that $\Omega_{e} = \times_{t \in T_{e}} E_t$, where each $E_t$ represents a measurable component space at index $t$, and $p_{e}$ is a probability measure on $D_{e}$ defining the distribution of $(x^{e}_j, y^{e}_j)$.
\end{definition}

\begin{definition}[Representation]
\label{def:representation}
A representation is a measurable function $\phi: \mathcal{X} \rightarrow \mathcal{R}$ mapping inputs to a latent space $\mathcal{R}$, where $(\mathcal{R}, \mathscr{F}_{\mathcal{R}})$ is a measurable space. 
\end{definition}
A representation is causal if it captures the underlying causal mechanisms generating the data.

\begin{definition}[Kernel \cite{klenke2014probability}]
\label{def:kernel}
A kernel $K$ is a function $K: \Omega \times \mathscr{F} \rightarrow [0,1]$ such that:
    1. For each fixed $\omega \in \Omega$, the mapping $A \mapsto K(\omega, A)$ is a probability measure on $(\Omega, \mathscr{F})$;
    2. For each fixed $A \in \mathscr{F}$, the mapping $\omega \mapsto K(\omega, A)$ is $\mathscr{F}$-measurable.
\end{definition}
Intuitively, $K(\omega, A)$ represents the probability of $A$ conditioned on the information encoded in $\omega$. Properties of kernels being used in this work are discussed in Appendix \ref{app:kp}.

\subsection{Causality}
\label{sec:causality}
In the measure-theoretic framework, interventions modify kernels rather than structural equations~\cite{pearl2009causality}, enabling unified treatment of both perfect and imperfect interventions.

\begin{definition}[Intervention \cite{kocaoglu2019characterization}]
\label{def:intervention}
An intervention is a measurable mapping $\mathbb{Q}(\cdot|\cdot): \mathscr{H} \times \Omega \rightarrow [0,1]$ that modifies causal kernels {by modifying the underlying probability structure. There are two types of intervention in causal representation learning:}

    1. A \emph{hard (or perfect) intervention} sets $\mathbb{Q}(A|\omega) = \mathbb{Q}(A)$, \textit{independent} of $\omega$;

    2. A \emph{soft (or imperfect) intervention} allows $\mathbb{Q}(A|\omega)$ to \textit{depend} on $\omega$
\end{definition}

The basis of this work is on understanding the meaning of causal dependence and causal spaces.
\begin{definition}[Causal Independence \cite{pearl2009causality}]
\label{def:causal_independence}
Variables $X$ and $Y$ are causally independent given $Z$, denoted $X \perp\!\!\!\perp_c Y | Z$, if $P(Y|do(X=x), Z) = P(Y|Z)$ for all $x$ in the support of $X$, and $P(X|do(Y=y), Z) = P(X|Z)$ for all $y$ in the support of $Y$. The do-operator $do(X=x)$ represents an intervention that sets variable $X$ to value $x$, i.e., breaking all cause factors to $X$\footnote{$P(Y|do(X=x))$ differs from $P(Y|X=x)$, which observes $X=x$ while preserving causal relationships.}.
\end{definition}

\begin{definition}[Causal Space~\cite{park2023measure}]
\label{def:causal_space}
For an environment $e$, a causal space is a tuple $(\Omega_{e}, \mathscr{H}_{e}, P_{e}, K_{e})$, where $\Omega_{e} = \times_{t \in T_{e}} E_t$ is the sample space, $P_e$ is the probability measure on $(\Omega_{e}, \mathscr{H}_{e})$, and $K_{e}$ is a kernel function for environment $e$.
For each $t \in T_e$, $\mathscr{A}_t$ is the $\sigma$-algebra on component space $E_t$, and the overall $\sigma$-algebra $\mathscr{H}_e = \otimes_{t \in T_e} \mathscr{A}_t$ is the tensor product of these component $\sigma$-algebras.
\end{definition}
This definition is the backbone of measure-theoretic causality in this paper, inspired from~\cite{park2023measure}.
\section{ACIA: Measure-Theoretic Anti-Causal Representation Learning}
%\vspace{-2mm}

This section presents our theoretical framework ACIA for anti-causal representation learning.

%\vspace{-2mm}
\subsection{Problem Formulation}
\label{sec:problem}
\vspace{-2mm}

We formalize the anti-causal representation learning problem as follows: given a causal structure where label $Y$ causes the observation $X$ and environment $E$ also influences $X$ ($Y \rightarrow X \leftarrow E$), our goal is to learn representations
that capture the causal generative invariant
from $Y$ to $X$ \footnote{A \textit{causal generative invariant} is a stable function $f: \mathcal{Y} \rightarrow \mathcal{X}$ by which $Y$ produces $X$, formally represented as $X = f(Y, \epsilon)$ where $\epsilon$ represents noise, and $f$ is invariant across environments.}. Given observations from environments $\mathcal{E} = \{e_i\}_{i=1}^n$ with corresponding datasets $\mathcal{D} = \{D_{e_i}\}_{i=1}^n$, we aim to learn two-level representations:

\vspace{-1mm}
- a \emph{low-level representation} $\phi_L: \mathcal{D} \subset \mathcal{X} \rightarrow \mathcal{Z}_L$ that extracts features from raw data to uncover the anti-causal structure including both $Y \rightarrow X$ and $E \rightarrow X$.

\vspace{-1mm}
- a \emph{high-level representation} $\phi_H$: $\mathcal{Z}_L \rightarrow \mathcal{Z}_H$ that distills environment-invariant causal features from $\phi_L$, i.e., enforcing $\phi_H(\phi_L({X})) \perp E \mid Y$. 

The predictor $\mathcal{C}: \mathcal{Z}_H \rightarrow \mathcal{Y}$ then maps high-level representations to labels. With a loss function $\ell: \mathcal{Y} \times \mathcal{Y} \rightarrow \mathbb{R}_+$ defined across all environments $\mathcal{E}$, the full model $f = \mathcal{C} \circ \phi_H \circ \phi_L$ can be trained in an end-to-end fashion. 

More specifically, we introduce {\bf causal dynamic} (Thm.\ref{thm:causal_dynamics}) to facilitate learning low-level representations $\mathcal{Z}_L$ by jointly optimizing the loss with a causal structure consistency regularizer ($R_2$ in Eqn.\ref{eqn:R2}), where minimizing it encourages the low-level representations to align with the true causal mechanisms underlying the data. 
 %, thereby promoting better generalization under intervention.
On top of $\mathcal{Z}_L$, we further introduce {\bf causal abstraction} (Thm.\ref{thm:causal_abstraction}) to learn high-level representations, guided by another environment independence regularizer ($R_1$ in Eqn.\ref{eqn:R1}). This regularizer measures the discrepancy between the expected high-level representations across environments conditioned on the label $Y$. Minimizing it can remove environment-specific information while retaining label-relevant causal features.

\vspace{-1mm}
\noindent \textbf{Two-level design rationale.} This hierarchical structure enables us to do two procedures
% addresses several key challenges
: (1) \emph{Interventional effect calculation}: $\phi_L$  captures how the anti-causal setting responds to interventions, handling both perfect and imperfect intervention scenarios; (2) \emph{Information bottleneck}: $\phi_H$ retains only label-relevant invariants while discarding environment-specific noise.

%\vspace{-2mm}
\subsection{The Theoretical Framework}
\label{sec:framework}
\vspace{-2mm}

Our framework establishes product causal space (Def.~\ref{def:product_space}) on measure-theoretic causality to handle anti-causal learning across multiple environments; causal kernel (Def.\ref{def:causal_kernel}) and interventional kernel (Thm.\ref{thm:ick}) to  characterize anti-causal structures (Thm.\ref{thm:anti_causal}). Building on this foundation, we develop causal dynamics (Thm.\ref{thm:causal_dynamics}) to learn low-level representations that extract anti-causal relationships from the raw data. On top of it, we further develop causal abstractions  (Thm.\ref{thm:causal_abstraction}) to learn environment-invariant  high-level representations. 
Figure \ref{fig:acia_overview} shows the detailed  procedure of our framework. We first introduce necessary definitions below. 

\begin{definition}[Product Causal Space]
\label{def:product_space}
Given causal spaces $(\Omega_{e_i}, \mathscr{H}_{e_i}, \mathbb{P}_{e_i}, K_{e_i})$ and $(\Omega_{e_j}, \mathscr{H}_{e_j}, \mathbb{P}_{e_j}, K_{e_j})$ for environments $e_i$ and $e_j$, a product causal space is a tuple $(\Omega, \mathscr{H}, \mathbb{P}, \mathbb{K})$ where $\Omega = \Omega_{e_1} \times \Omega_{e_2}$ is the sample space for combined environments $e_i$ and $e_j$, $\mathscr{H} = \mathscr{H}_{e_i} \otimes \mathscr{H}_{e_j}$  is the product $\sigma$-algebra, and $\mathbb{P} = \mathbb{P}_{e_i} \otimes \mathbb{P}_{e_j}$  is the product measure. $\mathbb{K}_S = \{K_S : S \in \mathscr{P}(T)\}$  is a family of causal kernels with $\mathscr{P}$ the power set function and $T = T_{e_i} \cup T_{e_j}$ is the union of index sets.
\vspace{-2mm}
\end{definition}

This construction enables joint reasoning across environments while preserving individual causal structures. For analysis of environment subsets, we utilize sub-$\sigma$-algebras:

\begin{definition}[Sub-$\sigma$-algebra]
\label{def:sub_sigma}
Given a product causal space $(\Omega, \mathscr{H}, \mathbb{P}, \mathbb{K})$, for any subset $S \subseteq T$, the sub-$\sigma$-algebra $\mathscr{H}_S$ is generated by measurable rectangles $A_i \times A_j$ where $A_i \in \mathscr{H}_{e_i}$ and $A_j \in \mathscr{H}_{e_j}$ corresponding to the events in the time indices $S$.
\vspace{-2mm}
\end{definition}
Product causal spaces merge causal space of different environments by tensor products of $\sigma$-algebras. Properties of causal sub-structures are discussed in Appendix \ref{app:pcssp}. Next, we define the causal kernel to represent the causal structure.
\begin{definition}[Causal Kernel]
\label{def:causal_kernel}
A causal kernel $K_S \in \mathbb{K}_S $ for index set $S \in \mathscr{P}(T)$ is a function $K_S: \Omega \times \mathscr{H} \rightarrow [0,1]$. For fixed $\omega \in \Omega$, $K_S(\omega, \cdot)$ is a probability measure on $(\Omega, \mathscr{H})$, and for fixed $A \in \mathscr{H}$, $K_S(\cdot, A)$ is $\mathscr{H}_S$-measurable, where $\mathscr{H}_S$ is sub-$\sigma$-algebra in $S$.

\vspace{-2mm}
\end{definition}
Intuitively, $K_S(\omega, A)$ is the conditional probability of event $A$ given causal information encoded in $\omega$, restricted to environments indexed by $S$. This enables characterization of anti-causal structures:
\begin{restatable}[Anti-Causal Kernel Characterization]{theorem}{corac}
\label{thm:anti_causal}
For an anti-causal structure with arbitrary feature space $\mathcal{X}$, label space $\mathcal{Y}$, and environments $\mathcal{E}$, the causal kernel satisfies:
\begin{equation}
K_S(\omega, A) = \int_{\mathcal{Y}} P(X \in A \mid Y=y, E \in S) \, d\mu_Y(y)
\end{equation}
where $\mu_Y$ is the marginal measure on $\mathcal{Y}$.
\vspace{-2mm}
\end{restatable}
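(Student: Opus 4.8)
The plan is to unfold Definition~\ref{def:causal_kernel} and show that, under the anti-causal structure $Y \rightarrow X \leftarrow E$, the causal kernel $K_S$ collapses to the stated integral by disintegrating over the label variable $Y$. \emph{Step 1 (structural content of ``anti-causal'').} First I would record what the structure buys us: the feature variable $X$ is produced by its causal parents $Y$ and $E$ through a generative mechanism $X = f(Y,\epsilon)$ with noise $\epsilon$ whose law may depend on $E$, while $Y$ is a source node, so that $Y \perp\!\!\!\perp E$ (in the core anti-causal DAG there is no edge, and no common ancestor, between $Y$ and $E$). The causal Markov property then gives $X \perp\!\!\!\perp (\text{all other variables}) \mid (Y,E)$. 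Combined with the requirement of Definition~\ref{def:causal_kernel} that $K_S(\cdot,A)$ be $\mathscr{H}_S$-measurable --- so that it uses only the information indexed by $S$, which for the environment indices amounts to the event $\{E \in S\}$ --- this means $K_S(\omega,A)$ is a version of the regular conditional probability $P(X \in A \mid E \in S)$.

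\emph{Step 2 (disintegration over $Y$).} Since $\mathcal{Y}$ is a standard Borel space, regular conditional probabilities exist, and the disintegration theorem (equivalently, the tower property applied to the parent $Y$) yields, for every $A \in \mathscr{F}_{\mathcal{X}}$,
\[
K_S(\omega,A) \;=\; \mathbb{E}\big[\, \mathbbm{1}\{X \in A\} \,\big|\, E \in S \,\big] \;=\; \int_{\mathcal{Y}} P\big(X \in A \,\big|\, Y=y,\, E \in S\big)\, dP_{Y \mid E \in S}(y).
\]
Invoking the exogeneity $Y \perp\!\!\!\perp E$ established in Step 1, the conditioning in the integrating measure is vacuous, so $P_{Y \mid E \in S} = \mu_Y$, the marginal law of $Y$; substituting gives exactly the claimed identity.

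\emph{Step 3 (consistency with the kernel axioms).} To close the argument I would verify that the right-hand side genuinely defines a causal kernel in the sense of Definition~\ref{def:causal_kernel}: for fixed $\omega$, the map $A \mapsto \int_{\mathcal{Y}} P(X \in A \mid Y=y, E \in S)\, d\mu_Y(y)$ is a probability measure on $(\Omega,\mathscr{H})$ --- countable additivity follows from monotone convergence applied to the inner probability measures, and total mass $1$ from $P(X \in \mathcal{X}\mid Y=y,E \in S)=1$ for $\mu_Y$-a.e.\ $y$ --- and for fixed $A$ the map $\omega \mapsto (\text{RHS})$ depends on $\omega$ only through membership of its environment coordinate in $S$, hence is $\mathscr{H}_S$-measurable.

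The step I expect to be the main obstacle is Step 1: one must argue carefully, in measure-theoretic language, that ``conditioning on the causal information in $\omega$ restricted to $S$'' coincides (up to $\mathbb{P}$-null sets) with conditioning on $\big(Y, \mathbbm{1}\{E \in S\}\big)$. This is precisely where the anti-causal assumption and the causal Markov property enter, and where one has to be explicit about how the sub-$\sigma$-algebra $\mathscr{H}_S$ of Definition~\ref{def:sub_sigma} relates to the $\sigma$-algebra generated by the label together with the environment-membership event; in particular, securing the unconditional independence $Y \perp\!\!\!\perp E$ (rather than merely a conditional version) is what pins the integrating measure down to the marginal $\mu_Y$ rather than to a conditional law $P_{Y\mid E \in S}$.
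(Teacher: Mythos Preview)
Your route is genuinely different from the paper's. The paper's proof treats $K_S(\omega,A)$ as already conditioned on the $Y$-component of $\omega$ --- it identifies $K_S(\omega,A)=P(X\in A\mid Y=y,\,E\in S)$ for the specific $y$ carried by $\omega$ (this is also how Remark~\ref{rem:kernel_char}(3) and Corollary~\ref{cor:kernel_property} read the kernel) --- and then invokes the \emph{conditional} independence $X\perp\!\!\!\perp E\mid Y$ from the local Markov property to strip the $E\in S$ from the integrand. The paper's final ``integrating over $\mu_Y$'' step is asserted rather than derived, but that is the argument as written.

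You instead read $K_S(\omega,A)$ as $P(X\in A\mid E\in S)$ with no $Y$-conditioning, obtain the integral by disintegrating over $Y$, and then use the \emph{marginal} independence $Y\perp\!\!\!\perp E$ to replace $P_{Y\mid E\in S}$ by $\mu_Y$. This is a cleaner path to the stated identity --- the integral over $\mathcal{Y}$ arises naturally from the tower property rather than appearing unmotivated --- and your diagnosis that $Y\perp\!\!\!\perp E$ is exactly what forces the integrating measure down to $\mu_Y$ is correct. The cost is that your Step~1 interpretation of $\mathscr{H}_S$ (environment-membership only, $Y$ excluded) conflicts with how the paper uses $K_S$ elsewhere, where the kernel is meant to depend on the $Y$-component of $\omega$. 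So your argument is internally coherent and arguably tighter, but it proves the theorem under a reading of the sub-$\sigma$-algebra that the paper does not adopt; the paper's own key step is $X\perp\!\!\!\perp E\mid Y$, not $Y\perp\!\!\!\perp E$.
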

This characterization captures how labels $Y$ generate observations $X$ across environment subsets $S$, integrating over all possible label values weighted by their marginal probabilities. 
Realization of anti-causal kernels is discussed in 
Appendix \ref{app:ack}. Moreover, the independence property of the anti-causal kernel is as follows:

\begin{restatable}[Independence Property of Anti-Causal Kernel]{corollary}{corkip}
\label{cor:kernel_property}
In an anti-causal structure, for any $\omega, \omega' \in \Omega$ with identical $Y$-component and for all $A \in \mathscr{H}_{\mathcal{X}}$, $B \in \mathscr{H}_Y$, $S \in \mathscr{P}(T)$:
\begin{equation}
K_S(\omega, \{A|B\}) = K_S(\omega', \{A|B\})
\end{equation}
\end{restatable}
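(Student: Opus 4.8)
The plan is to obtain the corollary as a direct consequence of the Anti-Causal Kernel Characterization (Theorem~\ref{thm:anti_causal}) together with the quotient definition of a conditional event. Write $\pi_Y:\Omega\to\mathcal{Y}$ for the projection onto the $Y$-component. First I would treat a plain feature event $A\in\mathscr{H}_{\mathcal{X}}$: by Theorem~\ref{thm:anti_causal},
\begin{equation*}
K_S(\omega,A)=\int_{\mathcal{Y}} P\big(X\in A\mid Y=y,\,E\in S\big)\,d\mu_{Y}(y\mid\omega),
\end{equation*}
and the only way $\omega$ enters the right-hand side is through the conditioning information it carries about the causal ancestors of $X$. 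In the anti-causal diagram $Y\rightarrow X\leftarrow E$ those ancestors are $Y$ and $E$; the dependence on $E$ is already absorbed into the subset $S$ and the sub-$\sigma$-algebra $\mathscr{H}_S$ (Definitions~\ref{def:sub_sigma} and~\ref{def:causal_kernel}), so the residual dependence is only on $\pi_Y(\omega)$ — formally, the integrating measure is the conditional law $\mu_Y(\cdot\mid\omega)$, which by the anti-causal Markov property is a function of $\pi_Y(\omega)$ alone. Hence $\pi_Y(\omega)=\pi_Y(\omega')$ forces $K_S(\omega,A)=K_S(\omega',A)$ for every $A\in\mathscr{H}_{\mathcal{X}}$ and every $S\in\mathscr{P}(T)$.

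Second, I would lift this from plain events to the conditional event $\{A\mid B\}$ with $B\in\mathscr{H}_Y$. By the convention for conditional events under a kernel, $K_S(\omega,\{A\mid B\})=K_S(\omega,A\cap B)/K_S(\omega,B)$ whenever $K_S(\omega,B)>0$, and it is set to the same default value otherwise. Since $A\in\mathscr{H}_{\mathcal{X}}$ is a pure feature event and $B\in\mathscr{H}_Y$ a pure label event, the anti-causal factorization underlying Theorem~\ref{thm:anti_causal} rewrites numerator and denominator, respectively, as $\int_{\mathcal{Y}} P(X\in A\mid Y=y,E\in S)\,\mathbf{1}_{B}(y)\,d\mu_Y(y\mid\omega)$ and $\int_{\mathcal{Y}}\mathbf{1}_{B}(y)\,d\mu_Y(y\mid\omega)$. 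By the first step both are functions of $\pi_Y(\omega)$ only, so for $\omega,\omega'$ sharing the same $Y$-component they are separately equal, hence so is their ratio (the degenerate case being handled identically by the shared convention). This yields $K_S(\omega,\{A\mid B\})=K_S(\omega',\{A\mid B\})$, as claimed.

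The main obstacle is making precise, in the first step, the assertion that the integrating measure in the characterization depends on $\omega$ solely through $\pi_Y(\omega)$. This is exactly the content of the anti-causal Markov structure — every ancestor of $X$ other than $E$ is screened off by $Y$ — which Theorem~\ref{thm:anti_causal} encodes but which I would want to invoke explicitly; it additionally requires a mild regularity hypothesis (standard Borel spaces, or existence of regular conditional probabilities) so that $y\mapsto P(X\in A\mid Y=y,E\in S)$ admits a bona fide measurable version and the conditional law $\mu_Y(\cdot\mid\omega)$ is well-defined. Once those are in hand, the remaining manipulations — substituting the characterization, identifying the common conditional measure, and taking a quotient — are routine.
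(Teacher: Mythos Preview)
Your proposal and the paper take essentially the same route: define the conditional kernel as the quotient $K_S(\omega, A\cap\{Y\in B\})/K_S(\omega,\{Y\in B\})$ and then use the anti-causal structure to show the result depends on $\omega$ only through its $Y$-component. The paper is slightly more direct: instead of passing through the integral form of Theorem~\ref{thm:anti_causal}, it invokes the pointwise identity $K_S(\omega,A)=P(X\in A\mid Y=y,\,E\in S)$ with $y=\pi_Y(\omega)$ (derived inside the proof of Theorem~\ref{thm:anti_causal} and recorded in Remark~\ref{rem:kernel_char}(3)), computes the ratio explicitly, and then applies $X\perp\!\!\!\perp E\mid Y$ to obtain $P(X\in A\mid Y=y)$. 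Your detour through a conditional integrating measure $\mu_Y(\cdot\mid\omega)$ is not what Theorem~\ref{thm:anti_causal} actually states --- the displayed formula there uses the unconditional marginal $d\mu_Y(y)$ --- so to make your first step rigorous you would end up falling back on the same pointwise characterization anyway; once that is in hand, your argument and the paper's coincide.
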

This independence property reveals that conditional kernels depend only on the label $Y$, not on environment-specific information in $\omega$.
In Appendix \ref{app:cacep}, we prove that causal events show kernel values varying with $\omega$, while anti-causal events maintain invariance under specific subset removals from the conditioning set.

We now characterize how interventions modify causal kernels, enabling unified treatment of both perfect and imperfect interventions.

\begin{restatable}[Interventional Kernel]{theorem}{thmick}
\label{thm:ick}
Let $(\Omega, \mathscr{H}, \mathbb{P}, \mathbb{K})$ be a product causal space. For any subset $S \in \mathscr{P}(T)$ and intervention $\mathbb{Q}: \mathscr{H} \times \Omega \rightarrow [0,1]$, there exists a unique interventional kernel:

\vspace{-4mm}
\begin{equation}
\label{eqn:intkernel}
K_S^{do(\mathcal{X}, \mathbb{Q})}(\omega, A) = \int_{\Omega} K_S(\omega, d\omega') \mathbb{Q}(A|\omega')
\end{equation}
provided that the integral is a Lebesgue integral w.r.t. the measure induced by $K_S(\omega, \cdot)$ on $(\Omega, \mathscr{H})$. In addition, $K_S(\omega, \cdot)$ is $\sigma$-finite for each $\omega \in \Omega$, and $\mathbb{Q}(A|\cdot)$ is $\mathscr{H}$-measurable for each $A \in \mathscr{H}$.
\vspace{-2mm}
\end{restatable}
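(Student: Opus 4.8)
The plan is to read the claimed formula as the \emph{composition} of the causal kernel $K_S$ with the intervention $\mathbb{Q}$, and then to verify that the resulting object $K_S^{do(\mathcal{X},\mathbb{Q})}$ satisfies the two defining properties of a causal kernel (Definition~\ref{def:causal_kernel}); uniqueness will be immediate because Eqn.~\eqref{eqn:intkernel} specifies the object pointwise on $\Omega\times\mathscr{H}$, so any two kernels obeying it coincide. This is the standard Chapman--Kolmogorov-style argument that the composition of two (sub-)Markov kernels is again a kernel, adapted to the sub-$\sigma$-algebra bookkeeping built into our causal kernels.

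First I would check that the integral is well posed. For fixed $\omega$, $K_S(\omega,\cdot)$ is a probability measure on $(\Omega,\mathscr{H})$ (hence finite, hence $\sigma$-finite), and $\omega'\mapsto\mathbb{Q}(A\,|\,\omega')$ is $\mathscr{H}$-measurable with values in $[0,1]$; therefore the Lebesgue integral $\int_\Omega K_S(\omega,d\omega')\,\mathbb{Q}(A\,|\,\omega')$ exists and lies in $[0,1]$, so $K_S^{do(\mathcal{X},\mathbb{Q})}\colon\Omega\times\mathscr{H}\to[0,1]$ is a well-defined map. Next, for fixed $\omega$ I would show $A\mapsto K_S^{do(\mathcal{X},\mathbb{Q})}(\omega,A)$ is a probability measure: non-negativity is clear; normalization follows from $K_S^{do(\mathcal{X},\mathbb{Q})}(\omega,\Omega)=\int_\Omega K_S(\omega,d\omega')\,\mathbb{Q}(\Omega\,|\,\omega')=\int_\Omega K_S(\omega,d\omega')=1$ since both $\mathbb{Q}(\cdot\,|\,\omega')$ and $K_S(\omega,\cdot)$ are probability measures; and for countable additivity, given pairwise disjoint $\{A_n\}\subset\mathscr{H}$ we have $\mathbb{Q}\big(\bigcup_n A_n\,\big|\,\omega'\big)=\sum_n\mathbb{Q}(A_n\,|\,\omega')$ pointwise in $\omega'$, so, the partial sums increasing to the limit, the monotone convergence theorem permits exchanging the series with the integral against $K_S(\omega,\cdot)$, yielding $\sigma$-additivity.

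The step I expect to be the main obstacle is the measurability of $\omega\mapsto K_S^{do(\mathcal{X},\mathbb{Q})}(\omega,A)$ with respect to the sub-$\sigma$-algebra $\mathscr{H}_S$ (not merely $\mathscr{H}$). I would handle this by a functional monotone-class argument on the integrand: for an indicator $g=\mathbf{1}_B$, $B\in\mathscr{H}$, we have $\omega\mapsto\int_\Omega K_S(\omega,d\omega')\,\mathbf{1}_B(\omega')=K_S(\omega,B)$, which is $\mathscr{H}_S$-measurable by Definition~\ref{def:causal_kernel}; by linearity this extends to non-negative $\mathscr{H}$-measurable simple functions, and by monotone convergence to every bounded non-negative $\mathscr{H}$-measurable $g$, in particular to $g=\mathbb{Q}(A\,|\,\cdot)$. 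The delicate points are (i) keeping the approximating functions $\mathscr{H}_S$-measurable throughout, so the limit stays $\mathscr{H}_S$-measurable, and (ii) invoking the $\sigma$-finiteness of $K_S(\omega,\cdot)$ to license the monotone-convergence exchanges uniformly in $\omega$ (automatic here since each $K_S(\omega,\cdot)$ is a probability measure, but worth recording for robustness to unnormalized kernels).

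Finally, for uniqueness, any kernel $\widetilde K$ satisfying Eqn.~\eqref{eqn:intkernel} equals the fixed number $\int_\Omega K_S(\omega,d\omega')\,\mathbb{Q}(A\,|\,\omega')$ at each $(\omega,A)$, hence $\widetilde K=K_S^{do(\mathcal{X},\mathbb{Q})}$ identically. As a consistency check I would note that in the perfect-intervention case $\mathbb{Q}(A\,|\,\omega')=\mathbb{Q}(A)$ the formula collapses to $K_S^{do(\mathcal{X},\mathbb{Q})}(\omega,A)=\mathbb{Q}(A)$, recovering the intended $do$-semantics, whereas a genuinely $\omega'$-dependent $\mathbb{Q}$ yields the soft-intervention case; this confirms that the single construction unifies both regimes as claimed.
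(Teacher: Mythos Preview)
Your proposal is correct and follows the same route as the paper: define $K_S^{do(\mathcal{X},\mathbb{Q})}$ by the integral and verify it is well-defined, then check the hard-intervention special case. In fact your argument is strictly more complete than the paper's own proof, which only records that the integral exists and lies in $[0,1]$ and then computes the perfect-intervention case; the paper does not explicitly verify countable additivity in $A$, $\mathscr{H}_S$-measurability in $\omega$, or uniqueness, all of which you supply via the monotone convergence and monotone-class arguments.
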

\textit{We emphasize that this construction encapsulates both intervention types: hard interventions where $\mathbb{Q}(A|\omega') = \mathbb{Q}(A)$ is constant across $\omega'$, and soft interventions where $\mathbb{Q}(A|\omega')$ varies with $\omega'$.}
Soundness and causal explanation  of interventional kernels are discussed in Appendix \ref{app:gkp}. The distinctive property of anti-causal structures is that interventions on $X$ do not affect $Y$, while interventions on $Y$ change the distribution of $X$ (asymmetric response to interventions).

\begin{restatable}[Interventional Kernel Invariance]{corollary}{coriic}
\label{cor:iic}
In anti-causal structure, interventional kernels satisfy the following invariance criteria:

\vspace{-1mm}
1. $K_S^{do(X)}(\omega, \{Y \in B\}) = K_S(\omega, \{Y \in B\})$ for all measurable sets $B \subseteq \mathcal{Y}$, meaning intervening on $X$ does not change the distribution of $Y$.

\vspace{-1mm}
2. $K_S^{do(Y)}(\omega, \{X \in A\}) \neq K_S(\omega, \{X \in A\})$ for some measurable sets $A \subseteq \mathcal{X}$, meaning intervening on $Y$ changes the distribution of $X$, which is characteristic of an anti-causal relationship.
\vspace{-4mm}
\end{restatable}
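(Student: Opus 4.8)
The plan is to prove the two clauses separately, in each case reducing to the interventional-kernel identity of Theorem~\ref{thm:ick} and then reading off the consequence from the anti-causal characterization of Theorem~\ref{thm:anti_causal}. For clause~1, I would apply Theorem~\ref{thm:ick} to a $do(X)$ intervention with interventional kernel $\mathbb{Q}$, obtaining
\[
K_S^{do(X)}(\omega, \{Y \in B\}) \;=\; \int_{\Omega} K_S(\omega, d\omega')\,\mathbb{Q}(\{Y \in B\}\mid \omega').
\]
The structural input is that an intervention on $X$ re-randomizes only the $X$-indexed components of the product sample space $\Omega = \times_{t\in T}E_t$ and acts as the identity on the others; hence for the $Y$-coordinate event $\{Y\in B\}\in\mathscr{H}_{\mathcal Y}$ one has $\mathbb{Q}(\{Y\in B\}\mid\omega') = \mathbbm{1}_{\{Y\in B\}}(\omega')$, and this holds for both a hard intervention ($\mathbb{Q}(A\mid\omega')=\mathbb{Q}(A)$) and a soft one (Definition~\ref{def:intervention}), since in the structure $Y\to X\leftarrow E$ the regenerated $X$ never feeds back into $Y$. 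Substituting this and using $\int_{\Omega}\mathbbm{1}_{C}\,dK_S(\omega,\cdot)=K_S(\omega,C)$ collapses the right-hand side to $K_S(\omega,\{Y\in B\})$, which is clause~1; equivalently, $Y$ is not a descendant of $X$, so the first defining condition of causal independence, $P(Y\mid do(X=x),Z)=P(Y\mid Z)$ (Definition~\ref{def:causal_independence}), holds, and Corollary~\ref{cor:kernel_property} shows the common value depends only on the $Y$-component of $\omega$, so the identity is genuinely pointwise in $\omega$.

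For clause~2, I would invoke Theorem~\ref{thm:ick} with a $do(Y)$ intervention and combine it with Theorem~\ref{thm:anti_causal}: fixing $Y$ at a value $y_0$ replaces the marginal $\mu_Y$ in $K_S(\omega,A)=\int_{\mathcal Y}P(X\in A\mid Y=y,E\in S)\,d\mu_Y(y)$ by the Dirac mass $\delta_{y_0}$, so that $K_S^{do(Y)}(\omega,A)=P(X\in A\mid Y=y_0,E\in S)$. It then remains to exhibit one set $A\in\mathscr{H}_{\mathcal X}$ that separates this value from $K_S(\omega,A)$. This is where the anti-causal edge $Y\to X$ is used essentially: were $y\mapsto P(X\in A\mid Y=y,E\in S)$ $\mu_Y$-a.e.\ constant for every $A$, the mechanism $X=f(Y,\epsilon)$ would be independent of $Y$ and there would be nothing anti-causal to learn, contradicting the standing assumption. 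Choosing $y_0$ in the support of $\mu_Y$ at which this map differs from its $\mu_Y$-mean — possible because a nonconstant measurable function cannot agree with its own average everywhere — yields a set $A$ with $K_S^{do(Y)}(\omega,\{X\in A\})\neq K_S(\omega,\{X\in A\})$, i.e.\ the asymmetric response asserted in clause~2.

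The step I expect to be the main obstacle is the rigorous justification, inside clause~1, that a $do(X)$ intervention fixes the $Y$-coordinate, i.e.\ that $\mathbb{Q}(\{Y\in B\}\mid\omega')=\mathbbm{1}_{\{Y\in B\}}(\omega')$ follows from Definition~\ref{def:intervention} together with the product structure of $\Omega$; this requires pinning down that the intervention is supported on the $X$-indexed part of the index set $T$ and invoking the measurability and $\sigma$-finiteness hypotheses of Theorem~\ref{thm:ick} to move the indicator through the integral. Clause~2 is comparatively routine once the ``$do(Y)$ replaces $\mu_Y$ by a point mass'' reduction is established; the only care needed there is to keep $y_0$ inside the support of $\mu_Y$ so that $P(X\in\cdot\mid Y=y_0,E\in S)$ is well defined, and to note that nontriviality of $Y\to X$ is precisely the hypothesis that makes the existential claim true rather than vacuous.
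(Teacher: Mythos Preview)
Your proposal is correct but takes a genuinely different route from the paper. The paper argues both clauses at the level of classical do-calculus and graph surgery on $Y\to X\leftarrow E$, computing $P(Y\mid do(X))=P(Y)$ and $P(X\mid do(Y=y),E\in S)$ directly and only at the end translating back into kernel notation. You instead stay inside the measure-theoretic kernel machinery, invoking the integral identity of Theorem~\ref{thm:ick} for clause~1 and the anti-causal characterization of Theorem~\ref{thm:anti_causal} for clause~2.

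For clause~1 the paper's route is shorter: once one accepts that $Y$ is a non-descendant of $X$, $P(Y\mid do(X))=P(Y)$ is immediate and the kernel identity follows. Your route requires the structural claim $\mathbb{Q}(\{Y\in B\}\mid\omega')=\mathbbm{1}_{\{Y\in B\}}(\omega')$, which you rightly flag as the delicate point; note that this is in mild tension with the paper's literal Definition~\ref{def:intervention} of a hard intervention (where $\mathbb{Q}(A\mid\omega')$ is constant in $\omega'$ for \emph{all} $A$), so to make your argument fully rigorous you would need to read ``hard intervention on $X$'' as fixing only the $X$-coordinate while acting as the identity on the $Y$- and $E$-fibres of the product $\Omega$. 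The paper sidesteps this by never opening up $\mathbb{Q}$ for the $do(X)$ case.

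For clause~2 your argument is the cleaner of the two. The paper asserts that $do(Y)$ ``breaks the causal link $Y\to X$'' and computes $P(X\in A\mid do(Y=y),E=e)=P(X\in A\mid E=e)$, which is the wrong surgery (intervening on $Y$ severs arrows \emph{into} $Y$, not out of it); it nonetheless reaches the existential conclusion by falling back on non-degeneracy of $P(X\mid Y,E)$ in $Y$. Your reduction---$do(Y=y_0)$ replaces $\mu_Y$ by $\delta_{y_0}$ in the Theorem~\ref{thm:anti_causal} integral, and a nonconstant integrand cannot equal its own mean everywhere---is both correct and more transparent about where the anti-causal hypothesis enters.
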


Building on the kernel framework, we now develop our approach to learning low-level representations. We adopt the \emph{causal dynamics} perspective~\cite{ahuja2023interventional, yao2022learning}, which identifies latent causal relationships from observed data under distribution shifts—precisely the setting in anti-causal learning across environments.
Our low-level representation mapping 
$\phi_L$ implements causal dynamics by learning how labels $Y$ generate observations $X$ while preserving environment-specific information. Unlike traditional approaches that immediately pursue invariance, $\phi_L$ intentionally captures both the causal pathway ($Y \rightarrow X$) and environmental influences ($E \rightarrow X$), providing rich features for subsequent abstraction by $\phi_H$.
We formally describe  causal dynamic below:

\begin{restatable}
[Causal Dynamic and its Kernels]{theorem}{thmcd}
\label{thm:causal_dynamics}
Given environments $\mathcal{E}$, a causal dynamic $\mathcal{Z}_L = \langle \mathcal{X}, \mathbb{Q}, \mathbb{K}_L\rangle$ can be constructed under the following conditions:

\vspace{-2mm}
    1. The product causal space $(\Omega, \mathscr{H}, \mathbb{P}, \mathbb{K})$ is complete and separable.

\vspace{-2mm}
    2. The empirical measure $\mathbb{Q}_n$ converges to the true measure $\mathbb{Q}$, i.e., $\sup_{A \in \mathscr{H}} |\mathbb{Q}_n(A) - \mathbb{Q}(A)| \xrightarrow{a.s.} 0$.

    \vspace{-2mm}
    3. The causal kernel $K_S^{\mathcal{Z}_{L}}(\omega, A) = \int_{\Omega'} K_S(\omega', A) \, d\mathbb{Q}(\omega')$ exists and is well-defined for all $S \subseteq T$.

 \vspace{-1mm}
Further, the causal dynamic kernels are then given by: 
$\mathbb{K}_L = \{K_S^{\mathcal{Z}_{L}}(\omega, A) : S \in \mathscr{P}(T), A \in \mathscr{H}\}$.
\vspace{-2mm}
\end{restatable}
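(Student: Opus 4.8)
The plan is to show that the tuple $\mathcal{Z}_L = \langle \mathcal{X}, \mathbb{Q}, \mathbb{K}_L\rangle$ assembled from conditions 1--3 is a bona fide causal dynamic, i.e., that each member of $\mathbb{K}_L$ satisfies the causal-kernel axioms of Definition~\ref{def:causal_kernel}, that the family $\mathbb{K}_L$ is consistent across the index subsets $S \in \mathscr{P}(T)$ (so that Definition~\ref{def:causal_space}/Definition~\ref{def:product_space} are honoured), and that the construction is stable under empirical approximation, which is what justifies calling it a \emph{dynamic}. Concretely I would carry out four steps: (i) use completeness and separability to guarantee that the disintegrations and composed integrals below are well posed; (ii) verify the two kernel axioms for each $K_S^{\mathcal{Z}_L}$; (iii) verify compatibility of $\mathbb{K}_L$ along $S \subseteq S'$; (iv) use the Glivenko--Cantelli-type hypothesis of condition 2 to show the empirical kernels converge to $\mathbb{K}_L$.

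\textbf{Steps (i)--(ii): existence and kernel axioms.} Since $(\Omega, \mathscr{H}, \mathbb{P}, \mathbb{K})$ is complete and separable (condition 1), $\Omega$ is Polish under $\mathscr{H}$ and the disintegration theorem applies, so the restriction of $K_S$ along $\mathscr{H}_S$ and its composition with $\mathbb{Q}$ are well defined; condition 3 then supplies exactly the integrability needed, because $K_S(\omega',\cdot)$ is $\sigma$-finite (inherited from Theorem~\ref{thm:ick}), $K_S(\cdot, A)$ is $\mathscr{H}$-measurable with $0\le K_S(\cdot,A)\le 1$, and $\mathbb{Q}(A|\cdot)$ is $\mathscr{H}$-measurable. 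For fixed $\omega$, $A\mapsto K_S^{\mathcal{Z}_L}(\omega,A)$ is non-negative, equals $1$ on $A=\Omega$ since $\int K_S(\omega',\Omega)\,d\mathbb{Q}(\omega') = \int 1\,d\mathbb{Q} = 1$, and is countably additive by the monotone convergence theorem applied inside the integral; hence it is a probability measure. For fixed $A$, $\mathscr{H}_S$-measurability of $\omega\mapsto K_S^{\mathcal{Z}_L}(\omega,A)$ follows by the standard machine: it holds when $A$ is an indicator (reducing to $\mathscr{H}_S$-measurability of $K_S(\cdot,A)$, with the soft-intervention dependence on $\omega$ routed through the jointly measurable map $\mathbb{Q}$ as in Definition~\ref{def:intervention}), extends to simple functions by linearity, and to general $A$ by monotone convergence.

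\textbf{Steps (iii)--(iv): consistency and convergence.} Compatibility of $\mathbb{K}_L$ for $S \subseteq S' \in \mathscr{P}(T)$ is inherited from the consistency of $\mathbb{K}$ on the product causal space, because integrating against the fixed $\mathbb{Q}$ commutes with the coordinate projections that generate the $\mathscr{H}_S$ (Definition~\ref{def:sub_sigma}); thus $\mathbb{K}_L = \{K_S^{\mathcal{Z}_L}(\omega,A): S\in\mathscr{P}(T),\, A\in\mathscr{H}\}$ is a consistent family of causal kernels and $\mathcal{Z}_L$ is a valid causal dynamic. For the dynamic interpretation, let $K_S^{\mathcal{Z}_L,n}(\omega,A) = \int K_S(\omega',A)\,d\mathbb{Q}_n(\omega')$. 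Writing $\int f\,d\mathbb{Q}_n - \int f\,d\mathbb{Q} = \int_0^1 \big(\mathbb{Q}_n(f>t)-\mathbb{Q}(f>t)\big)\,dt$ for $f = K_S(\cdot,A) \in [0,1]$ gives $\big|K_S^{\mathcal{Z}_L,n}(\omega,A) - K_S^{\mathcal{Z}_L}(\omega,A)\big| \le \sup_{B\in\mathscr{H}}\big|\mathbb{Q}_n(B)-\mathbb{Q}(B)\big|$, which by condition 2 tends to $0$ almost surely, uniformly over $A$ and $S$.

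\textbf{Main obstacle.} I expect the delicate point to be Step (ii): establishing joint $\mathscr{H}\otimes\mathscr{H}$-measurability of the integrand (for soft interventions, $K_S(\omega',A)$ together with $\mathbb{Q}(A\mid\omega')$) so that Fubini--Tonelli and the monotone-class argument are legitimate, and confirming that integrating out $\omega'$ does not spoil $\mathscr{H}_S$-measurability in $\omega$ — it does not, since the $\omega$-dependence enters only through the $\mathscr{H}_S$-measurable slot of the base kernel (respectively through the interventional conditioning, which is assumed measurable). Carrying the argument with $\sigma$-finiteness rather than finiteness of $K_S(\omega',\cdot)$, and checking the interchange of limits in the countable-additivity and convergence steps, are secondary but necessary care points.
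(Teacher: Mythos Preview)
Your overall strategy matches the paper's: verify the kernel axioms for each $K_S^{\mathcal{Z}_L}$ (probability-measure property in $A$, measurability in $\omega$), and then invoke the Glivenko--Cantelli hypothesis for the empirical approximation. The probability-measure verification and the convergence argument are essentially the same as the paper's, and your layer-cake bound in Step~(iv) is in fact cleaner than what the paper writes.

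However, you have over-complicated Step~(ii) and your ``main obstacle'' is a non-issue. Look again at the definition in condition~3: $K_S^{\mathcal{Z}_L}(\omega,A)=\int_{\Omega'}K_S(\omega',A)\,d\mathbb{Q}(\omega')$. The variable $\omega$ does not appear anywhere on the right-hand side, so $K_S^{\mathcal{Z}_L}(\omega,A)$ is constant in $\omega$. The paper's proof exploits exactly this: $\mathscr{H}_S$-measurability is immediate because constants are measurable, and the total-variation equicontinuity is trivially zero. There is no need for a monotone-class argument, no joint $\mathscr{H}\otimes\mathscr{H}$-measurability to check, and no ``$\omega$-dependence entering through the $\mathscr{H}_S$-measurable slot.'' Relatedly, you are conflating the measure $\mathbb{Q}$ appearing in conditions~2--3 with the intervention map $\mathbb{Q}(\cdot\mid\cdot)$ from Definition~\ref{def:intervention} and Theorem~\ref{thm:ick}; in this theorem $\mathbb{Q}$ is simply a probability measure on $(\Omega,\mathscr{H})$, not a soft intervention, so the phrases ``$\mathbb{Q}(A\mid\cdot)$ is $\mathscr{H}$-measurable'' and ``soft-intervention dependence on $\omega$'' do not apply here. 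Once you drop that confusion, your Step~(ii) collapses to a one-line check and your Step~(iii) consistency claim (which the paper does not treat) also becomes trivial for the same reason.
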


This theorem establishes the mathematical foundation for low-level representation learning. The conditions ensure: (1) the underlying probability spaces are well-behaved, (2) finite sample approximations converge to the true distributions, and (3) the integration over empirical data produces valid kernels. The resulting causal dynamic kernels $\mathbb{K}_L$ capture how causal relationships manifest across all possible environment combinations. In addition, the integration over empirical distribution $\mathbb{Q}$ enables unified modeling of both perfect and imperfect interventions. Perfect interventions correspond to point masses in $\mathbb{Q}$, while imperfect interventions use continuous distributions, providing flexibility for real-world scenarios where interventions are rarely perfect, inspired from \cite{ahuja2023interventional}. 

We now define the low-level representation mapping $\phi_L: \mathcal{X} \rightarrow \mathcal{Z}_L$ with $\mathcal{Z}_L$ established in Thm.~\ref{thm:causal_dynamics}. We denote $\{\phi_L(\mathcal{X}(\omega_j))\}$ as the collection of low-level representations for a set of samples. 

While causal dynamics operate on the raw input space to capture anti-causal relationships, {\bf causal abstraction} further distills these representations into more abstract invariants. 
Previously, abstraction referred to the process of mapping complex, detailed representations to simpler ones that preserve only the relevant information~\cite{geiger2021causal,beckers2019abstracting}. In our framework, causal abstraction specifically integrates over the domain of low-level representations to form high-level kernels that capture environment-invariant  relationships. This integration serves as an information bottleneck, filtering out environment-specific features while retaining label-relevant causal feature (also demonstrated by our empirical results).

\begin{restatable}[Causal Abstraction and its Kernel]{theorem}{thmca}
\label{thm:causal_abstraction}
Let $\mathcal{X}$ be the input space and $\mathscr{H}_{\mathcal{X}}$ be its $\sigma$-algebra.  Assume a measure $\mu$ on the domain of low-level representations $\mathcal{D}_{\mathcal{Z}_L}$. Then, the high-level representation $\mathcal{Z}_H = \langle \mathbf{V}_H, \mathbb{K}_H \rangle$ can be constructed with kernel:

\small
\vspace{-4mm}
\begin{equation}
K_S^{\mathcal{Z}_{H}}(\omega, A) = \int_{\mathcal{D}_{\mathcal{Z}_L}} K_S^{\mathcal{Z}_{L}}(\omega, A) \, d\mu(z)
\end{equation}

The set of high-level causal kernels is then given by: 
$\mathbb{K}_H = \{K_S^{\mathcal{Z}_{H}}(\omega, A): S \in \mathscr{P}(T), A \in \mathscr{H}_{\mathcal{X}}\}$. 

\vspace{-2mm}
\end{restatable}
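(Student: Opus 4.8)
The plan is to establish the theorem by verifying, in order, that (i) the proposed object $K_S^{\mathcal{Z}_H}(\omega,A)$ is well-defined as a Lebesgue integral, (ii) it is a genuine causal kernel in the sense of Definition~\ref{def:causal_kernel}, and (iii) the indexed family $\mathbb{K}_H$ inherits the consistency structure of a causal abstraction as used in Thm.~\ref{thm:causal_dynamics}. First I would fix $S \in \mathscr{P}(T)$, $A \in \mathscr{H}_{\mathcal{X}}$, and $\omega \in \Omega$, and observe that by Thm.~\ref{thm:causal_dynamics} the map $z \mapsto K_S^{\mathcal{Z}_L}(\omega, A)$ is (for each fixed $\omega, A$) a bounded measurable function of the low-level representation, taking values in $[0,1]$; since $\mu$ is a (finite, or $\sigma$-finite after normalization) measure on $\mathcal{D}_{\mathcal{Z}_L}$, the integral $\int_{\mathcal{D}_{\mathcal{Z}_L}} K_S^{\mathcal{Z}_L}(\omega,A)\, d\mu(z)$ exists and is finite. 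This handles existence and well-definedness.

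Next I would check the two defining properties of a kernel. For the first property, fix $\omega$ and consider the set function $A \mapsto K_S^{\mathcal{Z}_H}(\omega, A)$. Nonnegativity is immediate; $\sigma$-additivity follows from the monotone convergence theorem (or Tonelli), since for a countable disjoint union $A = \bigsqcup_k A_k$ we have $K_S^{\mathcal{Z}_L}(\omega, \bigsqcup_k A_k) = \sum_k K_S^{\mathcal{Z}_L}(\omega, A_k)$ pointwise in $z$ by the kernel property of $K_S^{\mathcal{Z}_L}$, and the integral commutes with the sum of nonnegative terms. Normalization, i.e.\ $K_S^{\mathcal{Z}_H}(\omega, \Omega) = 1$, requires a normalizing convention on $\mu$ — if $\mu$ is not a probability measure on $\mathcal{D}_{\mathcal{Z}_L}$ one rescales by $\mu(\mathcal{D}_{\mathcal{Z}_L})$, which I would state explicitly as part of the construction (this is a minor point but must be flagged to make $K_S^{\mathcal{Z}_H}(\omega,\cdot)$ a probability measure). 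For the second property, fix $A$ and show $\omega \mapsto K_S^{\mathcal{Z}_H}(\omega, A)$ is $\mathscr{H}_S$-measurable: this follows because $\omega \mapsto K_S^{\mathcal{Z}_L}(\omega, A)$ is $\mathscr{H}_S$-measurable for each $z$ (Thm.~\ref{thm:causal_dynamics}, inheriting from Def.~\ref{def:causal_kernel}), and a $\mu$-integral of a family of $\mathscr{H}_S$-measurable functions indexed by $z$ is again $\mathscr{H}_S$-measurable — formally one approximates the integrand by simple functions in $z$ and passes to the limit, invoking that pointwise limits of measurable functions are measurable.

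Finally, for the family $\mathbb{K}_H = \{K_S^{\mathcal{Z}_H}\}_{S}$ I would verify the restriction/marginalization consistency across nested index sets $S \subseteq S'$, which is what makes the collection an \emph{abstraction} rather than an unrelated bundle of kernels: since integration over $z$ is performed identically for every $S$ and commutes with the marginalization operations already established for $\mathbb{K}_L$ in Thm.~\ref{thm:causal_dynamics}, the corresponding identities lift verbatim. I would close by noting that the high-level state space $\mathbf{V}_H$ is the image $\phi_H(\mathcal{D}_{\mathcal{Z}_L})$ equipped with the pushforward $\sigma$-algebra, so that $\mathcal{Z}_H = \langle \mathbf{V}_H, \mathbb{K}_H \rangle$ is well-posed. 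The main obstacle I anticipate is the measurability of $\omega \mapsto \int K_S^{\mathcal{Z}_L}(\omega,A)\,d\mu(z)$ when $\mathcal{D}_{\mathcal{Z}_L}$ is not assumed $\sigma$-compact or the joint map $(\omega,z)\mapsto K_S^{\mathcal{Z}_L}(\omega,A)$ is not a priori jointly measurable — this is exactly where the completeness and separability hypothesis of Thm.~\ref{thm:causal_dynamics} must be invoked (to apply a Fubini/Tonelli-type argument or a measurable selection theorem), and I would make that dependence explicit rather than leaving joint measurability as a silent assumption.
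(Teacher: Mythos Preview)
Your proposal is correct and follows essentially the same approach as the paper: both arguments verify well-definedness of the integral via boundedness and measurability of the integrand, then check the two kernel axioms (probability measure in $A$ via monotone convergence for countable additivity, with the same explicit flag about normalizing $\mu$; $\mathscr{H}_S$-measurability in $\omega$). Your treatment is in fact slightly more careful than the paper's in two places---you explicitly worry about joint measurability of $(\omega,z)\mapsto K_S^{\mathcal{Z}_L}(\omega,A)$ where the paper simply invokes Fubini, and you add a consistency check across nested index sets that the paper omits---but these are refinements of the same skeleton rather than a different route.
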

The high-level representation mapping is defined as $\phi_H: \mathcal{Z}_L \rightarrow \mathcal{Z}_H$ with $\mathcal{Z}_H$ established in Thm.~\ref{thm:causal_abstraction}.   
We denote $\mathbf{V}_H = \{\phi_H (\phi_L(\mathcal{X}(\omega_j)))\}$ as the resulting high-level representations for a set of samples. 

\subsection{Objective Function of ACIA}
\label{sec:objfunc}
\vspace{-2mm}

{ACIA's objective function bases on the theoretical results in Sec.\ref{sec:framework}. Specifically, the kernel independence property (Cor.\ref{cor:kernel_property}) motivates the environment independence regularizer $R_1$, while the intervention invariance criteria (Cor.\ref{cor:iic}) guides the design of the causal structure consistency regularizer $R_2$. The optimization  achieves the causal dynamics construction (Thm.\ref{thm:causal_dynamics}) for $\phi_L$ and causal abstraction (Thm.\ref{thm:causal_abstraction}) for $\phi_H$, ensuring learned representations satisfy the anti-causal structure characterized in Thm.\ref{thm:anti_causal}}.

Let $\mathcal{C}$ be a classifier and $\ell$ be a loss function. Our objective function of ACIA is defined as: 
{
\begin{align}
\label{eqn:objfunc}
& \min_{\mathcal{C},\phi_L,\phi_H} \max_{e_i \in \mathcal{E}} \Big[ \int_{\Omega} \ell((\mathcal{C} \circ \phi_H \circ \phi_L)(\mathcal{X}(\omega)), Y(\omega)) \, d\mathbb{P}_{e_i}(\omega)  + \lambda_1 R_1 + \lambda_2 R_2 \Big]
\end{align}
{
\footnotesize
\vspace{-2mm}
\begin{align}
& R_1 = \sum_{e_i, e_j \in \mathcal{E}, i\neq j} \Big\| \int_{\mathcal{Y}} \int_{\Omega} \phi_H(\phi_L(\mathcal{X}(\omega))) \, d\mathbb{P}_{e_i}(\omega|y) \, d\mu_Y(y)  - \int_{\mathcal{Y}} \int_{\Omega} \phi_H(\phi_L(\mathcal{X}(\omega))) \, d{P}_{e_j}(\omega|y) \, d\mu_Y(y) \Big\|_2 \label{eqn:R2} \\ 
& R_2 = \sum_{e_i \in \mathcal{E}} \Big\| \int_{\mathcal{Y}} y \, d\mathbb{P}_{e_i}(y|\phi_H(\phi_L(\mathcal{X}(\omega))))  - \int_{\mathcal{Y}} y \, dK_{\{e_i\}}^{do(Y)}(\omega, dy) \Big\|_2 \label{eqn:R1}
\end{align}
}

\vspace{-2mm}
\emph{Remark 1:} The minmax formulation in Eqn.\ref{eqn:objfunc} enforces worst-case robustness across environments. This formulation is supported by our out-of-distribution (OOD) generalization bound (Thm.\ref{cor:acogb}). Without it, the learned representations often fail to disentangle environmental factors effectively, as has been validated in prior work on invariant representation learning (IRM\cite{arjovsky2019invariant}, Rex\cite{krueger2021out}, VRex\cite{krueger2021out} ). 

\emph{Remark 2:} Our two regularizers $R_1$ and $R_2$ enforce key invariance properties essential for robust anti-causal representation learning\footnote{$R_1$ and $R_2$ are also inspired by the invariant representation learning methods such as IRM \cite{arjovsky2019invariant}. Their connections are discussed in Appendix \ref{app:IRMConnection}.}:

\vspace{-1mm}
- $R_1$ enforces environment independence of representations. It measures the discrepancy between the expected high-level representations across different environments, conditioned on the label $Y$. Minimizing $R_1$ encourages:
    $\int_{\Omega} \phi_H(\phi_L(\mathcal{X}(\omega))) \, d\mathbb{P}_{e_i}(\omega|y) \approx \int_{\Omega} \phi_H(\phi_L(\mathcal{X}(\omega))) \, d\mathbb{P}_{e_j}(\omega|y)$
for all environment pairs $(e_i, e_j)$ and labels $y$, promoting the invariance $\phi_H(\phi_L(\mathcal{X})) \perp\!\!\!\perp E \mid Y$. 

\vspace{-1mm}
- $R_2$ enforces causal structure consistency. It compares the expected value of $Y$ given the high-level representation $\int_{\mathcal{Y}} y \, d\mathbb{P}_{e_i}(y|\phi_H(\mathcal{X}(\omega)))$ and that of $Y$ under intervention: $\int_{\mathcal{Y}} y \, dK_{\{e_i\}}^{do(Y)}(\omega, dy)$. Minimizing $R_2$ encourages the representation to be aligned with the true causal structure.

\vspace{-1mm}
{\bf Theoretical Performance of ACIA:} We also analyze the theoretical performance of ACIA, e.g., 
convergence property (Thm.\ref{thm:optimality}), generalization bound in terms of sample complexity (Thm.\ref{cor:acogb}) and interventional kernels (Thm.\ref{cor:oodgc}) in the anti-causal setting, and environmental robustness (Thm.\ref{cor:env_robustness}) (which shows bounded distributional shifts between training and testing environments, providing robust anti-causal representations).
 {\bf All proofs are deferred to Appendix \ref{app:ACIAtheo}.}

\subsection{ACIA Algorithm Details}
\vspace{-2mm}

The complete ACIA algorithm include three components (see details of Alg.\ref{alg:causal_dynamics}-Alg.\ref{alg:optimization}). 

 \vspace{-1mm}
i) Alg.\ref{alg:causal_dynamics} constructs the low-level representation $\phi_L$ by building causal spaces for each environment $e_i \in \mathcal{E}$ and their product spaces, computing causal kernels $K_S$, and ultimately outputting the low-level causal dynamics $\mathcal{Z}_L = \langle \mathcal{X}, \mathbb{Q}, \mathbb{K}_L\rangle$ as established in Thm.\ref{thm:causal_dynamics}.

 \vspace{-1mm}
ii) Alg.\ref{alg:causal_abstraction} takes the set of low-level representations $\phi_L = \{\mathcal{Z}_{L_k}\}_{k=1}^{K}$ outputted by Alg.\ref{alg:causal_dynamics} and constructs the high-level abstraction $\mathcal{Z}_H = \langle \mathbf{V}_H, \mathbb{K}_H \rangle$ by integrating kernels across the low-level representation domain $\mathcal{D}_{\mathcal{Z}_L}$ as derived in Thm.\ref{thm:causal_abstraction}.

 \vspace{-1mm}
 iii) Alg.\ref{alg:optimization} integrates both algorithms by taking the outputs $\phi_L$ and $\phi_H$ as inputs, implementing the core optimization procedure defined in 
Eqn.\ref{eqn:objfunc}. Particularly, it jointly optimizes both representations while enforcing environment independence through $R_1$ and causal structure consistency through $R_2$. 

\vspace{-1mm}
{\bf Practical Implementation:} $R_1$ is estimated via conditional distribution comparisons across environments, while $R_2$ is approximated through alignment between predicted and interventional distributions. The verification of OOD guarantees (Thms.~\ref{cor:acogb} and~\ref{cor:oodgc}) involves checking: (i) invariance of $\phi_L$ across environments conditioned on $Y$, (ii) environment independence $\phi_H(\phi_L(X)) \perp E | Y$, and (iii) bounded distributional shifts between training and test environments.

\vspace{-1mm}
\noindent {\bf Computational Complexity:}
The objective function in Eqn.\ref{eqn:objfunc} 
can be iteratively solved 
using  stochastic gradient with time complexity of $O\left(\frac{nd}{\epsilon^2}\log\left(\frac{1}{\delta}\right)\right)$ and space complexity of $O(|\mathcal{E}|d + d^2)$, where $\epsilon$ is  desired precision, $\delta$ is failure probability, and $d$ is dimension of the representation space.
\section{Experiments}
\vspace{-2mm}
\subsection{Experimental Setup}
\vspace{-2mm}

{\bf Datasets and Models:} We test four datasets in anti-causal settings: Colored MNIST (CMNIST), Rotated MNIST (RMNIST), Ball Agent~\cite{brehmer2022weakly}, and  Camelyon17~\cite{Ban+18}. In CMNIST and RMNIST, digit labels cause specific image features: colors and rotations (environment), respectively.
Ball Agent is a physical simulation environment where ball positions (continuous labels) cause pixel observations, with controlled interventions affecting object dynamics; Camelyon17 is a real medical dataset where tumor presence (label) causes tissue patterns in pathology images, with hospital-specific staining protocols creating environmental variations.
These datasets test various aspects of ACIA: discrete vs. continuous labels and perfect vs. imperfect interventions. 
\emph{Details of (building) these datasets are in Appendix \ref{app:datasets}. The model architecture and hyperparameter settings of ACIA are in Appendix \ref{app:hs}}. 

\noindent {\bf Evaluation Metrics:} We use four  metrics to measure  predictive performance and causal properties.
 
    \vspace{-1mm}
    1. \emph{Test Accuracy:} Fraction of test samples correctly predicted by our predictor.
    \vspace{-1mm}
    2. \emph{Environment Independence (EI):} It measures the degree to which high-level representations remain independent of environment-specific information while preserving label-relevant information. 
    Specifically, we compute mutual information between high-level representations and environment labels, conditioned on class labels. At the end, we weight them by class frequency and calculate their summation. Lower values indicate better environment independence. 
   
    \vspace{-1mm}
    3. \emph{Low-level Invariance (LLI or $R_1$):}
    It quantifies stability of low-level representations across environments. We measure the variance of representations across different environments. At the end, we calculate their average across feature dimensions. Lower values indicate greater invariance.

    \vspace{-1mm}
    4. \emph{Intervention Robustness (IR or $R_2$):} It evaluates model robustness under interventions by comparing the difference between observational and interventional distributions. 
    Specifically, we first obtain probability confidence scores for original and intervened samples, and then calculate KL divergence between these distributions. Lower values indicate higher robustness.

\noindent {\bf Baselines:} We compare AICA against 10 baseline methods spanning three main categories: (1) \textit{Robust optimization methods}: GDRO~\cite{sagawa2019distributionally} optimizes the worst-group performance under distribution shifts. (2) \textit{Distribution/Domain-invariant learning}: MMD~\cite{li2018domain} minimizes distributional distances, CORAL~\cite{sun2016deep} aligns feature correlations, DANN~\cite{ganin2016domain} uses adversarial training, IRM~\cite{arjovsky2019invariant} enforces invariant predictors, Rex~\cite{krueger2021out} and VREx~\cite{krueger2021out} use risk extrapolation with different variance penalties. (3) \textit{Causal representation learning methods:} CausalDA~\cite{wang2022causal} incorporates causal structure discovery for invariant representation learning. 
ACTIR~\cite{jiang2022invariant} specifically targets anti-causal settings, and LECI~\cite{gui2024joint} learns environment-wise causal independence through graph decomposition. 

\begin{table*}[!t]
\vspace{-2mm}
\caption{Comparisons with baselines across four datasets and metrics.}
\label{tab:baseline-comparison}
\small
\centering
\addtolength{\tabcolsep}{-5pt}
\begin{tabular}{|l|cccc|cccc|cccc|cccc|}
\hline
\multirow{2}{*}{\textbf{Method}} 
& \multicolumn{4}{c|}{\textbf{CMNIST}} 
& \multicolumn{4}{c|}{\textbf{RMNIST}} 
& \multicolumn{4}{c|}{\textbf{Ball Agent}} 
& \multicolumn{4}{c|}{\textbf{Camelyon17}} \\
& Acc$\uparrow$ & EI$\downarrow$ & LLI$\downarrow$ & IR$\downarrow$ 
& Acc$\uparrow$ & EI$\downarrow$ & LLI$\downarrow$ & IR$\downarrow$ 
& Acc$\uparrow$ & EI$\downarrow$ & LLI$\downarrow$ & IR$\downarrow$ 
& Acc$\uparrow$ & EI$\downarrow$ & LLI$\downarrow$ & IR$\downarrow$ \\
\hline
GDRO\cite{sagawa2019distributionally}  & 92.00 & 1.85 & 0.80 & 0.91 & 63.00 & 16.03 & 4.10 & 1.53 & 66.00 & 1.04 & 0.69 & 0.75 & 58.00 & 1.32 & 0.87 & 0.83 \\
MMD\cite{li2018domain}       & 94.00 & 1.22 & 1.73 & 1.13 & 92.00 & 6.88 & 15.62 & 0.69 & 68.50 & 1.13 & 0.87 & 1.82 & 60.00 & 4.43 & 2.16 & 1.12 \\
CORAL\cite{sun2016deep}     & 89.00 & 1.48 & 2.06 & 1.30 & 91.00 & 4.02 & 9.56 & 0.31 & 70.50 & 1.23 & 1.92 & 1.84 & 41.00 & 1.62 & 2.45 & 1.01 \\
DANN\cite{ganin2016domain}       & 45.00 & 0.03 & 0.86 & 0.20 & 38.50 & 12.82 & 3.85 & 1.47 & 61.00 & 1.35 & 0.96 & 0.89 & 39.00 & 0.68 & 1.40 & 1.95 \\
IRM\cite{arjovsky2019invariant}       & 85.00 & 1.43 & 0.83 & 1.08 & 85.50 & 19.03 & 6.64 & 3.17 & 56.00 & 0.89 & 0.67 & 1.71 & 52.00 & 1.95 & 1.76 & 2.45 \\
Rex\cite{krueger2021out}       & 73.00 & 0.69 & 1.41 & 1.80 & 80.50 & 0.69 & 10.69 & 0.96 & 54.50 & 1.05 & 0.11 & 7.65 & 39.00 & 0.68 & 1.40 & 1.95 \\
VREx\cite{krueger2021out}      & 95.50 & 1.71 & 1.09 & 0.77 & 93.50 & 2.41 & 2.77 & 1.03 & 74.00 & 0.93 & 0.78 & 0.73 & 54.50 & 1.98 & 1.78 & 1.02 \\
ACTIR\cite{jiang2022invariant}     & 78.50 & 0.64 & 0.97 & 1.80 & 72.00 & 0.23 & 18.79 & 0.19 & 69.00 & 0.88 & 0.02 & 0.58 & 60.50 & 0.60 & 0.63 & 0.80 \\
CausalDA\cite{wang2022causal}  & 83.50 & 0.41 & 0.85 & 12.23 & 87.50 & 0.62 & 0.91 & 16.44 & 45.50 & 1.20 & 0.85 & 1.22 & 55.50 & 0.55 & 1.60 & 10.55 \\
LECI\cite{gui2024joint}      & 70.00 & 0.83 & 0.40 & 0.67 & 82.00 & 0.29 & 2.91 & 0.04 & 71.20 & \textbf{0.46} & 0.39 & {0.05} & 65.50 & \textbf{0.23} & 0.50 & 0.45 \\
\textbf{ACIA} & \textbf{99.20} & \textbf{0.00} & \textbf{0.01} & \textbf{0.02} & \textbf{99.10} & \textbf{0.00} & \textbf{0.03} &  \textbf{0.01} & \textbf{99.98} & \textit{0.52} & \textbf{0.03} & \textbf{0.03} & \textbf{84.40} & \textit{0.28} & \textbf{0.42} & \textbf{0.43} \\
\hline
\end{tabular}
\vspace{-4mm}
\end{table*}

\subsection{Experimental Results}
\subsubsection{Results under Perfect Intervention}
\vspace{-2mm}
Table \ref{tab:baseline-comparison} comprehensively presents the comparison results of ACIA with existing baselines. These results validate our 
measure-theoretic framework's ability to capture and exploit anti-causal structures in synthetic and real-world settings. 
In particular, the results highlight several key findings:

\vspace{-1mm}
1. {\bf Our ACIA performs the best and significantly outperforms baselines.} For instance, on CMNIST and RMNIST, ACIA achieves an accuracy of 99.00\%+, perfect environment independence (0.00), almost perfect interventional robustness (0.02 and 0.01) and low-level invariance (0.01 and 0.03), significantly surpassing others baseline. On Ball Agent, our ACIA achieves 99.72\% accuracy, and almost perfect low-level invariance and interventional robustness. 
On the real-world Camelyon17, ACIA achieves the best test accuracy 87.00\% and retains the underlying causal properties. 
 
\vspace{-1mm}
2. {\bf Causal dynamic construction (Theorem \ref{thm:causal_dynamics}) is confirmed} by the low-level invariance in our results. This matches the theoretical expectation of environment-independent feature learning.

\vspace{-1mm}
    3. \textbf{Interventional kernel invariance (Corollary \ref{cor:iic}) is empirically validated} via the intervention robustness score, implying the distinction between observational and interventional distributions.

    \vspace{-1mm}
    4. \textbf{Anti-Causal OOD generalization bound (Theorem \ref{cor:acogb} in Appendix) is substantiated} by the test accuracy improvements over the compared baselines  across all datasets. 

\subsubsection{Results under Imperfect Intervention}
\vspace{-2mm}
Perfect/Hard intervention completely disconnects intervened variables from their causes, while imperfect/soft intervention modifies causal mechanisms and maintain partial original dependencies. This experiment aims to validate the effectiveness of our ACIA against imperfect intervention on the studied datasets, with  
details of constructing imperfect intervention discussed in  Appendix \ref{app:perimper}. 

Table \ref{tab:cmpudit} shows the results. We can see that imperfect intervention achieves similar results on the four datasets and metrics as perfect intervention.  
This supports our theoretical claim that
ACIA can effectively handle both perfect and imperfect interventions
via its interventional kernel formulation.

\begin{table}[!t]
\centering
\footnotesize
\vspace{-4mm}
\caption{ACIA performance under imperfect intervention.}
\addtolength{\tabcolsep}{-4pt}
\vspace{2mm}
\begin{tabular}{lcccccccccccccccc}
\toprule
{\bf Dataset} &
%\multirow{2}{*}{Metric} & 
\multicolumn{4}{c}{CMNIST} &
\multicolumn{4}{c}{RMNIST} &
\multicolumn{4}{c}{Ball Agent} &
\multicolumn{4}{c}{Camelyon17} \\
\cmidrule(lr){2-5} \cmidrule(lr){6-9} \cmidrule(lr){10-13} \cmidrule(lr){14-17}
{\bf Metric} & Acc & EI & LLI & IR & Acc & EI & LLI & IR & Acc & EI & LLI & IR & Acc & EI & LLI & IR \\
\midrule
{\bf Value} & 99.4 & 0.00 & 0.01 & 0.03 &
99.0 & 0.01 & 0.03 & 0.01 &
99.7 & 0.44 & 0.06 & 0.06 &
84.4 & 0.30 & 0.44 & 0.45 \\
\bottomrule
\end{tabular}
\label{tab:cmpudit}
\end{table}

\subsubsection{Visualizing Learnt Representations}
\begin{figure}[!h]
    \centering
    \includegraphics[width=\linewidth]{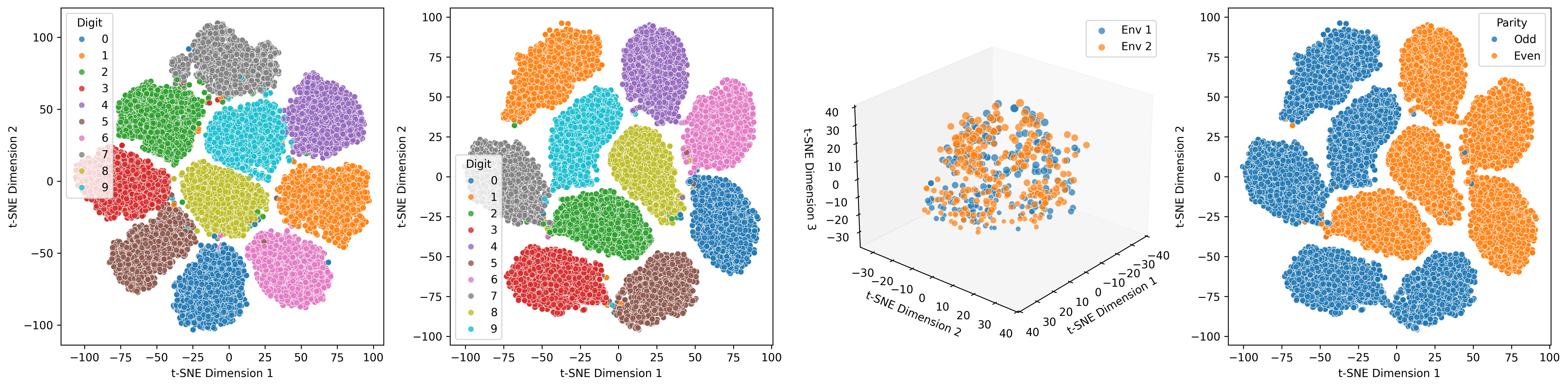}
    \vspace{-4mm}
    \caption{{t-SNE visualization of ACIA representations on CMNIST. From left to right: (1) Low-level representations show initial digit clustering with color influence; (2) High-level representations show improved digit separation; (3) Environment visualization demonstrate removal of environment-specific information; (4) Parity analysis reveals clear separation between even and odd digits.}}
    \label{fig:CMNIST_res}
   \vspace{-4mm}
\end{figure}

{\noindent {\bf Results on CMNIST:} Figure~\ref{fig:CMNIST_res} demonstrates ACIA's ability to learn representations that perfectly capture the anti-causal structure and predict the test data. 
\emph{(1) First panel:} Low-level representations show clear digit-based clustering while retaining certain environment information (note that some colored images are mapped to the digit cluster that they are not belonging to); \emph{2) Second panel:} High-level representations improve digit cluster separation with clearer boundaries; \emph{3) Third panel:}  The environment visualization displays colored images from different environments are mixed, confirming the removal of environment-specific information; and  \emph{4) Fourth panel:} {The parity visualization reveals how ACIA organizes digits based on their mathematical properties---the alternating pattern between even digits (orange) and odd digits (blue) confirms ACIA preserves meaningful numerical relationships while eliminating spurious color correlations. This organization aligns with findings from \cite{kim2019learning} showing that neural networks capture abstract number properties beyond visual features.}
}

\begin{figure}[!h]
 \vspace{-2mm}
    \centering
    \includegraphics[width=\linewidth]{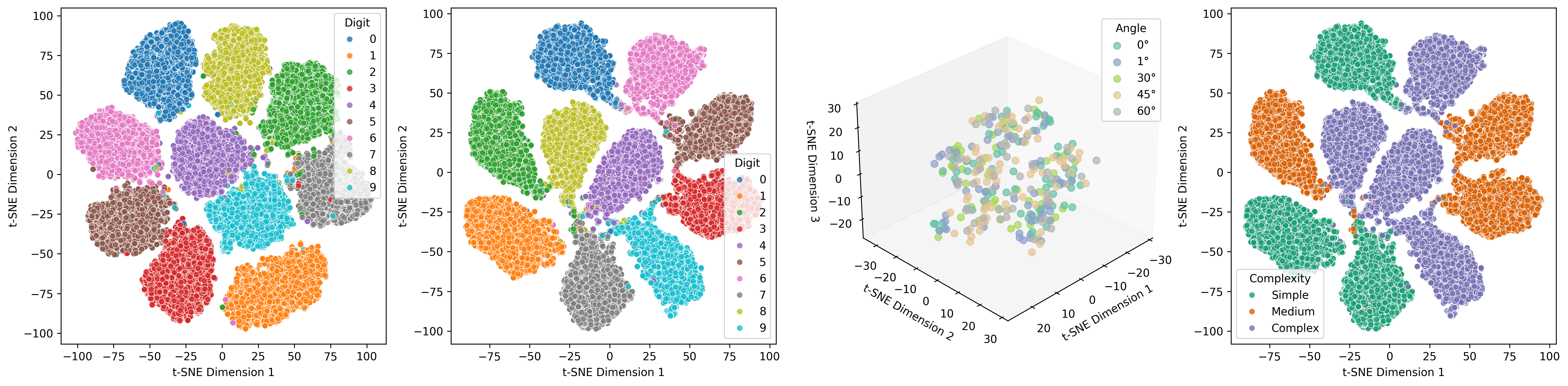}
    \caption{{t-SNE visualization of ACIA representations on RMNIST. From left to right: (1) Low-level representations show digit clustering but with rotation influence; (2) High-level representations with better digit boundaries; (3) Rotation angle visualization shows uniform distribution across the representation space; (4) Digit complexity 
    reveals semantic organization by structural properties.}}
    \label{fig:RMNIST_res}
   \vspace{-2mm}
\end{figure}

\noindent {\bf Results on RMNIST:} See Figure \ref{fig:RMNIST_res}. Similarly,  (1) Low-level representations  show clear digit-based clustering but keep certain rotation information; (2) High-level representations with more distinct boundaries; (3) The rotation angle visualization displays uniform coloring across the entire representation space, confirming successful abstraction of rotation-specific information; and (4) {The digit complexity visualization reveals semantic organization where digits with similar structural properties cluster together. \emph{Simple (0,1,7)}: minimal stroke count (typically 1-2), more rotation-invariant features, and lower topological complexity; \emph{Medium (2,3,5)}: moderate stroke count (typically 2-3), mixed curves/lines, and intermediate visual density; and \emph{Complex (4,6,8,9)}: most stroke count (3+), multiple curves/intersections, and higher topological complexity~\citep{worrall2017harmonic}.} 

 \begin{figure}[!h]
\centering
\vspace{-2mm}
\includegraphics[width=1\linewidth]{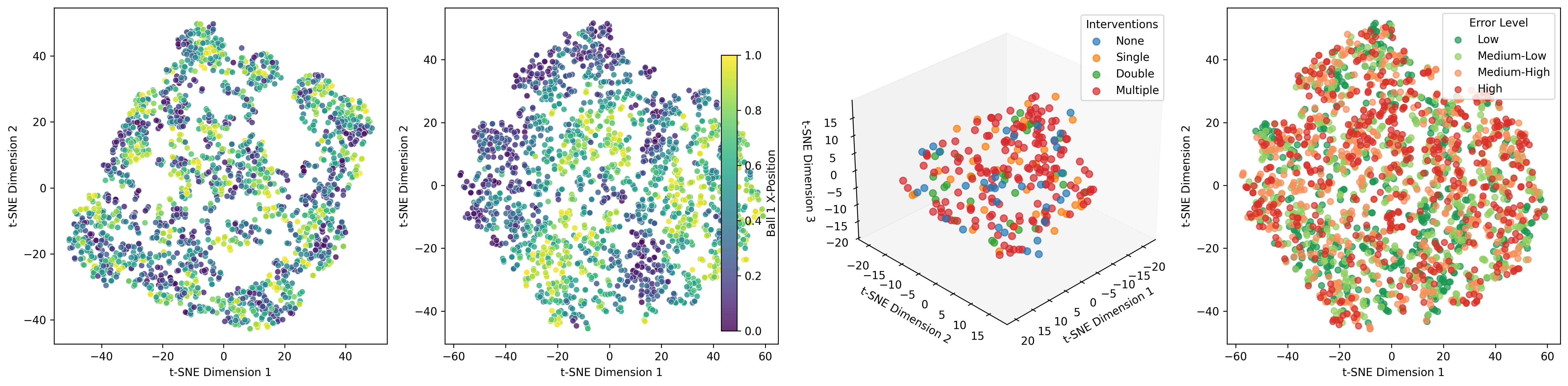}
\vspace{-2mm}
\caption{{t-SNE visualization of ACIA representations on Ball Agent. From left to right: (1) Low-level representations display position-based organization with environmental mixing, (2) High-level representations show more pronounced position clustering; (3) Intervention visualization categorized by intervention patterns, and (4) Prediction error shows areas of high accuracy (green) versus areas requiring improvement (red).}}
\label{fig:ball_res}
\vspace{-6mm}
\end{figure}

{\noindent {\bf Results on Ball Agent:}  See Figure~\ref{fig:ball_res}. (1) Low-level representations display position-based organization with considerable mixing between position values; (2) High-level representations show more pronounced position-based clustering with clearer boundaries, demonstrating improved abstraction of spatial information; (3) The intervention visualization displays a categorical distribution of intervention patterns (None, Single, Double, Multiple; \emph{their details are shown in Appendix \ref{app:ballagent}}), revealing how different intervention types affect the latent space structure; and (4) {The prediction error visualization shows areas of high accuracy (green) versus areas requiring improvement (red), confirming that position-relevant information is preserved while achieving partial invariance to interventions.}}

\begin{figure}[!ht]
\vspace{-2mm}
    \centering
    \includegraphics[width=1\linewidth]{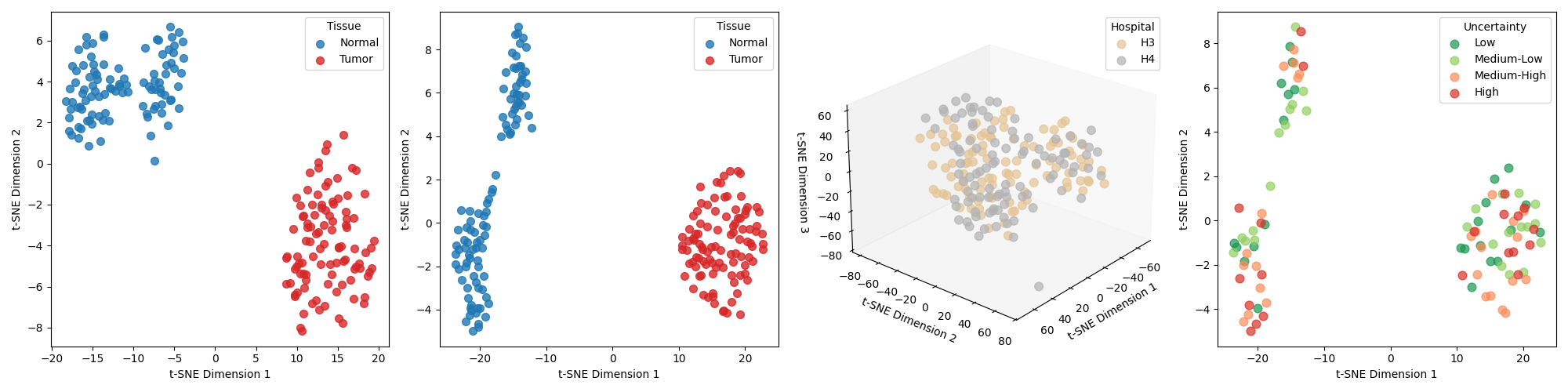}
    \vspace{-3mm}
    \caption{{t-SNE visualization of ACIA representations on Camelyon17. From left to right: (1) Low-level representations show partial tumor/normal tissue separation, (2) High-level representations with improved class boundaries, (3) Hospital visualization demonstrates mixing of environment-specific features, and (4) Uncertainty analysis highlights regions of high confidence (green/yellow) versus regions requiring more evidence (red).} 
    }
    \label{fig:Camelyon17_res} 
    \vspace{-2mm}
\end{figure}

\noindent {\bf Results on Camelyon17:} 
In Figure~\ref{fig:Camelyon17_res}, (1) Low-level representations show  separation between tumor and normal tissue samples, but with certain mixing; (2) High-level representations  demonstrate more pronounced clustering with clearer boundaries between tissue types, particularly visible in the left-right separation; (3) The hospital visualization displays significant mixing between hospital sources despite their different staining protocols, confirming reduction of environment-specific information; and (4) The uncertainty visualization highlights regions where the model maintains high confidence (green/yellow) versus areas requiring more evidence (red). {More details are in Appendix \ref{app:camelyon}}.
\section{Conclusion}
\vspace{-2mm}

We presented ACIA, a measure-theoretic framework for anti-causal representation learning. ACIA provides: (1) a unified interventional kernel formulation that accommodates both perfect and imperfect interventions without requiring explicit causal structure knowledge; (2) a novel causal dynamic that captures anti-causal structure from raw observations, together with a causal abstraction that distills environment-invariant relationships; (3) a principled optimization framework based on a min–max objective with causal regularizers; and (4) provable out-of-distribution generalization guarantees that bound the performance gap between training and unseen environments. Overall, ACIA opens new directions for causal representation learning in settings where traditional assumptions—such as perfect interventions or known causal structures—do not hold.

In future, we plan to generalize ACIA to handle more complex causal structures—such as confounded-descendant or mixed causal-anticausal scenarios \cite{wang2022unified}.

\section*{Acknowledgments}
We thank the anonymous reviewers for their valuable and constructive feedback. 
This work was supported in part by the Cisco Research Award and by the National Science Foundation under Grant Nos. ECCS-2216926, CCF-2331302, CNS-2241713, and CNS-2339686.

\bibliographystyle{plainnat}
\bibliography{ref}

% \clearpage 
% \input{checklist}

\clearpage
\normalsize
\part*{Appendix}
\appendix
\label{sec:appendix}
{
\setlength{\parskip}{-0em}
\startcontents[sections]
\printcontents[sections]{ }{1}{}
}

\begin{table}[!t]
\caption{Summarizing Causal and Non-Causal Invariant Representation Learning Methods}
\label{table:method-comparison}
\centering
\small
\renewcommand{\arraystretch}{1.3}
\resizebox{\linewidth}{!}{%
\begin{tabular}{|c|c|c|c|c|c|c|c|}
\toprule
Method & Anti-causal Structure & SCM Requirements & Imperfect Interventions & Intervention Inference & Nonparametric & High-dim Data & OOD \\
\midrule
\multicolumn{8}{|c|}{Distribution/Domain-invariant Learning} \\
\midrule
(C-)ADA~\cite{long2018conditional} & \xmark & \xmark & \xmark & \xmark & \xmark & \cmark & \xmark \\
Domain adaptation~\cite{tachet2020domain} & \xmark & \xmark & \xmark & \xmark & \xmark & \cmark & \cmark \\
DDAIG~\cite{zhou2020deep} & \xmark & \xmark & \xmark & \xmark & \xmark & \cmark & \cmark \\
L2A-OT~\cite{zhou2020learning} & \xmark & \xmark & \xmark & \xmark & \xmark & \cmark & \cmark \\
ERM~\cite{NEURIPS2021_ecf9902e} & \xmark & \xmark & \xmark & \xmark & \xmark & \cmark & \xmark \\
DOMAINBED~\cite{gulrajani2020search} & \xmark & \xmark & \xmark & \xmark & \xmark & \cmark & \cmark \\
StableNet~\cite{zhang2021deep} & \xmark & \xmark & \xmark & \xmark & \xmark & \cmark & \cmark \\
SagNets~\cite{nam2021reducing} & \xmark & \xmark & \xmark & \xmark & \xmark & \cmark & \cmark \\
SWAD~\cite{cha2021swad} & \xmark & \xmark & \xmark & \xmark & \xmark & \cmark & \xmark \\
FACT~\cite{xu2021fourier} & \xmark & \xmark & \xmark & \xmark & \cmark & \cmark & \cmark \\
Evaluation Protocol~\cite{yu2023rethinking} & \xmark & \xmark & \xmark & \xmark & \xmark & \cmark & \cmark \\
Ratatouille~\cite{rame2023model} & \xmark & \xmark & \xmark & \xmark & \xmark & \cmark & \cmark \\
XRM~\cite{pezeshki2023discovering} & \xmark & \xmark & \xmark & \xmark & \cmark & \cmark & \xmark \\
FeAT~\cite{chen2024understanding} & \xmark & \xmark & \xmark & \xmark & \cmark & \cmark & \cmark \\
AIA~\cite{sui2024unleashing} & \xmark & \xmark & \xmark & \xmark & \cmark & \cmark & \cmark \\
IRM~\cite{arjovsky2019invariant} & \xmark & \xmark & \xmark & \xmark & \xmark & \cmark & \xmark \\
Rex~\cite{krueger2021out} & \xmark & \xmark & \xmark & \xmark & \cmark & \cmark & \cmark \\
CI to Spurious~\cite{veitch2021counterfactual} & \xmark & \xmark & \xmark & \xmark & \cmark & \cmark & \cmark \\
Information Bottleneck~\cite{ahuja2021invariance} & \xmark & \xmark & \xmark & \xmark & \cmark & \cmark & \cmark \\
CausalDA~\cite{wang2022causal}& \cmark & \xmark & \xmark & \xmark & \cmark & \cmark & \cmark \\
Transportable Rep~\cite{jiang2022invariant} & \cmark & \xmark & \xmark & \xmark & \cmark & \cmark & \cmark \\
\midrule
\multicolumn{8}{|c|}{Structure-based Causal Representation Learning} \\
\midrule
DISRL~\cite{wang2022causal} & \xmark & \cmark & \xmark & \cmark & \cmark & \cmark & \cmark \\
Causal Disentanglement~\cite{pmlr-v202-squires23a} & \xmark & \cmark & \xmark & \xmark & \xmark & \cmark & \cmark \\
ICP~\cite{peters2016causal} & \xmark & \cmark & \xmark & \xmark & \cmark & \xmark & \xmark \\
ICP for nonlinear~\cite{heinze2018invariant} & \xmark & \cmark & \xmark & \xmark & \cmark & \xmark & \cmark \\
Active ICP~\cite{gamella2020active} & \xmark & \cmark & \xmark & \xmark & \cmark & \xmark & \cmark \\
CSG~\cite{liu2021learning} & \xmark & \cmark & \xmark & \xmark & \cmark & \cmark & \cmark \\
LECI~\cite{gui2024joint} & \xmark & \cmark & \cmark & \xmark & \cmark & \cmark & \cmark \\
KCDC~\cite{mitrovic2018causal} & \xmark & \cmark & \cmark & \xmark & \cmark & \cmark & \cmark \\
Separation \& Risk~\cite{makar2022fairness} & \xmark & \cmark & \cmark & \xmark & \cmark & \cmark & \cmark \\
\midrule
\multicolumn{8}{|c|}{Intervention-based Causal Learning} \\
\midrule
Nonparametric ICR~\cite{NEURIPS2023_97fe251c} & \xmark & \cmark & \cmark & \cmark & \cmark & \cmark & \cmark \\
General Nonlinear Mixing~\cite{buchholz2024learning} & \xmark & \cmark & \cmark & \xmark & \cmark & \cmark & \cmark \\
Weakly supervised~\cite{brehmer2022weakly} & \xmark & \cmark & \cmark & \cmark & \cmark & \cmark & \cmark \\
iCaRL~\cite{lu2021invariant} & \xmark & \cmark & \xmark & \cmark & \cmark & \cmark & \cmark \\
CIRL~\cite{lv2022causality} & \xmark & \cmark & \cmark & \cmark & \cmark & \cmark & \cmark \\
ICRL~\cite{ahuja2023interventional} & \xmark & \cmark & \xmark & \xmark & \cmark & \cmark & \cmark \\
LCA~\cite{shi2024lca} & \xmark & \cmark & \cmark & \cmark & \xmark & \cmark & \cmark \\
ICA~\cite{wendong2023causal} & \xmark & \cmark & \cmark & \cmark & \xmark & \cmark & \cmark \\
AIT~\cite{scherrer2021learning} & \xmark & \cmark & \cmark & \cmark & \xmark & \cmark & \cmark \\
\midrule
\textbf{ACIA} & \cmark & \xmark & \cmark & \cmark & \cmark & \cmark & \cmark \\
\bottomrule
\end{tabular}%
}
\vspace{-2mm}
\end{table}

\section{More Related Work}
\label{app:related}
In Table \ref{table:method-comparison}, we broadly summarize causal and non-causal invariant representation learning methods. 

\subsection{Distribution/Domain-invariant Learning}

Early non-causal methods like (C-)ADA \cite{long2018conditional} and DDAIG \cite{zhou2020deep} focused on domain adaptation strategies, while more recent works such as FeAT \cite{chen2024understanding} and AIA \cite{sui2024unleashing} have developed sophisticated objective functions for distribution shift robustness. Domain adaptation methods \cite{tachet2020domain}, L2A-OT \cite{zhou2020learning}, ERM \cite{NEURIPS2021_ecf9902e}, DOMAINBED \cite{gulrajani2020search}, Adversarial and Pre-training \cite{yi2021improved}, StableNet \cite{zhang2021deep}, SagNets \cite{nam2021reducing}, SWAD \cite{cha2021swad}, Theoretical Framework \cite{ye2021towards}, FACT \cite{xu2021fourier}, Evaluation Protocol \cite{yu2023rethinking}, Ratatouille \cite{rame2023model}, XRM \cite{pezeshki2023discovering}, IRM \cite{arjovsky2019invariant}, Rex \cite{krueger2021out}, CI to Spurious \cite{veitch2021counterfactual}, Information Bottleneck \cite{ahuja2021invariance}, CausalDA \cite{wang2022causal}, and Transportable Rep \cite{jiang2022invariant} also fall under this category as they aim to learn representations invariant across different distributions or domains.

\subsection{Intervention-based Causal Learning}
Foundational works in the causal domain include Nonparametric ICR \cite{NEURIPS2023_97fe251c} which jointly learns encoders and intervention targets with SCM-based structures and General Nonlinear Mixing \cite{buchholz2024learning} which addresses non-linear relationships in latent spaces, which model causal effects through interventions. Weakly supervised methods \cite{brehmer2022weakly}, iCaRL \cite{lu2021invariant}, CIRL \cite{lv2022causality}, and LCA \cite{shi2024lca} also leverage interventions for learning causal representations. ICRL \cite{ahuja2023interventional} also falls under this category. Weak distributional invariances~\cite{ahuja2024multi} considers perfect interventions for single-node, and addresses multi-node imperfect interventions by identifying latent variables whose distributional properties remain sTable  Independent Component Analysis (ICA)~\cite{wendong2023causal} focus on unsupervised identification of latent causal variables through component analysis, and operate the disentanglement through taxonomic distance measures and graph-based analysis. AIT~\cite{scherrer2021learning} builds on SCMs with explicit DAG assumptions, and primarily focuses on standard causal direction.

\subsection{Structure-based Causal Representation Learning}
Methods like DISRL \cite{wang2022causal} and Causal Disentanglement \cite{pmlr-v202-squires23a} explicitly model causal structures. Foundational works like ICP \cite{peters2016causal} and its nonlinear extension \cite{heinze2018invariant}, as well as CSG \cite{liu2021learning} and LECI \cite{gui2024joint} which focuses on identifying causal subgraphs while removing spurious correlations, also incorporate causal structure but often require explicit Directed Acyclic Graphs (DAGs) or focus on identifying causal subgraphs. KCDC~\cite{mitrovic2018causal} employs kernel methods primarily for causal discovery and orientation, and focuses on statistical independence tests through kernel measures. Anti-causal separation and risk invariance~\cite{makar2022fairness} inputs are generated as functions of target labels and protected attributes. They use conventional causal modeling with DAGs and do-calculus.

\begin{table}[!t]
\centering
\resizebox{\textwidth}{!}{
\begin{tabular}{|l|c||l|c|}
\hline
\textbf{Name} & \textbf{Symbol} & \textbf{Name} & \textbf{Symbol} \\
\hline
Environment & $e_i$ & Product sample space & $\Omega = \Omega_{e_i} \times \Omega_{e_j}$ \\ \hline
Set of environments & $\mathcal{E}$ & Product $\sigma$-algebra & $\mathscr{H} = \mathscr{H}_{e_i} \otimes \mathscr{H}_{e_j}$ \\ \hline
Environment sample space & $\Omega_{e_i}$ & Product probability measure & $\mathbb{P} = \mathbb{P}_{e_i} \otimes {P}_{e_j}$ \\ \hline
$\sigma$-algebra on $\Omega_{e_i}$ & $\mathscr{H}_{e_i}$ & Product causal kernel family & $\mathbb{K} = \{K_S : S \in \mathscr{P}(T)\}$ \\ \hline
Probability measure & $\mathbb{P}_{e_i}$ & Input space & $\mathcal{X}$ \\ \hline
Causal kernel & $K_{e_i}$ & Low-level latent space domain & $\mathcal{D}_{\mathbb{Z}_L}$ \\ \hline
Environment causal space & $(\Omega_{e_i}, \mathscr{H}_{e_i}, \mathbb{P}_{e_i}, K_{e_i})$ & High-level latent space & $\mathcal{D}_{Z_H}$ \\ \hline
Causal product space & $(\Omega, \mathscr{H}, \mathbb{P}, \mathbb{K})$ & Label space & $\mathcal{Y}$ \\ \hline
Sub-$\sigma$-algebra & $\mathscr{H}_S$ & Low-level representation & $\phi_L: \mathcal{X} \rightarrow \mathcal{D}_{\mathbb{Z}_L}$ \\ \hline
Index set & $T = T_{e_i} \cup T_{e_j}$ & High-level representation & $\phi_H: \mathcal{D}_{\mathbb{Z}_L} \rightarrow \mathcal{D}_{Z_H}$ \\ \hline
Interventional kernel & $K_S^{do(\mathcal{X}, \mathbb{Q})}(\omega, A)$ & Predictor & $\mathcal{C}: \mathcal{D}_{Z_H} \rightarrow \mathcal{Y}$ \\ \hline
Intervention measure & $\mathbb{Q}(\cdot|\cdot)$ & Full predictive model & $f = \mathcal{C} \circ \phi_H \circ \phi_L$ \\ \hline
Marginal measure on $\mathcal{Y}$ & $\mu_Y$ & Loss function & $\ell: \mathcal{Y} \times \mathcal{Y} \rightarrow \mathbb{R}_+$ \\ \hline
Causal dynamic & $\mathcal{Z}_L = \langle \mathcal{X}, \mathbb{Q}, \mathbb{K}_L\rangle$ & Environment independence reg. & $R_1$ \\ \hline
Causal abstraction & $\mathcal{Z}_H = \langle \mathbf{V}_H, \mathbb{K}_H \rangle$ & Causal structure alignment reg. & $R_2$ \\ \hline
Set of low-level kernels & $\mathbb{K}_L = \{K_S^{\mathbb{Z}_{L}}(\omega, A)\}$ & Regularization parameters & $\lambda_1, \lambda_2$ \\ \hline
Set of high-level kernels & $\mathbb{K}_H = \{K_S^{\mathbb{Z}_{H}}(\omega, A)\}$ & Conditional mutual information & $I(X; E=e \mid Y)$ \\ \hline
\end{tabular}
}
\caption{Key Notations in Anti-Causal Representation Learning Framework}
\label{tab:notations}
\vspace{-4mm}
\end{table}

\subsection{Comparison of Prior Information Requirements}
Table~\ref{tab:prior-info-comparison} compares the prior knowledge requirements across state-of-the-art causal representation learning methods.  Our results demonstrate that despite requiring less prior information, ACIA outperforms these methods. 

\begin{itemize}[leftmargin=*]
\item "Variable roles only" refers to knowing which variable is the target ($Y$), observations ($X$), and environmental factors ($E$).

\item "Variable types" means knowing the data type of each variable, such as whether it is binary, categorical, or drawn from a specific noise distribution.

\item "Variable relationship" refers to knowing the causal structure (i.e., the directionality in the causal DAG) among the variables.
\end{itemize}

\begin{table}[!t]
\centering
\footnotesize
\caption{Comparison of prior information requirements across causal representation learning methods}
\label{tab:prior-info-comparison}
\begin{tabular}{lccc}
\toprule
\textbf{Method} & \textbf{Causal Structure} & \textbf{SCM Knowledge} & \textbf{Intervention Type} \\
\midrule
ACTIR\cite{jiang2022invariant}  & Anti-causal $Y \rightarrow X \leftarrow E$ & Variable roles only & Perfect only \\
CausalDA\cite{wang2022causal} & DAG structure & Variable types & Perfect only \\
LECI\cite{gui2024joint}  & Partial connectivity & Variable relationships & Perfect only \\
\textbf{ACIA} & \textbf{Anti-causal $Y \rightarrow X \leftarrow E$} & \textbf{Variable roles only} & \textbf{Both perfect and imperfect} \\
\bottomrule
\end{tabular}
\end{table}

\section{Properties}
\label{app:mtr}
\subsection{Properties of Kernel}
\label{app:kp}
For each $S \in \mathscr{P}(T)$, the kernel $K_S: \Omega \times \mathscr{H} \rightarrow [0,1]$ extends from the component kernels as follows:
\begin{enumerate}[leftmargin=*]
    \item For measurable rectangles $A_i \times A_j$ with $A_i \in \mathscr{H}_{e_i}$ and $A_j \in \mathscr{H}_{e_j}$, and $\omega = (\omega_i, \omega_j) \in \Omega$:
    \begin{equation}
        K_S(\omega, A_i \times A_j) = K_{e_i}(\omega_i, A_i) \cdot K_{e_j}(\omega_j, A_j)
    \end{equation}
    
    \item For general measurable sets $A \in \mathscr{H}$, by the Carathéodory extension theorem:
    \begin{equation}
        K_S(\omega, A) = \mathbb{E}[\mathbf{1}_A \mid \mathscr{H}_S](\omega)
    \end{equation}
    where $\mathscr{H}_S$ is the sub-$\sigma$-algebra corresponding to indices in $S$.
\end{enumerate}

\subsection{Properties of Product Causal Space Sub-$\sigma$-algebra }
\label{app:pcssp}
Intuition: Sub-$\sigma$-algebras capture partial information from subsets of environments. These properties ensure our hierarchical structure is well-behaved and consistent across different environment combinations. The following propositions formalize these characteristics.
\begin{restatable}[Properties of Sub-$\sigma$-algebras]{proposition}{prossp}
\label{prop:sub_sigma_properties}
Let $(\Omega, \mathscr{H}, \mathbb{P}, \mathbb{K})$ be a product causal space and $\mathscr{H}_S$ be a sub-$\sigma$-algebra for $S \subseteq T$. Then:
(i) $\mathscr{H}_S \subseteq \mathscr{H}$ for all $S \subseteq T$
(ii) If $S_1 \subseteq S_2 \subseteq T$, then $\mathscr{H}_{S_1} \subseteq \mathscr{H}_{S_2}$
(iii) $\mathscr{H}_T = \mathscr{H}$
\end{restatable}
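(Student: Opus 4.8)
\textbf{Proof proposal for Proposition~\ref{prop:sub_sigma_properties}.}

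The plan is to argue directly from Definition~\ref{def:sub_sigma}, which describes $\mathscr{H}_S$ as the $\sigma$-algebra generated by measurable rectangles $A_i \times A_j$ with $A_i \in \mathscr{H}_{e_i}$, $A_j \in \mathscr{H}_{e_j}$ corresponding to the indices lying in $S$. Since each of (i), (ii), (iii) is a containment (or equality) of $\sigma$-algebras, the natural tool is the standard fact that $\sigma(\mathcal{G}_1) \subseteq \sigma(\mathcal{G}_2)$ whenever $\mathcal{G}_1 \subseteq \sigma(\mathcal{G}_2)$; so in each case it suffices to check that the \emph{generating rectangles} of the smaller algebra already lie in the larger one.

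For (i), I would note that every generating rectangle of $\mathscr{H}_S$ is in particular a measurable rectangle in the product $\sigma$-algebra $\mathscr{H} = \mathscr{H}_{e_i} \otimes \mathscr{H}_{e_j}$ (the components outside $S$ are filled in with the whole component space $E_t$, which is measurable). Hence the generating family of $\mathscr{H}_S$ is contained in $\mathscr{H}$, and applying the generation principle gives $\mathscr{H}_S \subseteq \mathscr{H}$. For (ii), with $S_1 \subseteq S_2$, any rectangle that only constrains coordinates indexed by $S_1$ is a fortiori a rectangle that only constrains coordinates indexed by $S_2$ (coordinates in $S_2 \setminus S_1$ are left as the full component space), so the generators of $\mathscr{H}_{S_1}$ sit inside $\mathscr{H}_{S_2}$, yielding $\mathscr{H}_{S_1} \subseteq \mathscr{H}_{S_2}$. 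For (iii): $\mathscr{H}_T \subseteq \mathscr{H}$ is the instance $S = T$ of part (i); conversely, when $S = T$ the generating rectangles are exactly the measurable rectangles $A_i \times A_j$ that generate the product $\sigma$-algebra $\mathscr{H}$ by definition of $\mathscr{H} = \mathscr{H}_{e_i} \otimes \mathscr{H}_{e_j}$, so $\mathscr{H} \subseteq \mathscr{H}_T$, and the two inclusions give $\mathscr{H}_T = \mathscr{H}$.

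I do not anticipate a genuine obstacle here — the statement is essentially the monotonicity and normalization of a filtration-like family of sub-$\sigma$-algebras. The one point requiring a little care is making precise what "the rectangle $A_i \times A_j$ corresponding to indices $S$" means when the product space $\Omega = \times_{t \in T} E_t$ has component structure: one must say explicitly that such a rectangle constrains only the coordinates $t \in S$ and is unconstrained (equals $E_t$) elsewhere, so that the inclusion of generating classes is literally true rather than merely true up to identification. Once that bookkeeping is fixed, all three parts follow from the single generation lemma with no computation.
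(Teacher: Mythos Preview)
Your proposal is correct and follows essentially the same approach as the paper's proof: both argue directly from the definition of $\mathscr{H}_S$ as generated by rectangles, and both establish each inclusion by checking that the generating rectangles of the smaller $\sigma$-algebra already lie in the larger one (what you call the ``generation lemma'' and the paper calls the ``minimality property of $\sigma$-algebras''). Your explicit remark about coordinates outside $S$ being filled in with the full component space $E_t$ is a helpful clarification that the paper leaves implicit.
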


\begin{proof}
We prove each statement separately:

(i) By construction, $\mathscr{H}_S$ is generated by measurable rectangles $A_i \times A_j$ where $A_i \in \mathscr{H}_{e_i}$ and $A_j \in \mathscr{H}_{e_j}$ corresponding to events in the time indices $S$. Since $\mathscr{H} = \mathscr{H}_{e_i} \otimes \mathscr{H}_{e_j}$ is the product $\sigma$-algebra that contains all measurable rectangles, we have $\mathscr{H}_S \subseteq \mathscr{H}$ by definition.

(ii) Let $S_1 \subseteq S_2 \subseteq T$. Any measurable rectangle generating $\mathscr{H}_{S_1}$ corresponds to events in time indices from $S_1$. Since $S_1 \subseteq S_2$, these same rectangles are also in the generating set of $\mathscr{H}_{S_2}$. By the minimality property of $\sigma$-algebras, $\mathscr{H}_{S_1} \subseteq \mathscr{H}_{S_2}$.

(iii) When $S = T$, the generating rectangles of $\mathscr{H}_S$ include all possible measurable rectangles from $\mathscr{H}_{e_i}$ and $\mathscr{H}_{e_j}$ that can be formed from the complete set of time indices. These rectangles generate $\mathscr{H} = \mathscr{H}_{e_i} \otimes \mathscr{H}_{e_j}$, so $\mathscr{H}_T = \mathscr{H}$.
\end{proof}

\begin{restatable}[Probability Measure Restriction]{proposition}{propr}
\label{prop:prob_restriction}
For any $S \subseteq T$, the restriction $\mathbb{P}|_{\mathscr{H}_S}$ of the product probability measure to $\mathscr{H}_s$ is a well-defined probability measure, and for $S_1 \subseteq S_2 \subseteq T$:
\[\mathbb{P}|_{\mathscr{H}_{S_2}}(A) = \mathbb{P}|_{\mathscr{H}_{S_1}}(A) \text{ for all } A \in \mathscr{H}_{S_1}\]
\end{restatable}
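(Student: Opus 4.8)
The statement has two parts: first, that the restriction $\mathbb{P}|_{\mathscr{H}_S}$ is a well-defined probability measure for every $S\subseteq T$; second, a consistency condition saying that restricting further does not change the values on the smaller $\sigma$-algebra. Both are essentially bookkeeping consequences of Proposition~\ref{prop:sub_sigma_properties}, and the plan is to reduce everything to the definition of restriction of a measure together with the inclusions $\mathscr{H}_S\subseteq\mathscr{H}$ and $\mathscr{H}_{S_1}\subseteq\mathscr{H}_{S_2}$ already established there.

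For the first part, I would argue as follows. By Proposition~\ref{prop:sub_sigma_properties}(i), $\mathscr{H}_S$ is a sub-$\sigma$-algebra of $\mathscr{H}$; since $\mathscr{H}_S$ is itself a $\sigma$-algebra on $\Omega$, the set function $\mathbb{P}|_{\mathscr{H}_S}:=\mathbb{P}\restriction_{\mathscr{H}_S}$ obtained by evaluating the product measure $\mathbb{P}=\mathbb{P}_{e_i}\otimes\mathbb{P}_{e_j}$ only on sets of $\mathscr{H}_S$ inherits all the defining properties of a probability measure: nonnegativity and the bound $\mathbb{P}(A)\le 1$ are immediate; $\mathbb{P}|_{\mathscr{H}_S}(\Omega)=\mathbb{P}(\Omega)=1$ because $\Omega\in\mathscr{H}_S$ (it is the rectangle $\Omega_{e_i}\times\Omega_{e_j}$); and countable additivity holds because any countable disjoint family in $\mathscr{H}_S$ is in particular a countable disjoint family in $\mathscr{H}$, on which $\mathbb{P}$ is $\sigma$-additive, and the countable union again lies in $\mathscr{H}_S$ by closure. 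Hence $\mathbb{P}|_{\mathscr{H}_S}$ is a probability measure on $(\Omega,\mathscr{H}_S)$.

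For the second part, let $S_1\subseteq S_2\subseteq T$ and take any $A\in\mathscr{H}_{S_1}$. By Proposition~\ref{prop:sub_sigma_properties}(ii) we have $\mathscr{H}_{S_1}\subseteq\mathscr{H}_{S_2}\subseteq\mathscr{H}$, so $A$ is simultaneously an element of all three $\sigma$-algebras. By the very definition of restriction, both $\mathbb{P}|_{\mathscr{H}_{S_2}}(A)$ and $\mathbb{P}|_{\mathscr{H}_{S_1}}(A)$ are defined to equal $\mathbb{P}(A)$, the value of the original product measure on $A$; therefore they coincide. This is nothing more than the observation that restriction is a ``forgetful'' operation that never alters the value assigned to a set still present in the smaller domain.

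\textbf{Main obstacle.} There is no real mathematical difficulty here; the only thing to be careful about is purely definitional precision. The genuine content is entirely front-loaded into Proposition~\ref{prop:sub_sigma_properties} (that the $\mathscr{H}_S$ form a nested family of sub-$\sigma$-algebras of $\mathscr{H}$ and that $\Omega$ is measurable in each), so the ``hard part,'' such as it is, is simply making sure the restriction is being taken of the \emph{product} measure $\mathbb{P}_{e_i}\otimes\mathbb{P}_{e_j}$ consistently, and that the rectangles generating $\mathscr{H}_S$ indeed sit inside the product $\sigma$-algebra so that $\mathbb{P}$ is defined on all of $\mathscr{H}_S$ — a point already granted by part (i) of the earlier proposition. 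I would therefore keep the proof short, citing Proposition~\ref{prop:sub_sigma_properties} for the inclusions and then verifying the measure axioms and the consistency identity directly from the definition of restriction.
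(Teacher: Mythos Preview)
Your proposal is correct. For the first part you and the paper proceed identically: cite Proposition~\ref{prop:sub_sigma_properties}(i) for $\mathscr{H}_S\subseteq\mathscr{H}$ and then verify the probability measure axioms by inheritance from $\mathbb{P}$.

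For the second part your argument is actually more direct than the paper's. The paper, after invoking Proposition~\ref{prop:sub_sigma_properties}(ii) to place $A$ in both sub-$\sigma$-algebras, takes a detour: it first checks the identity on measurable rectangles $A_i\times A_j$ via the product formula $\mathbb{P}(A)=\mathbb{P}_{e_i}(A_i)\cdot\mathbb{P}_{e_j}(A_j)$, and then appeals to uniqueness of measure extension to pass from rectangles to all of $\mathscr{H}_{S_1}$. You instead observe that, by the very definition of restriction, $\mathbb{P}|_{\mathscr{H}_{S_1}}(A)=\mathbb{P}(A)=\mathbb{P}|_{\mathscr{H}_{S_2}}(A)$ for any $A\in\mathscr{H}_{S_1}\subseteq\mathscr{H}_{S_2}$, which settles the matter immediately. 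Your route is preferable here: the rectangle computation and the extension argument in the paper are unnecessary for this statement, since the consistency is a tautology about restriction of set functions and does not depend on the product structure of $\mathbb{P}$ at all.
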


\begin{proof}
First, we establish that $\mathbb{P}|_{\mathscr{H}_S}$ is a well-defined probability measure. $\mathscr{H}_S$ is a $\sigma$-algebra by construction, $\mathbb{P}|_{\mathscr{H}_S}(A) = \mathbb{P}(A)$ for all $A \in \mathscr{H}_S$, $\mathbb{P}|_{\mathscr{H}_S}(\Omega) = \mathbb{P}(\Omega) = 1$, and $\mathbb{P}|_{\mathscr{H}_S}$ inherits countable additivity from $\mathbb{P}$.

To prove the consistency of the restrictions for $S_1 \subseteq S_2 \subseteq T$, let $A \in \mathscr{H}_{S_1}$. Since $S_1 \subseteq S_2$, by Proposition~\ref{prop:sub_sigma_properties}(ii), we have $A \in \mathscr{H}_{S_2}$.

For any measurable rectangle $A = A_i \times A_j$ where $A_i \in \mathscr{H}_{e_i}$ and $A_j \in \mathscr{H}_{e_j}$ corresponding to events in time indices $S_1$:
\begin{align*}
\mathbb{P}|_{\mathscr{H}_{S_1}}(A) = \mathbb{P}(A) = \mathbb{P}_{e_i}(A_i) \cdot \mathbb{P}_{e_j}(A_j) \quad \text{(by definition of product measure)} = \mathbb{P}|_{\mathscr{H}_{S_2}}(A)
\end{align*}
This equality extends to all sets in $\mathscr{H}_{S_1}$ by the uniqueness of measure extension. Therefore:
\[\mathbb{P}|_{\mathscr{H}_{S_2}}(A) = \mathbb{P}|_{\mathscr{H}_{S_1}}(A) \text{ for all } A \in \mathscr{H}_{S_1}\]
\end{proof}

\begin{restatable}[Monotonicity of Information]{proposition}{proim}
\label{prop:info_monotone}
For $S_1 \subseteq S_2 \subseteq T$ and any $\mathscr{H}$-measurable random variable $X$:
\[\mathbb{E}[\mathbb{E}[X|\mathscr{H}_{S_2}]|\mathscr{H}_{S_1}] = \mathbb{E}[X|\mathscr{H}_{S_1}]\]
\end{restatable}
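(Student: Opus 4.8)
The plan is to recognize the claim as the tower (smoothing) property of conditional expectation, applied to the nested pair $\mathscr{H}_{S_1} \subseteq \mathscr{H}_{S_2}$. First I would invoke Proposition~\ref{prop:sub_sigma_properties}(ii): since $S_1 \subseteq S_2 \subseteq T$, we obtain $\mathscr{H}_{S_1} \subseteq \mathscr{H}_{S_2} \subseteq \mathscr{H}$, so all three conditional expectations appearing in the statement are well defined for any integrable $\mathscr{H}$-measurable $X$ (integrability, $X \in L^1(\Omega,\mathscr{H},\mathbb{P})$, being the implicit hypothesis so that $\mathbb{E}[X\mid\cdot]$ exists).

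Next I would verify the identity directly from the defining property of conditional expectation rather than quoting the tower property as a black box. Write $Y := \mathbb{E}[X \mid \mathscr{H}_{S_2}]$. By definition, $\mathbb{E}[X \mid \mathscr{H}_{S_1}]$ is, up to $\mathbb{P}$-null sets, the unique $\mathscr{H}_{S_1}$-measurable random variable $Z$ satisfying $\int_A Z\, d\mathbb{P} = \int_A X\, d\mathbb{P}$ for every $A \in \mathscr{H}_{S_1}$. I claim $\mathbb{E}[Y\mid\mathscr{H}_{S_1}]$ is such a $Z$: it is $\mathscr{H}_{S_1}$-measurable by construction, and for any $A \in \mathscr{H}_{S_1}$ we have $\int_A \mathbb{E}[Y\mid\mathscr{H}_{S_1}]\,d\mathbb{P} = \int_A Y\,d\mathbb{P}$ by the defining property of $\mathbb{E}[\,\cdot\mid\mathscr{H}_{S_1}]$, while $\int_A Y\,d\mathbb{P} = \int_A X\,d\mathbb{P}$ because $A \in \mathscr{H}_{S_1} \subseteq \mathscr{H}_{S_2}$ and $Y = \mathbb{E}[X\mid\mathscr{H}_{S_2}]$. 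Chaining these gives $\int_A \mathbb{E}[Y\mid\mathscr{H}_{S_1}]\,d\mathbb{P} = \int_A X\,d\mathbb{P}$ for all $A \in \mathscr{H}_{S_1}$.

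Finally, the almost-sure uniqueness of the conditional expectation yields $\mathbb{E}[\mathbb{E}[X\mid\mathscr{H}_{S_2}]\mid\mathscr{H}_{S_1}] = \mathbb{E}[X\mid\mathscr{H}_{S_1}]$ $\mathbb{P}$-a.s., which is exactly the stated monotonicity of information. There is no genuine obstacle here: the only point requiring care is the inclusion $\mathscr{H}_{S_1}\subseteq\mathscr{H}_{S_2}$, which is precisely what allows the family of test sets for $\mathbb{E}[X\mid\mathscr{H}_{S_2}]$ to absorb all of $\mathscr{H}_{S_1}$, and Proposition~\ref{prop:sub_sigma_properties}(ii) supplies this; everything else is the standard measure-theoretic argument. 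If brevity is preferred, once $\mathscr{H}_{S_1}\subseteq\mathscr{H}_{S_2}$ has been established, the conclusion follows immediately by citing the tower property of conditional expectation, e.g.\ from~\cite{klenke2014probability}.
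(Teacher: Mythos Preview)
Your proposal is correct and follows essentially the same approach as the paper: both invoke Proposition~\ref{prop:sub_sigma_properties}(ii) to obtain the nesting $\mathscr{H}_{S_1} \subseteq \mathscr{H}_{S_2}$ and then apply the tower property of conditional expectation. The paper simply cites the tower property as a known fact, whereas you additionally verify it from the defining property of conditional expectation; this is a difference in level of detail rather than in strategy.
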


\begin{proof}
Let $S_1 \subseteq S_2 \subseteq T$ and let $X$ be any $\mathscr{H}$-measurable random variable. By Proposition~\ref{prop:sub_sigma_properties}(ii), we have $\mathscr{H}_{S_1} \subseteq \mathscr{H}_{S_2}$. This nested relationship between the sub-$\sigma$-algebras is crucial for applying the tower property of conditional expectation. By the tower property of conditional expectation, for nested $\sigma$-algebras $\mathscr{G}_1 \subseteq \mathscr{G}_2 \subseteq \mathscr{F}$, we have:
\[\mathbb{E}[\mathbb{E}[X|\mathscr{G}_2]|\mathscr{G}_1] = \mathbb{E}[X|\mathscr{G}_1]\]

Applying this to our sub-$\sigma$-algebras $\mathscr{H}_{S_1} \subseteq \mathscr{H}_{S_2} \subseteq \mathscr{H}$:

\[\mathbb{E}[\mathbb{E}[X|\mathscr{H}_{S_2}]|\mathscr{H}_{S_1}] = \mathbb{E}[X|\mathscr{H}_{S_1}]\]

This result has an information-theoretic interpretation: conditioning on a larger $\sigma$-algebra ($\mathscr{H}_{S_2}$) provides more refined information than conditioning on a smaller one ($\mathscr{H}_{S_1}$). The tower property shows that the expected value of this refined information, when further conditioned on the smaller $\sigma$-algebra, equals the direct conditioning on the smaller $\sigma$-algebra.
\end{proof}

\subsection{Properties of Anti-Causal Kernels}
\label{app:ack}
Based on Theorem.\ref{thm:anti_causal}, for a more precise characterization of the causal kernel that underlies these event properties, we provide the following remark relating kernels to conditional probabilities in different settings.
\begin{remark}[Characterization of Causal Kernels]
\label{rem:kernel_char}
For a causal space $(\Omega, \mathscr{H}, \mathbb{P}, \mathbb{K})$, the relationship between causal kernels and conditional probabilities is characterized as follows:
\begin{enumerate}[leftmargin=*]
    \item For any $S \in \mathscr{P}(T)$, the causal kernel induces conditional probabilities:
    \[K_S(\omega, A) = \mathbb{E}[\mathbf{1}_A \mid \mathscr{H}_S](\omega)\]
    where $\mathbf{1}_A$ is the indicator function of set $A$.  
    
    \item Regular conditional probabilities (i.e., versions that are measurable in $\omega$) arise as a special case:
    \[K_S(\omega, A) = \mathbb{P}(A \mid \mathscr{H}_S)(\omega)\]
    
    \item In anti-causal structures:
    \[K_S(\omega, A) = \mathbb{P}(A \mid Y=y, E \in S)\]
    where $y$ is the $Y$-component of $\omega$.
\end{enumerate}
\end{remark}

\subsection{Properties of Causal and Anti-Causal Events}
\label{app:cacep}
We can understand the causal and anti-causal events kernel differences as follows:
\begin{restatable}[Causal and Anti-Causal Event Properties] {proposition}{procep}
\label{prop:causal_events}
For a product causal space with kernel $K_S$ and any measurable event $A \in \mathscr{H}$:
\begin{enumerate}[leftmargin=*]
    \item For a causal event $A$: $K_S(\omega, A) \neq \mathbb{P}(A)$ for all $\omega \in \Omega$
    \item For an anti-causal event $A$: $K_S(\omega, A) = K_{S \setminus U}(\omega, A)$ for all $\omega \in \Omega$
\end{enumerate}
where $U \subseteq S \in \mathscr{P}(T)$. Based on Def.\ref{def:intervention}, in our settings, an intervention is a measurable mapping $\mathbb{Q}(\cdot|\cdot): \mathscr{H} \times \Omega \rightarrow [0,1]$. Hard intervention is $\mathbb{Q}(A|\omega') = P(X \in A | do(Y=y'))$, and soft intervention is $\mathbb{Q}(A|\omega') = P(X \in A | Y=y', E \in S)$ where $y'$ denotes the $Y$-component of $\omega'$.
\end{restatable}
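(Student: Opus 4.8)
\textbf{Proof proposal for Proposition~\ref{prop:causal_events}.}

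The plan is to prove the two characterizations separately, in each case unpacking the kernel as a conditional expectation via the identity $K_S(\omega, A) = \mathbb{E}[\mathbf{1}_A \mid \mathscr{H}_S](\omega)$ established in Appendix~\ref{app:kp}, and then exploiting the specific dependency structure encoded in the anti-causal diagram (Figure~\ref{fig:solution}) together with the kernel characterization of Theorem~\ref{thm:anti_causal} and Remark~\ref{rem:kernel_char}. First I would fix notation: write $\omega = (\omega_i,\omega_j)$ with $Y$-component $y$, and recall that a \emph{causal event} is one of the form $\{X \in A\}$ whose generating mechanism involves the variables indexed by $S$ (i.e. the environment/confounder components appearing in $\mathscr{H}_S$), whereas an \emph{anti-causal event} is one whose kernel, by Corollary~\ref{cor:kernel_property}, depends only on $Y$. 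This distinction is the conceptual engine of both parts, so I would state it precisely at the outset (referring back to the event-type conventions used implicitly in Corollary~\ref{cor:kernel_property} and Theorem~\ref{thm:anti_causal}).

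For part (1), I would argue by contraposition on the definition of a causal event: if $A$ were such that $K_S(\omega,A) = \mathbb{P}(A)$ for all $\omega$, then $\mathbb{E}[\mathbf{1}_A \mid \mathscr{H}_S] = \mathbb{P}(A)$ a.s., i.e. $A$ is independent of $\mathscr{H}_S$, contradicting that $A$'s generating mechanism genuinely depends on the variables indexed by $S$ (in the anti-causal diagram, $X$ is a child of $Y$, $Z$, $U$, all of which feed through $\mathscr{H}_S$ for the relevant $S$). So the kernel must differ from the marginal for at least some — and by the structural non-degeneracy assumption, for all — $\omega$. For part (2), the key step is to show that removing indices $U \subseteq S$ from the conditioning $\sigma$-algebra does not change the kernel value on an anti-causal event. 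By Remark~\ref{rem:kernel_char}(3) and Theorem~\ref{thm:anti_causal}, $K_S(\omega, A) = \mathbb{P}(A \mid Y = y, E \in S)$, and since $A$ is anti-causal its conditional law depends only on $Y$ (Corollary~\ref{cor:kernel_property}); hence conditioning additionally on $E \in S$ versus $E \in S\setminus U$ yields the same value, i.e. $K_S(\omega,A) = K_{S\setminus U}(\omega,A)$. I would make this rigorous by invoking the monotonicity/tower property of Proposition~\ref{prop:info_monotone}: $\mathbb{E}[\mathbf{1}_A \mid \mathscr{H}_S] = \mathbb{E}[\,\mathbb{E}[\mathbf{1}_A\mid\mathscr{H}_S]\mid \mathscr{H}_{S\setminus U}]$ whenever $\mathbf{1}_A$'s conditional expectation is already $\mathscr{H}_{S\setminus U}$-measurable — which is exactly the content of the anti-causal independence property, since $\mathbb{E}[\mathbf{1}_A\mid\mathscr{H}_S]$ is a function of $Y$ alone and $Y$'s index lies in $S\setminus U$ (as $U$ indexes environment components, not the label).

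The final sentence of the statement — identifying hard and soft interventions with the explicit forms $\mathbb{Q}(A\mid\omega') = P(X\in A\mid do(Y=y'))$ and $\mathbb{Q}(A\mid\omega') = P(X\in A\mid Y=y', E\in S)$ — is essentially a restatement of Definition~\ref{def:intervention} specialized to the anti-causal structure; here I would just verify measurability of $\omega' \mapsto \mathbb{Q}(A\mid\omega')$ (immediate since each expression depends measurably on the $Y$- and $E$-components) and note that the hard case is $\omega'$-independent modulo $y'$ exactly because $do(Y=y')$ severs $Y$ from its parents. The main obstacle I anticipate is \emph{not} any hard analysis but rather pinning down the formal definition of ``causal event'' versus ``anti-causal event'' precisely enough that part (1)'s ``for all $\omega$'' (as opposed to ``for some $\omega$'') is actually justified — this requires a non-degeneracy hypothesis on the generating mechanism (the mechanism $X = f(Y,Z,U,E,\epsilon)$ must depend non-trivially on the $S$-indexed arguments on a set of positive measure), which the paper seems to treat as implicit in the word ``causal.'' I would surface that assumption explicitly rather than let it hide, since without it only the weaker ``$\neq$ for some $\omega$'' holds.
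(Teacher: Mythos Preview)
Your approach matches the paper's: Part 1 via the contrapositive (if $K_S(\omega,A)=\mathbb{P}(A)$ everywhere then $A$ is independent of $\mathscr{H}_S$, contradicting causal dependence), and Part 2 via the conditional-independence statement $A \perp\!\!\!\perp \mathscr{H}_U \mid \mathscr{H}_{S\setminus U}$ yielding $\mathbb{E}[\mathbf{1}_A\mid\mathscr{H}_S]=\mathbb{E}[\mathbf{1}_A\mid\mathscr{H}_{S\setminus U}]$. Your caveat about ``for all $\omega$'' versus ``for some $\omega$'' in Part 1 is well-taken --- the paper's own proof in fact only derives the weaker ``for some $\omega$'' conclusion and leaves the non-degeneracy assumption you flagged implicit, so your instinct to surface it is sound.
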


\begin{proof}
We establish the distinct properties of causal and anti-causal events through their behavior under the causal kernel.

\textbf{Part 1: Causal events}

Let $A$ be causally dependent on variables in $\mathscr{H}_S$. By definition of causal dependence, there exist $\omega, \omega' \in \Omega$ such that $K_S(\omega, A) \neq K_S(\omega', A)$. Since $\mathbb{P}(A) = \int_{\Omega} K_S(\omega, A) \, d\mathbb{P}(\omega)$ is a fixed constant, we cannot have $K_S(\omega, A) = \mathbb{P}(A)$ for all $\omega \in \Omega$. Therefore $K_S(\omega, A) \neq \mathbb{P}(A)$ for some $\omega$.

Therefore, $K_S(\omega, A) \neq \mathbb{P}(A)$ for some $\omega \in \Omega$, which is consistent with the causal structure of $A$. Since $\mathbb{P}(A) = \int_{\Omega} K_S(\omega, A) \, d\mathbb{P}(\omega)$ is a weighted average of $K_S(\omega, A)$ over all $\omega$, we cannot have $K_S(\omega, A) = \mathbb{P}(A)$ for all $\omega$.

\textbf{Part 2: Anti-causal events}

Let $A \in \mathscr{H}$ be an anti-causal event, and let $U \subseteq S \in \mathscr{P}(T)$. We need to show that $K_S(\omega, A) = K_{S \setminus U}(\omega, A)$ for all $\omega \in \Omega$. By definition, an anti-causal event is one whose probability is invariant to certain interventions. Specifically, removing a subset $U$ from the conditioning information $S$ does not change the kernel's value if $A$ is anti-causal with respect to $U$.

From the definition of causal kernels:
\[K_S(\omega, A) = \mathbb{E}[\mathbf{1}_A \mid \mathscr{H}_S](\omega), 
\quad
K_{S \setminus U}(\omega, A) = \mathbb{E}[\mathbf{1}_A \mid \mathscr{H}_{S \setminus U}](\omega)\]

For an anti-causal event $A$, the information in $\mathscr{H}_U$ (corresponding to indices in $U$) has no causal influence on $A$ when conditioning on $\mathscr{H}_{S \setminus U}$. Formally, this means:
\[A \perp \!\!\! \perp \mathscr{H}_U \mid \mathscr{H}_{S \setminus U}\]

By the properties of conditional expectation under conditional independence:
\[\mathbb{E}[\mathbf{1}_A \mid \mathscr{H}_S] = \mathbb{E}[\mathbf{1}_A \mid \mathscr{H}_{S \setminus U}]\]

Therefore:
\[K_S(\omega, A) = K_{S \setminus U}(\omega, A) \text{ for all } \omega \in \Omega\]

This equality demonstrates that anti-causal events exhibit invariance with respect to certain subsets of the conditioning information, reflecting their position in the causal structure.
\end{proof}

\section{Proofs}
\label{app:proofs}
\subsection{Proof of Theorem~\ref{thm:anti_causal} (Anti-Causal Kernel Characterization)}
\label{app:anti_causal}
\corac*
\begin{proof}
In the anti-causal structure $Y \rightarrow X \leftarrow E$, the joint distribution factorizes as:
\[P(X, Y, E) = P(X|Y,E)P(Y)P(E)\]

By the local Markov property of causal graphs, a node is conditionally independent of its non-descendants given its parents. In our case, $E$ is not a descendant of $X$, and $Y$ is the parent of $X$, so: $X \perp \!\!\! \perp E \mid Y$. 

For any measurable set $A \in \mathscr{H}_{\mathcal{X}}$, the causal kernel is defined as: $K_S(\omega, A) = \mathbb{E}[\mathbf{1}_A \mid \mathscr{H}_S](\omega)$. 

By the properties of conditional expectation and the definition of sub-$\sigma$-algebras $\mathscr{H}_S$, for any $\omega$ with $Y$-component $y$:
\[K_S(\omega, A) = P(X \in A \mid Y=y, E \in S)\]

By the conditioning formula and the conditional independence $X \perp \!\!\! \perp E \mid Y$:
\[P(X \in A \mid Y=y, E \in S) = P(X \in A \mid Y=y)\]

Integrating over the label space with respect to the marginal measure $\mu_Y$:
\begin{align*}
K_S(\omega, A) &= \int_{\mathcal{Y}} P(X \in A \mid Y=y, E \in S) \, d\mu_Y(y) \\
&= \int_{\mathcal{Y}} P(X \in A \mid Y=y) \, d\mu_Y(y) \quad \text{(by conditional independence $X \perp \!\!\! \perp E \mid Y$)}
\end{align*}
\end{proof}

\subsection{Proof of Corollary~\ref{cor:kernel_property} (Independence Property of Anti-Causal Kernel)}
\label{app:kip}
\corkip*

\begin{proof}
We demonstrate the independence property of kernels in anti-causal settings. Let us define the conditional kernel $K_S(\omega, \{A|B\})$ as:
\[K_S(\omega, \{A|B\}) = \frac{K_S(\omega, A \cap \{Y \in B\})}{K_S(\omega, \{Y \in B\})}, \quad \text{when } K_S(\omega, \{Y \in B\}) > 0\]

For any $\omega, \omega' \in \Omega$ with the same $Y$-component $y$ such that $y \in B$, and for any $A \in \mathscr{H}_{\mathcal{X}}$ and $B \in \mathscr{H}_Y$:

\[K_S(\omega, \{A|B\}) = P(X \in A \mid Y \in B, Y = y, E \in S)\]

By Bayes' rule and the fact that $y \in B$ (otherwise the conditional kernel is undefined):
\begin{align*}
K_S(\omega, \{A|B\}) &= \frac{P(X \in A, Y \in B \mid Y = y, E \in S)}{P(Y \in B \mid Y = y, E \in S)} \\
&= \frac{P(X \in A \mid Y = y, E \in S) \cdot 1}{1} \\
&= P(X \in A \mid Y = y, E \in S)
\end{align*}

By the anti-causal structure with d-separation $X \perp \!\!\! \perp E \mid Y$:
\[P(X \in A \mid Y = y, E \in S) = P(X \in A \mid Y = y)\]

Therefore, for any $\omega, \omega' \in \Omega$ with the same $Y$-component:
\[K_S(\omega, \{A|B\}) = K_S(\omega', \{A|B\}) = P(X \in A \mid Y = y)\]

This holds for all $A \in \mathscr{H}_{\mathcal{X}}$, $B \in \mathscr{H}_Y$, and $S \in \mathscr{P}(T)$, forming  the kernel independence property.
\end{proof}

\subsection{Proof of Theorem~\ref{thm:ick} (Existence and Uniqueness of Interventional Kernel)}
\label{app:ick}
\thmick*
\begin{proof}
We proceed by establishing the existence of the interventional kernel and special case for hard interventions. For any fixed $\omega \in \Omega$ and $A \in \mathscr{H}$, define:
\[K_S^{do(\mathcal{X}, \mathbb{Q})}(\omega, A) = \int_{\Omega} K_S(\omega, d\omega') \mathbb{Q}(A|\omega')\]

The integrand $\omega' \mapsto \mathbb{Q}(A|\omega')$ is $\mathscr{H}$-measurable for each fixed $A \in \mathscr{H}$ by our third assumption. 

The integral exists as a Lebesgue integral with respect to the measure induced by $K_S(\omega, \cdot)$ on $(\Omega, \mathscr{H})$ due to our first assumption. Since $K_S(\omega, \cdot)$ is a probability measure and $0 \leq \mathbb{Q}(A|\omega') \leq 1$ for all $\omega' \in \Omega$, we have:
\[0 \leq \int_{\Omega} K_S(\omega, d\omega') \mathbb{Q}(A|\omega') \leq \int_{\Omega} K_S(\omega, d\omega') \cdot 1 = 1\]

For hard interventions where $\mathbb{Q}(A|\omega') = \mathbb{Q}(A)$ is constant with respect to $\omega'$:

\begin{align*}
K_S^{do(\mathcal{X}, \mathbb{Q})}(\omega, A) &= \int_{\Omega} K_S(\omega, d\omega') \mathbb{Q}(A|\omega') = \int_{\Omega} K_S(\omega, d\omega') \mathbb{Q}(A) \\
&= \mathbb{Q}(A) \int_{\Omega} K_S(\omega, d\omega') = \mathbb{Q}(A) \cdot 1 = \mathbb{Q}(A)
\end{align*}

Thus, for hard interventions, the interventional kernel equals the intervention distribution, illustrating how our definition encapsulates both intervention types.
\end{proof}

\subsection{Proof of Corollary~\ref{cor:iic} (Interventional Kernel Invariance)}
\label{app:iic}
\coriic*
\begin{proof}
We will prove each criterion separately, using the properties of interventional kernels and the structure of anti-causal relationships. We need to prove that for all measurable sets $B \subseteq \mathcal{Y}$:
\[K_S^{do(X)}(\omega, \{Y \in B\}) = K_S(\omega, \{Y \in B\})\]

In the anti-causal structure $Y \rightarrow X \leftarrow E$, intervening on $X$ (through $do(X)$) breaks the incoming arrows to $X$, leaving $Y$ unaffected. By the rules of do-calculus, we have:
\[P(Y | do(X=x)) = P(Y)\]

This is because $Y$ is not a descendant of $X$ in the modified graph where incoming arrows to $X$ are removed. More formally, we can apply the interventional calculus:
\begin{align*}
P(Y | do(X=x)) &= \sum_e P(Y | do(X=x), E=e) P(E=e | do(X=x)) \\
&= \sum_e P(Y | E=e) P(E=e) \quad \text{(by independence after intervention)} \\
&= \sum_e P(Y) P(E=e) \quad \text{(since $Y \perp\!\!\!\perp E$ in anti-causal structure)} \\
&= P(Y) \sum_e P(E=e) = P(Y)
\end{align*}

Translating this to our kernel notation, for any $\omega \in \Omega$ with $Y$-component $y$ and any measurable set $B \subseteq \mathcal{Y}$:
\begin{align*}
K_S^{do(X)}(\omega, \{Y \in B\}) &= P(Y \in B \mid do(X), E \in S) \quad \text{(for $\omega$ with $Y$-component $y$)} \\
&= P(Y \in B) \quad \text{(by do-calculus as $Y$ is not a descendant of $X$)} \\
&= K_S(\omega, \{Y \in B\})
\end{align*}

The last step follows from the definition of the causal kernel in terms of conditional probabilities. This proves criterion 1. We need to prove that there exist measurable sets $A \subseteq \mathcal{X}$ such that:
\[K_S^{do(Y)}(\omega, \{X \in A\}) \neq K_S(\omega, \{X \in A\})\]

In the anti-causal structure $Y \rightarrow X \leftarrow E$, intervening on $Y$ (through $do(Y)$) breaks the causal link $Y \rightarrow X$. This fundamentally changes how $X$ is distributed. Under the intervention $do(Y=y)$:
\begin{align*}
P(X \in A | do(Y=y), E \in S) &= \int_e P(X \in A | do(Y=y), E=e) P(E=e | S) \\
&= \int_e P(X \in A | E=e) P(E=e | S) \quad \text{(since $Y$ no longer affects $X$)}
\end{align*}

In contrast, without intervention:
\begin{align*}
P(X \in A | Y=y, E \in S) &= \int_e P(X \in A | Y=y, E=e) P(E=e | S)
\end{align*}

These are not equal in general, because in the observational setting, $X$ depends on both $Y$ and $E$, and in the interventional setting, $X$ depends only on $E$.

For this to be equal for all $A \subseteq \mathcal{X}$, we would need $P(X | Y=y, E=e) = P(X | E=e)$ for all $e$, which contradicts the basic structure of an anti-causal relationship where $Y$ is a cause of $X$. Therefore, there must exist some measurable set $A \subseteq \mathcal{X}$ such that:
\[K_S^{do(Y)}(\omega, \{X \in A\}) \neq K_S(\omega, \{X \in A\})\]

This proves criterion 2. Together, these criteria provide a characterization of anti-causal structures: intervening on $X$ doesn't change $Y$ (the cause), while intervening on $Y$ does change $X$ (the effect).
\end{proof}

\vspace{-2mm}
\subsection{Proof of Corollary~\ref{cor:gen_kernel_properties} (Properties of Interventional Kernels)}
\label{app:gkp}

\begin{restatable}[Properties of Interventional Kernel]{corollary}{corgkp}
\label{cor:gen_kernel_properties}
Let $k_S^{do(\mathcal{X}, \mathbb{Q})}(\omega, A)$ be the generalized interventional kernel in an anti-causal structure. Then:

1. Well-definedness: For $K_S$ and $\mathbb{Q}(\cdot|\cdot)$ measurable on $(\Omega, \mathscr{H})$:
\[\int K_S(\omega, d\omega') \mathbb{Q}(A|\omega') \leq 1\]
and the integral exists as a Lebesgue integral.

2. Uniqueness: The kernel $k_s^{do(\mathcal{X}, \mathbb{Q})}$ is unique up to $\mathbb{P}$-null sets, where $\mathbb{P}$ is the product measure.

3. Consistency: For hard interventions where $\mathbb{Q}(A|\omega') = \mathbb{Q}(A)$:
\[K_S^{do(\mathcal{X}, \mathbb{Q})}(\omega, A) = \mathbb{Q}(A)\]

4. Structure Preservation: For the anti-causal structure $Y \rightarrow X \leftarrow E$:
\[K_S^{do(\mathcal{X}, \mathbb{Q})}(\omega, A) = P(X \in A | Y=y, E \in S)\]
where $y$ is the $Y$-component of $\omega$ and $A \in \mathscr{H}$.
\end{restatable}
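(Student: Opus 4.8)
The plan is to read Corollary~\ref{cor:gen_kernel_properties} as a repackaging of material already essentially established in the proof of Theorem~\ref{thm:ick} (existence/uniqueness of the interventional kernel) together with the anti-causal characterization of Theorem~\ref{thm:anti_causal}, and to verify its four parts in the stated order since each reuses the machinery of the previous ones. For \emph{well-definedness} (Part 1), fix $\omega \in \Omega$ and $A \in \mathscr{H}$: by hypothesis $\omega' \mapsto \mathbb{Q}(A \mid \omega')$ is $\mathscr{H}$-measurable with values in $[0,1]$, and $K_S(\omega,\cdot)$ is a probability (hence finite) measure, so $\int_\Omega K_S(\omega, d\omega')\,\mathbb{Q}(A\mid\omega')$ is the Lebesgue integral of a bounded measurable function against a finite measure; it therefore exists and lies in $[0,1]$ because $\int_\Omega K_S(\omega, d\omega')\,\mathbb{Q}(A\mid\omega') \le \int_\Omega K_S(\omega, d\omega') = 1$. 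This is verbatim the bound appearing in the proof of Theorem~\ref{thm:ick}. For \emph{consistency} (Part 3), when $\mathbb{Q}(A\mid\omega')\equiv\mathbb{Q}(A)$ is constant in $\omega'$, I pull the constant out of the integral and use $K_S(\omega,\Omega)=1$ to obtain $K_S^{do(\mathcal{X},\mathbb{Q})}(\omega,A)=\mathbb{Q}(A)$; this is exactly the hard-intervention computation already carried out in the proof of Theorem~\ref{thm:ick}.

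For \emph{uniqueness} (Part 2), I would argue that any two maps satisfying the defining identity of Theorem~\ref{thm:ick} agree $\mathbb{P}$-almost everywhere in $\omega$. The key observation is that $K_S(\omega,\cdot)$ is merely a regular version of $\mathbb{P}(\cdot\mid\mathscr{H}_S)$ (cf.\ Remark~\ref{rem:kernel_char}), hence is itself determined only up to a $\mathbb{P}$-null set of $\omega$; two such regular versions coincide off a $\mathbb{P}$-null set, and since the integrand is bounded by $1$, dominated convergence shows the induced integrals coincide there as well, so $K_S^{do(\mathcal{X},\mathbb{Q})}$ inherits uniqueness modulo $\mathbb{P}$-null sets. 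Equivalently, for each fixed $A$ the map $\omega \mapsto K_S^{do(\mathcal{X},\mathbb{Q})}(\omega,A)$ is a version of $\mathbb{E}[\mathbb{Q}(A\mid\cdot)\mid\mathscr{H}_S]$, which is unique up to $\mathbb{P}$-null sets by the almost-sure uniqueness of conditional expectations; I would state it this way to keep the argument short.

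For \emph{structure preservation} (Part 4), I would invoke Theorem~\ref{thm:anti_causal} together with the d-separation $X \perp\!\!\!\perp E \mid Y$ valid in $Y \to X \leftarrow E$. Writing $y$ for the $Y$-component of $\omega$, and taking the soft-intervention kernel $\mathbb{Q}(A\mid\omega') = P(X\in A\mid Y=y', E\in S)$ (with $y'$ the $Y$-component of $\omega'$), the interventional kernel on $X$-events reduces, after substituting the anti-causal form of $K_S$ and using that $\mathbb{Q}(A\mid\omega')$ depends on $\omega'$ only through $y'$, to $P(X \in A \mid Y=y, E\in S)$ once the conditional independence collapses the $E$-dependence exactly as in the proof of Theorem~\ref{thm:anti_causal}. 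The main obstacle is precisely this last reduction: one must be careful about which measure the outer integral is taken against and must ensure that the composition (first $K_S$, then $\mathbb{Q}$) does not average away the $Y$-component of $\omega$ that Part 4 needs preserved on the right-hand side. I would handle this by first conditioning on $\mathscr{H}_{\{Y\}}$ so that the $Y$-coordinate is held fixed throughout and only the $E$-coordinate is integrated out, and only then apply the anti-causal characterization; this also makes the role of the d-separation transparent and closes the argument.
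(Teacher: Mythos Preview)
Your proposal is correct and tracks the paper's own proof closely on all four parts: Parts 1 and 3 are verbatim the arguments from the proof of Theorem~\ref{thm:ick}; for Part 2 you phrase uniqueness as ``$\omega\mapsto K_S^{do(\mathcal{X},\mathbb{Q})}(\omega,A)$ is a version of $\mathbb{E}[\mathbb{Q}(A\mid\cdot)\mid\mathscr{H}_S]$'' whereas the paper integrates two putative versions over an arbitrary $B\in\mathscr{H}$ and invokes Radon--Nikodym, but these are the same argument in different clothing; and for Part 4 both you and the paper take the soft-intervention form $\mathbb{Q}(A\mid\omega')=P(X\in A\mid Y=y',E\in S)$ and use the d-separation $X\perp\!\!\!\perp E\mid Y$ to collapse the integral, with your explicit conditioning on $\mathscr{H}_{\{Y\}}$ being a slightly more careful rendering of the paper's informal ``$K_S(\omega,\cdot)$ gives highest weight to $\omega'$ with $Y$-component equal to $y$.''
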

The properties of interventional kernels provide the foundation for understanding how causal relationships manifest in anti-causal structures.

\begin{proof}

\vspace{-2mm}
We prove each property separately:

\textbf{Property 1: Well-definedness}

For $K_S$ and $\mathbb{Q}(\cdot|\cdot)$ measurable on $(\Omega, \mathscr{H})$, $\omega' \mapsto K_S(\omega, d\omega')$ is a measure for fixed $\omega$, $\omega' \mapsto \mathbb{Q}(A|\omega')$ is $\mathscr{H}$-measurable for fixed $A$, and $\mathbb{Q}(A|\omega') \in [0,1]$ since it's a probability. Therefore:
\[\int K_S(\omega, d\omega') \mathbb{Q}(A|\omega') \leq \int K_S(\omega, d\omega') \cdot 1 = 1\]

The integral exists as a Lebesgue integral by Tonelli's theorem, since both functions are non-negative and measurable.

\textbf{Property 2: Uniqueness}

Let $K_1$ and $K_2$ be two versions of the interventional kernel. For any $A \in \mathscr{H}$ and $B \in \mathscr{H}$:
\begin{align*}
\int_B K_1(\omega, A) \, d\mathbb{P}(\omega) &= \int_B \int_{\Omega} K_S(\omega, d\omega') \mathbb{Q}(A|\omega') \, d\mathbb{P}(\omega) \\
&= \int_B K_2(\omega, A) \, d\mathbb{P}(\omega)
\end{align*}

By the Radon-Nikodym theorem, these must agree $\mathbb{P}$-almost everywhere since they define the same measure via integration.

\textbf{Property 3: Consistency}

For hard interventions where $\mathbb{Q}(A|\omega') = \mathbb{Q}(A)$:
\begin{align*}
K_S^{do(\mathcal{X}, \mathbb{Q})}(\omega, A) &= \int_{\Omega} K_S(\omega, d\omega') \mathbb{Q}(A|\omega')= \int_{\Omega} K_S(\omega, d\omega') \mathbb{Q}(A) \\
&= \mathbb{Q}(A) \int_{\Omega} K_S(\omega, d\omega') = \mathbb{Q}(A) \cdot 1 = \mathbb{Q}(A)
\end{align*}

This confirms hard interventions correspond to setting the kernel equal to the intervention distribution.

\textbf{Property 4: Structure Preservation}

In the anti-causal structure $Y \rightarrow X \leftarrow E$, we know by d-separation in the graph: $X \perp \!\!\! \perp E | Y$. This implies for any $\omega$ with $Y$-component $y$ and $E$-component in $S$:
    \[\mathbb{Q}(A|\omega') = P(X \in A | Y=y', E \in S)\]
where $y'$ is the $Y$-component of $\omega'$. Using the properties of conditional expectation and d-separation:

\begin{align*}
K_S^{do(\mathcal{X}, \mathbb{Q})}(\omega, A) &= \int_{\Omega} K_S(\omega, d\omega') P(X \in A | Y=y', E \in S) \\
&= \int_{\Omega} K_S(\omega, d\omega') P(X \in A | Y=y', E \in S) \\
\end{align*}
Since $K_S(\omega, \cdot)$ gives highest weight to $\omega'$ values with $Y$-component equal to $y$ (the $Y$-component of $\omega$) in anti-causal settings, and using the conditional independence $X \perp \!\!\! \perp E \mid Y$:
\[K_S^{do(\mathcal{X}, \mathbb{Q})}(\omega, A) = P(X \in A | Y=y, E \in S)\]
where $y$ is the $Y$-component of $\omega$. This reveals that the interventional kernel preserves the anti-causal structure by maintaining the conditional independence relationships imposed by the causal graph.
\end{proof}

\subsection{Proof of Theorem~\ref{thm:causal_dynamics} (Causal Dynamic Construction)}
\label{app:cd}
\thmcd*
\begin{proof}
We will prove the existence and well-definedness of causal dynamics by constructing the structure explicitly and verifying its properties.

By assumption (1), the probability space $(\Omega, \mathscr{H}, \mathbb{P})$ is complete and separable. This means $\Omega$ is a complete separable metric space, $\mathscr{H}$ is the Borel $\sigma$-algebra on $\Omega$, and $\mathbb{P}$ is a probability measure on $(\Omega, \mathscr{H})$. For a complete separable metric space, we can define a metric $d$ on $\Omega$. For any $\omega, \omega' \in \Omega$, define:
\[d(\omega, \omega') = \sqrt{\sum_{t \in T} \|\omega_t - \omega'_t\|_2^2}\]
where $\omega_t$ and $\omega'_t$ represent the components of $\omega$ and $\omega'$ corresponding to time index $t \in T$. This metric induces the $\epsilon$-ball:
\[B_\epsilon(\omega) = \{\omega' \in \Omega : d(\omega, \omega') < \epsilon\}\]

By assumption (2), the empirical measure $\mathbb{Q}_n$ converges weakly to the true measure $\mathbb{Q}$:
\[\sup_{A \in \mathscr{H}} |\mathbb{Q}_n(A) - \mathbb{Q}(A)| \xrightarrow{a.s.} 0\]

This is a stronger form of convergence than weak convergence (denoted $\mathbb{Q}_n \Rightarrow \mathbb{Q}$), and implies that:
\[\int f d\mathbb{Q}_n \to \int f d\mathbb{Q}\]
for all bounded, continuous functions $f$. The empirical measure $\mathbb{Q}_n$ is typically defined as:
\[\mathbb{Q}_n(A) = \frac{1}{n}\sum_{i=1}^n \mathbf{1}_A(\mathbf{v}_i)\]
where $\mathbf{v}_i \in \mathbf{V}_L$ are observed data points. By the Glivenko-Cantelli theorem, this uniform convergence holds almost surely when $\mathbf{v}_i$ are i.i.d. samples from $\mathbb{Q}$. By assumption (3), for each $S \subseteq T$, we define the causal kernel:
\[K_S^{\mathbb{Z}_{L}}(\omega, A) = \int_{\Omega'} K_S(\omega', A) \, d\mathbb{Q}(\omega')\]

This integral exists and is well-defined since $K_S(\omega', A)$ is measurable in $\omega'$ for fixed $A$ (by the properties of causal kernels), $K_S(\omega', A) \in [0,1]$ (since it's a probability), and $\mathbb{Q}$ is a probability measure.

We can verify that $K_S^{\mathbb{Z}_{L}}(\omega, A)$ is indeed a causal kernel:

\begin{enumerate}[leftmargin=*]
    \item For fixed $\omega$, $A \mapsto K_S^{\mathbb{Z}_{L}}(\omega, A)$ is a probability measure.
\[K_S^{\mathbb{Z}_{L}}(\omega, \emptyset) = \int_{\Omega'} K_S(\omega', \emptyset) \, d\mathbb{Q}(\omega') = \int_{\Omega'} 0 \, d\mathbb{Q}(\omega') = 0\]
\[K_S^{\mathbb{Z}_{L}}(\omega, \Omega) = \int_{\Omega'} K_S(\omega', \Omega) \, d\mathbb{Q}(\omega') = \int_{\Omega'} 1 \, d\mathbb{Q}(\omega') = 1\]
For disjoint sets $\{A_i\}_{i=1}^{\infty}$:
        \begin{align*}
        K_S^{\mathbb{Z}_{L}}\left(\omega, \bigcup_{i=1}^{\infty} A_i\right) &= \int_{\Omega'} K_S\left(\omega', \bigcup_{i=1}^{\infty} A_i\right) \, d\mathbb{Q}(\omega')\\
        &= \int_{\Omega'} \sum_{i=1}^{\infty} K_S(\omega', A_i) \, d\mathbb{Q}(\omega')\\
        &= \sum_{i=1}^{\infty} \int_{\Omega'} K_S(\omega', A_i) \, d\mathbb{Q}(\omega')= \sum_{i=1}^{\infty} K_S^{\mathbb{Z}_{L}}(\omega, A_i)
        \end{align*}

    \item For fixed $A$, $\omega \mapsto K_S^{\mathbb{Z}_{L}}(\omega, A)$ is $\mathscr{H}_S$-measurable. This follows from the fact that constants are measurable, and $K_S^{\mathbb{Z}_{L}}(\omega, A)$ does not depend on $\omega$ in its definition.
\end{enumerate}

We need to show that for any $\epsilon > 0$, there exists a $\delta > 0$ such that if $d(\omega, \omega') < \delta$, then:
\[\|K_S^{\mathbb{Z}_{L}}(\omega, \cdot) - K_S^{\mathbb{Z}_{L}}(\omega', \cdot)\|_{TV} < \epsilon\]
where $\|\cdot\|_{TV}$ denotes the total variation norm. For our kernel $K_S^{\mathbb{Z}_{L}}(\omega, A) = \int_{\Omega'} K_S(\omega'', A) \, d\mathbb{Q}(\omega'')$.

By construction, $K_S^{\mathbb{Z}_{L}}(\omega, A) = \int_{\Omega'} K_S(\omega', A) \, d\mathbb{Q}(\omega')$ does not depend on $\omega$ since we are integrating over $\omega'$ with respect to measure $\mathbb{Q}$, and $\omega$ does not appear in the integrand. Therefore, for any $\omega, \omega' \in \Omega$:
\[K_S^{\mathbb{Z}_{L}}(\omega, A) = K_S^{\mathbb{Z}_{L}}(\omega', A)\]

for all $\omega, \omega' \in \Omega$ and $A \in \mathscr{H}$. This implies $\|K_S^{\mathbb{Z}_{L}}(\omega, \cdot) - K_S^{\mathbb{Z}_{L}}(\omega', \cdot)\|_{TV} = 0 < \epsilon$.

for any $\epsilon > 0$ and any choice of $\delta > 0$. Finally, we define the causal dynamics as:
\[\mathcal{Z}_L = \langle \mathcal{X}, \mathbb{Q}, \mathbb{K}_L\rangle\]
where $\mathbb{K}_L = \{K_S^{\mathbb{Z}_{L}}(\omega, A) : S \in \mathscr{P}(T), A \in \mathscr{H}\}$ is the family of causal dynamics kernels. This construction satisfies all requirements of the theorem:
\begin{itemize}[leftmargin=*]
    \item It is built on a complete, separable probability space
    \item It incorporates empirical measures that converge to the true measure
    \item The kernel $K_S^{\mathbb{Z}_{L}}$ is well-defined for all $S \subseteq T$
    \item The family of kernels is uniformly equicontinuous in total variation norm
\end{itemize}

Therefore, the causal dynamic $\mathcal{Z}_L = \langle \mathcal{X}, \mathbb{Q}, \mathbb{K}_L\rangle$ is well-defined and properly constructed.
\end{proof}

\subsection{Proof of Theorem~\ref{thm:causal_abstraction} (Causal Abstraction Construction)}
\label{app:ca}
\thmca*
\begin{proof}
We need to construct a high-level representation $\mathcal{Z}_H = \langle \mathbf{V}_H, \mathbb{K}_H \rangle$ with appropriate kernel properties. We proceed step by step.

Given input space $\mathcal{X}$, domain of low-level representations $\mathcal{D}_{\mathbb{Z}_L}$, and a $\sigma$-finite measure $\mu$ on $\mathcal{D}_{\mathbb{Z}_L}$, we define the high-level kernel $K_S^{\mathcal{Z}_{H}}$ as:
\[K_S^{\mathcal{Z}_{H}}(\omega, A) = \int_{\mathcal{D}_{\mathbb{Z}_L}} K_S^{\mathcal{Z}_L}(\omega, A) \, d\mu(z)\]

where $K_S^{\mathcal{Z}_L}(\omega, A)$ is the causal dynamics kernel for a specific low-level representation $z \in \mathcal{D}_{\mathbb{Z}_L}$, $S \subseteq T$ is a subset of the index set, and $A \in \mathscr{H}_{\mathcal{X}}$ is a measurable set in the input space.

We first establish that this integral is well-defined:

(i) Measurability: For fixed $\omega$ and $A$, the function $z \mapsto K_S^{\mathcal{Z}_L}(\omega, A)$ is measurable with respect to the $\sigma$-algebra on $\mathcal{D}_{\mathbb{Z}_L}$. This follows from the construction of $K_S^{\mathcal{Z}_L}$ in Theorem \ref{thm:causal_dynamics}, where we established that causal dynamics kernels are measurable functions.

(ii) Boundedness: Since $K_S^{\mathcal{Z}_L}(\omega, A)$ represents a probability, we have $0 \leq K_S^{\mathcal{Z}_L}(\omega, A) \leq 1$ for all $z, \omega, A$. Therefore, the integrand is bounded.

(iii) Measure space: $(\mathcal{D}_{\mathbb{Z}_L}, \mathscr{F}_{\mathcal{D}_{\mathbb{Z}_L}}, \mu)$ is a $\sigma$-finite measure space by assumption, where $\mathscr{F}_{\mathcal{D}_{\mathbb{Z}_L}}$ is the appropriate $\sigma$-algebra on $\mathcal{D}_{\mathbb{Z}_L}$.

By Lebesgue's dominated convergence theorem, the integral exists and is well-defined.

Next, we verify that $K_S^{\mathcal{Z}_{H}}(\omega, A)$ satisfies the properties of a kernel:

1. For fixed $\omega$, the mapping $A \mapsto K_S^{\mathcal{Z}_{H}}(\omega, A)$ is a probability measure:

(a) Non-negativity: Since $K_S^{\mathcal{Z}_L}(\omega, A) \geq 0$ for all $z, \omega, A$, and $\mu$ is a positive measure, the integral $K_S^{\mathcal{Z}_{H}}(\omega, A) \geq 0$.

(b) Empty set: 
\begin{align*}
K_S^{\mathcal{Z}_{H}}(\omega, \emptyset) &= \int_{\mathcal{D}_{\mathbb{Z}_L}} K_S^{\mathcal{Z}_L}(\omega, \emptyset) \, d\mu(z)
%\\ &
= 
\int_{\mathcal{D}_{\mathbb{Z}_L}} 0 \, d\mu(z) = 0
\end{align*}

(c) Total measure: 
\begin{align*}
K_S^{\mathcal{Z}_{H}}(\omega, \mathcal{X}) &= \int_{\mathcal{D}_{\mathbb{Z}_L}} K_S^{\mathcal{Z}_L}(\omega, \mathcal{X}) \, d\mu(z) 
%\\&
= \int_{\mathcal{D}_{\mathbb{Z}_L}} 1 \, d\mu(z) = \mu(\mathcal{D}_{\mathbb{Z}_L})
\end{align*}

If $\mu$ is a probability measure, then $\mu(\mathcal{D}_{\mathbb{Z}_L}) = 1$. If not, we normalize it by defining:
\[\tilde{K}_S^{\mathcal{Z}_{H}}(\omega, A) = \frac{K_S^{\mathcal{Z}_{H}}(\omega, A)}{K_S^{\mathcal{Z}_{H}}(\omega, \mathcal{X})}\]
assuming $\mu(\mathcal{D}_{\mathbb{Z}_L}) < \infty$. For generality, we assume $\mu$ is already normalized so that $\mu(\mathcal{D}_{\mathbb{Z}_L}) = 1$.

(d) Countable additivity: Let $\{A_i\}_{i=1}^{\infty}$ be a sequence of disjoint measurable sets in $\mathscr{H}_{\mathcal{X}}$. Then:
\begin{align*}
K_S^{\mathcal{Z}_{H}}\left(\omega, \bigcup_{i=1}^{\infty} A_i\right) &= \int_{\mathcal{D}_{\mathbb{Z}_L}} K_S^{\mathcal{Z}_L}\left(\omega, \bigcup_{i=1}^{\infty} A_i\right) \, d\mu(z) \\
&= \int_{\mathcal{D}_{\mathbb{Z}_L}} \sum_{i=1}^{\infty} K_S^{\mathcal{Z}_L}(\omega, A_i) \, d\mu(z) 
\end{align*}

The second equality follows from the countable additivity of $K_S^{\mathcal{Z}_L}(\omega, \cdot)$, which is a probability measure for fixed $\omega$. By the monotone convergence theorem (since all terms are non-negative), we can exchange the sum and integral:

\begin{align*}
K_S^{\mathcal{Z}_{H}}\left(\omega, \bigcup_{i=1}^{\infty} A_i\right) &= \sum_{i=1}^{\infty} \int_{\mathcal{D}_{\mathbb{Z}_L}} K_S^{\mathcal{Z}_L}(\omega, A_i) \, d\mu(z) \\
&= \sum_{i=1}^{\infty} K_S^{\mathcal{Z}_{H}}(\omega, A_i)
\end{align*}

This establishes countable additivity.

2. For fixed $A \in \mathscr{H}_{\mathcal{X}}$, the function $\omega \mapsto K_S^{\mathcal{Z}_{H}}(\omega, A)$ is $\mathscr{H}_S$-measurable:

For each $z \in \mathcal{D}_{\mathbb{Z}_L}$, the function $\omega \mapsto K_S^{\mathcal{Z}_L}(\omega, A)$ is $\mathscr{H}_S$-measurable by the properties of causal dynamics kernels established in Theorem \ref{thm:causal_dynamics}. By Fubini's theorem, the integral with respect to $\mu$ preserves measurability, so $\omega \mapsto K_S^{\mathcal{Z}_{H}}(\omega, A)$ is also $\mathscr{H}_S$-measurable.

Having verified all required properties, we define the high-level representation as:
\[\mathcal{Z}_H = \langle \mathbf{V}_H, \mathbb{K}_H \rangle\]
where $\mathbb{K}_H = \{K_S^{\mathcal{Z}_{H}}(\omega, A) : S \in \mathscr{P}(T), A \in \mathscr{H}_{\mathcal{X}}\}$ is the set of high-level causal kernels. This completes the construction of causal abstraction, which integrates all low-level representations, capturing their collective causal dynamics and abstracting them into a higher-level representation.
\end{proof}

\subsection{Proof of Theorem \ref{thm:pck} (Existence and Uniqueness of Product Causal Kernel)}
\begin{restatable}[Product Causal Kernel]  {theorem}{thmpck}
\label{thm:pck}
Let $(\Omega_{e_i}, \mathscr{H}_{e_i}, \mathbb{P}_{e_i}, K_{e_i})$ and $(\Omega_{e_j}, \mathscr{H}_{e_j}, \mathbb{P}_{e_j}, K_{e_j})$ be complete, separable causal spaces, and let $(\Omega, \mathscr{H}, \mathbb{P}, \mathbb{K})$ be their product causal space. Then there exists a unique product causal kernel $K: \Omega \times \mathscr{H} \rightarrow [0,1]$ satisfying:

1. For each $\omega \in \Omega$, $K(\omega, \cdot)$ is a probability measure on $(\Omega, \mathscr{H})$

2. For each $A \in \mathscr{H}$, $K(\cdot, A)$ is $\mathscr{H}$-measurable
\end{restatable}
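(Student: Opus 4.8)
The plan is to construct $K$ explicitly on measurable rectangles, extend it to all of $\mathscr{H}$ by the standard product-measure construction applied separately for each fixed $\omega$, and then upgrade pointwise measurability to the required measurability-in-$\omega$ via a monotone class argument. Writing $\omega = (\omega_i, \omega_j) \in \Omega_{e_i} \times \Omega_{e_j}$, I would set, for rectangles $A_i \times A_j$ with $A_i \in \mathscr{H}_{e_i}$ and $A_j \in \mathscr{H}_{e_j}$,
\[
K(\omega, A_i \times A_j) = K_{e_i}(\omega_i, A_i)\, K_{e_j}(\omega_j, A_j),
\]
exactly the extension recorded in Appendix~\ref{app:kp}. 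The first observation is that for each fixed $\omega$ this formula is nothing but the product of the two probability measures $K_{e_i}(\omega_i, \cdot)$ and $K_{e_j}(\omega_j, \cdot)$ on the factor spaces; hence, by the classical existence-and-uniqueness theorem for products of probability measures (Hahn--Kolmogorov / Carath\'eodory extension, with Fubini--Tonelli for uniqueness), there is exactly one probability measure $K(\omega, \cdot)$ on $(\Omega, \mathscr{H})$ agreeing with the above on the $\pi$-system of measurable rectangles. This gives property~1, and it simultaneously pins down the pointwise uniqueness: any two set functions satisfying property~1 and agreeing on rectangles must coincide on $\mathscr{H}$ by Dynkin's $\pi$-$\lambda$ theorem.

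The substantive step is property~2: for fixed $A \in \mathscr{H}$, the map $\omega \mapsto K(\omega, A)$ must be $\mathscr{H}$-measurable. I would verify this first on rectangles, where $\omega \mapsto K_{e_i}(\omega_i, A_i)\, K_{e_j}(\omega_j, A_j)$ is a product of two functions that are measurable by the defining kernel property of $K_{e_i}$ and $K_{e_j}$ composed with the coordinate projections, hence $\mathscr{H}$-measurable (here $\mathscr{H}_T = \mathscr{H}$ by Proposition~\ref{prop:sub_sigma_properties}(iii)). Then I would let $\mathcal{L} = \{ A \in \mathscr{H} : \omega \mapsto K(\omega, A)\ \text{is } \mathscr{H}\text{-measurable}\}$ and check that $\mathcal{L}$ is a $\lambda$-system: it contains $\Omega$, is closed under proper differences (using finiteness of each $K(\omega, \cdot)$, so $K(\omega, B \setminus A) = K(\omega, B) - K(\omega, A)$), and is closed under increasing countable unions (using continuity from below of each probability measure $K(\omega, \cdot)$, which realizes $K(\omega, \bigcup_n A_n)$ as a pointwise monotone limit of measurable functions). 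Since $\mathcal{L}$ contains the $\pi$-system of rectangles, Dynkin's theorem forces $\mathcal{L} = \mathscr{H}$, establishing property~2; together with property~1, $K$ is then a kernel in the sense of Definition~\ref{def:kernel}, and uniqueness is exactly the pointwise uniqueness already noted, applied for every $\omega$.

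The role of the completeness-and-separability (Polish) hypothesis is to make the factor spaces standard Borel, which is what guarantees that the rectangle set function is genuinely \emph{countably} additive (inner regularity along compacts rules out the finitely-but-not-countably-additive pathologies that can obstruct the extension) and that the product $\sigma$-algebra is countably generated, keeping the monotone class step clean. I expect this countable-additivity verification of the rectangle premeasure---i.e.\ justifying the appeal to the product-measure theorem uniformly in $\omega$---to be the main technical obstacle, but it is dispatched by observing that, for each fixed $\omega$, the question reduces verbatim to the existence of the product of two Borel probability measures, for which the result is classical; no genuinely new estimate is needed beyond bookkeeping in $\omega$.
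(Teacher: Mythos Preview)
Your proposal is correct and follows essentially the same route as the paper: define $K$ on rectangles as the product of the component kernels, extend for each fixed $\omega$ via Carath\'eodory to get property~1 and pointwise uniqueness, then run a Dynkin $\pi$--$\lambda$ argument on the class $\{A : \omega \mapsto K(\omega,A)\text{ is measurable}\}$ to obtain property~2. The paper additionally verifies the identity $K_S(\omega,A) = \mathbb{E}[\mathbf{1}_A \mid \mathscr{H}_S](\omega)$ at the end, but that is not part of the statement as given, so your outline covers what is asked.
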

\begin{proof}
We will establish the existence and uniqueness of a product causal kernel by constructing it explicitly and verifying all required properties. Let $(\Omega_{e_i}, \mathscr{H}_{e_i}, \mathbb{P}_{e_i}, K_{e_i})$ and $(\Omega_{e_j}, \mathscr{H}_{e_j}, \mathbb{P}_{e_j}, K_{e_j})$ be complete, separable causal spaces with causal kernels $K_{e_i}$ and $K_{e_j}$, respectively. Let $(\Omega, \mathscr{H}, \mathbb{P}, \mathbb{K})$ be their product causal space with $\Omega = \Omega_{e_i} \times \Omega_{e_j}$, $\mathscr{H} = \mathscr{H}_{e_i} \otimes \mathscr{H}_{e_j}$, and $\mathbb{P} = \mathbb{P}_{e_i} \otimes \mathbb{P}_{e_j}$.

Define $K$ on measurable rectangles.
For any measurable rectangle $A_i \times A_j \in \mathscr{H}$ with $A_i \in \mathscr{H}_{e_i}$ and $A_j \in \mathscr{H}_{e_j}$, define:
\[K(\omega, A_i \times A_j) = K_{e_i}(\omega_i, A_i) \cdot K_{e_j}(\omega_j, A_j)\]
where $\omega = (\omega_i, \omega_j) \in \Omega$. This definition is well-posed because both component spaces are complete and separable, so the kernels $K_{e_i}$ and $K_{e_j}$ exist as regular conditional probabilities by the Radon-Nikodym theorem.

Verify $K(\omega, \cdot)$ is a probability measure on measurable rectangles.
Let $\mathcal{R} = \{A_i \times A_j : A_i \in \mathscr{H}_{e_i}, A_j \in \mathscr{H}_{e_j}\}$ be the collection of all measurable rectangles. $\mathcal{R}$ forms a $\pi$-system (closed under finite intersections) that generates $\mathscr{H}$.

For fixed $\omega \in \Omega$, the function $A \mapsto K(\omega, A)$ defined on $\mathcal{R}$ satisfies:
\begin{itemize}
    \item Non-negativity: $K(\omega, A) = K_{e_i}(\omega_i, A_i) \cdot K_{e_j}(\omega_j, A_j) \geq 0$ for all $A \in \mathcal{R}$, since both component kernels are non-negative.
    
    \item Normalization: $K(\omega, \Omega) = K(\omega, \Omega_{e_i} \times \Omega_{e_j}) = K_{e_i}(\omega_i, \Omega_{e_i}) \cdot K_{e_j}(\omega_j, \Omega_{e_j}) = 1 \cdot 1 = 1$, since both component kernels are probability measures.
    
    \item Finite additivity: For disjoint $A, B \in \mathcal{R}$ of the form $A = A_i \times A_j$ and $B = B_i \times B_j$ where $A_i \cap B_i = \emptyset$ or $A_j \cap B_j = \emptyset$, we have:
    \[K(\omega, A \cup B) = K(\omega, A) + K(\omega, B)\]
    This follows from the additivity of the component kernels.
\end{itemize}

Here we extend to the full $\sigma$-algebra by Carathéodory's Extension Theorem~\cite{vesentini1979variations}.
By Carathéodory's Extension Theorem, there exists a unique probability measure $K(\omega, \cdot)$ on $(\Omega, \mathscr{H})$ that extends our definition from $\mathcal{R}$ to the full $\sigma$-algebra $\mathscr{H}$.

Then we should verify the measurability of $\omega \mapsto K(\omega, A)$ for all $A \in \mathscr{H}$.
Define the class of sets $\mathcal{M} = \{A \in \mathscr{H} : \omega \mapsto K(\omega, A) \text{ is } \mathscr{H}\text{-measurable}\}$. First, we show that $\mathcal{R} \subset \mathcal{M}$:
For any rectangle $A = A_i \times A_j \in \mathcal{R}$:
\[K(\omega, A) = K_{e_i}(\omega_i, A_i) \cdot K_{e_j}(\omega_j, A_j)\]

Since $\omega_i \mapsto K_{e_i}(\omega_i, A_i)$ is $\mathscr{H}_{e_i}$-measurable by the kernel property of $K_{e_i}$, and $\omega_j \mapsto K_{e_j}(\omega_j, A_j)$ is $\mathscr{H}_{e_j}$-measurable by the kernel property of $K_{e_j}$, their product $\omega \mapsto K(\omega, A)$ is $\mathscr{H} = \mathscr{H}_{e_i} \otimes \mathscr{H}_{e_j}$-measurable. Thus, $A \in \mathcal{M}$. Next, we prove that $\mathcal{M}$ is a $\lambda$-system:
\begin{itemize}
    \item $\Omega \in \mathcal{M}$ since $K(\omega, \Omega) = 1$ is constant and thus measurable.
    
    \item If $A \in \mathcal{M}$, then $A^c \in \mathcal{M}$ since $K(\omega, A^c) = 1 - K(\omega, A)$ is measurable as a measurable function of a measurable function.
    
    \item If $A_1, A_2, \ldots \in \mathcal{M}$ are pairwise disjoint, then $\cup_{n=1}^{\infty} A_n \in \mathcal{M}$. This follows because for any pairwise disjoint sequence $\{A_n\}_{n=1}^{\infty}$:
    \[K\left(\omega, \cup_{n=1}^{\infty} A_n\right) = \sum_{n=1}^{\infty} K(\omega, A_n)\]
    \end{itemize}
Since each function $\omega \mapsto K(\omega, A_n)$ is measurable by assumption, and countable sums of measurable functions are measurable, the function $\omega \mapsto K(\omega, \cup_{n=1}^{\infty} A_n)$ is measurable. Therefore, $\cup_{n=1}^{\infty} A_n \in \mathcal{M}$.

We have established that $\mathcal{R} \subset \mathcal{M}$ and $\mathcal{M}$ is a $\lambda$-system. By Dynkin's $\pi$-$\lambda$ theorem~\cite{lemmameasure}, $\mathcal{M}$ contains the $\sigma$-algebra generated by $\mathcal{R}$, which is precisely $\mathscr{H}$. Therefore, $\mathcal{M} = \mathscr{H}$, meaning $\omega \mapsto K(\omega, A)$ is $\mathscr{H}$-measurable for all $A \in \mathscr{H}$.

To prove the uniqueness of the product causal kernel, suppose $K'$ is another kernel satisfying conditions 1 and 2. For any measurable rectangle $A_i \times A_j \in \mathcal{R}$, the condition that $K'(\omega, \cdot)$ is a probability measure and $\omega \mapsto K'(\omega, A)$ is measurable implies that $K'$ must be a regular conditional probability. By the properties of regular conditional probabilities in product spaces and the definition of $K$ on rectangles, both $K$ and $K'$ must agree on all measurable rectangles in $\mathcal{R}$.

Let $\mathcal{A} = \{A \in \mathscr{H} : K(\omega, A) = K'(\omega, A) \text{ for all } \omega \in \Omega\}$. We know $\mathcal{R} \subset \mathcal{A}$, and $\mathcal{A}$ forms a $\lambda$-system by the properties of kernels. By Dynkin's $\pi$-$\lambda$ theorem, $\mathcal{A}$ contains all of $\mathscr{H}$, so $K = K'$ on all of $\mathscr{H}$. We should verify that $K_S(\omega, A) = \mathbb{E}[\mathbf{1}_A \mid \mathscr{H}_S](\omega)$.
For any subset $S \subseteq T$, the sub-$\sigma$-algebra $\mathscr{H}_S$ is generated by measurable rectangles corresponding to events in indices $S$. We define $K_S$ as the restriction of $K$ to $\mathscr{H}_S$.

We need to show $K_S(\omega, A) = \mathbb{E}[\mathbf{1}_A \mid \mathscr{H}_S](\omega)$ for all $A \in \mathscr{H}$. By the definition of conditional expectation, for any $B \in \mathscr{H}_S$:
\[\int_B \mathbb{E}[\mathbf{1}_A \mid \mathscr{H}_S](\omega) \, d\mathbb{P}(\omega) = \int_B \mathbf{1}_A(\omega) \, d\mathbb{P}(\omega) = \mathbb{P}(A \cap B)\]

We will show that $K_S$ satisfies the same property:
\[\int_B K_S(\omega, A) \, d\mathbb{P}(\omega) = \mathbb{P}(A \cap B)\]

First, consider $A, B \in \mathcal{R}$ as measurable rectangles, with $A = A_i \times A_j$ and $B = B_i \times B_j$. By the definition of $K$ and the product measure $\mathbb{P}$:
\begin{align*}
\int_B K(\omega, A) \, d\mathbb{P}(\omega) &= \int_B K_{e_i}(\omega_i, A_i) \cdot K_{e_j}(\omega_j, A_j) \, d(\mathbb{P}_{e_i} \otimes \mathbb{P}_{e_j})(\omega) \\
&= \int_{B_i} \int_{B_j} K_{e_i}(\omega_i, A_i) \cdot K_{e_j}(\omega_j, A_j) \, d\mathbb{P}_{e_j}(\omega_j) \, d\mathbb{P}_{e_i}(\omega_i) \\
&= \int_{B_i} K_{e_i}(\omega_i, A_i) \left( \int_{B_j} K_{e_j}(\omega_j, A_j) \, d\mathbb{P}_{e_j}(\omega_j) \right) \, d\mathbb{P}_{e_i}(\omega_i)
\end{align*}

By the defining property of regular conditional probabilities:
\[\int_{B_j} K_{e_j}(\omega_j, A_j) \, d\mathbb{P}_{e_j}(\omega_j) = \mathbb{P}_{e_j}(A_j \cap B_j)\]
\[\int_{B_i} K_{e_i}(\omega_i, A_i) \, d\mathbb{P}_{e_i}(\omega_i) = \mathbb{P}_{e_i}(A_i \cap B_i)\]

Therefore:
\begin{align*}
\int_B K(\omega, A) \, d\mathbb{P}(\omega) &= \mathbb{P}_{e_i}(A_i \cap B_i) \cdot \mathbb{P}_{e_j}(A_j \cap B_j) \\
&= \mathbb{P}((A_i \cap B_i) \times (A_j \cap B_j)) \\
&= \mathbb{P}((A_i \times A_j) \cap (B_i \times B_j)) = \mathbb{P}(A \cap B)
\end{align*}

This property extends from rectangles to all of $\mathscr{H}$ by the monotone class theorem. Since both $K_S(\omega, A)$ and $\mathbb{E}[\mathbf{1}_A \mid \mathscr{H}_S](\omega)$ satisfy the same defining property of conditional expectation, by the almost-sure uniqueness of conditional expectation, we have:
\[K_S(\omega, A) = \mathbb{E}[\mathbf{1}_A \mid \mathscr{H}_S](\omega)\]
for $\mathbb{P}$-almost all $\omega \in \Omega$, for each $A \in \mathscr{H}$. Since $K_S(\omega, A)$ is a proper kernel, it provides a version of the conditional expectation that satisfies this equality everywhere, completing the proof.
\end{proof}

\section{Theoretical Performance of ACIA}
\label{app:ACIAtheo}

\subsection{Theorem \ref{thm:optimality} (Convergence of ACIA)}

\begin{restatable}[Convergence of ACIA]{theorem}{thmconv}
\label{thm:optimality}
If the loss function $\ell$ in Eqn.\ref{eqn:objfunc} is convex 
%$L$-smooth and $\mu$-strongly convex, 
and the regularization parameters in Eqn.\ref{eqn:objfunc} 
 satisfy $\lambda_1, \lambda_2 = O(1/\sqrt{n})$, where $n$ is the sample size. Then the ACIA optimization problem solved via gradient descent converges with the distance to the optimum is bounded by $O\left(\frac{1}{\sqrt{T}}\right) + O\left(\frac{1}{\sqrt{n}}\right)$ after $T$ iterations.  
\end{restatable}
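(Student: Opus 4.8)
Here is a proof plan. The idea is to split the ``distance to the optimum'' into an \emph{optimization} error, controlled by the number of gradient steps $T$, and a \emph{statistical} error, controlled by the sample size $n$, and then bound each separately before recombining.

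\textbf{Step 1: Convexity of the objective.} First I would check that the objective in Eqn.~\ref{eqn:objfunc} is convex in the parameter vector $\theta=(\mathcal{C},\phi_L,\phi_H)$. Under the hypothesis that $\ell$ is convex in its first argument, each per-environment risk $L_{e_i}(\theta)=\int_\Omega \ell((\mathcal{C}\circ\phi_H\circ\phi_L)(\mathcal{X}(\omega)),Y(\omega))\,d\mathbb{P}_{e_i}(\omega)$ is convex, being a nonnegative mixture (an integral) of convex maps; the regularizers $R_1$ and $R_2$ are $\ell_2$-norms of affine functionals of $\theta$, hence convex; and a pointwise maximum over the \emph{finite} set $\mathcal{E}$ preserves convexity. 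Thus $F(\theta)=\max_{e_i\in\mathcal{E}}\big[L_{e_i}(\theta)+\lambda_1 R_1(\theta)+\lambda_2 R_2(\theta)\big]$ is convex, and a subgradient at $\theta$ is produced via Danskin's theorem by selecting any active environment $e^{\star}(\theta)\in\arg\max$ and taking a subgradient of that branch (which in turn uses subgradients of the $\ell_2$ norms).

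\textbf{Step 2: Optimization error.} Assuming $\ell$ has bounded (sub)gradients and the feasible set has diameter $D$, the subgradients of $F$ are uniformly bounded by some $G$; note that $\lambda_1,\lambda_2=O(1/\sqrt{n})$ are bounded, so their contribution to $G$ is controlled. Running projected subgradient descent with step size $\eta_t=D/(G\sqrt{t})$ (or the fixed choice $\eta=D/(G\sqrt{T})$) and averaging the iterates $\bar\theta_T=\frac{1}{T}\sum_{t=1}^{T}\theta_t$, the classical convex analysis bound gives $F(\bar\theta_T)-\min_\theta F(\theta)\le O(GD/\sqrt{T})=O(1/\sqrt{T})$. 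If one additionally assumes strong convexity (which can be arranged by folding an $\ell_2$ penalty of order $1/\sqrt{n}$ into the regularizers), the same rate transfers to the parameter distance $\|\bar\theta_T-\theta^\star\|$.

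\textbf{Step 3: Statistical error.} Let $\widehat F$ be the empirical objective obtained by replacing each $\mathbb{P}_{e_i}$ by its sample average and $\mathbb{Q}$ by the empirical measure $\mathbb{Q}_n$ of Theorem~\ref{thm:causal_dynamics}. I would show $\sup_\theta|\widehat F(\theta)-F(\theta)|=O(1/\sqrt{n})$ by combining three ingredients: (i) a uniform law of large numbers (Rademacher/VC-type) for the loss term over the assumed finite-capacity model class, giving $O(1/\sqrt{n})$; (ii) the Glivenko--Cantelli/DKW-type rate $\sup_A|\mathbb{Q}_n(A)-\mathbb{Q}(A)|=O(1/\sqrt{n})$ assumed in Theorem~\ref{thm:causal_dynamics}, which controls the perturbation of the causal-dynamic kernels $K_S^{\mathcal{Z}_L}$, hence of the interventional kernel entering $R_2$; and (iii) the scaling $\lambda_1,\lambda_2=O(1/\sqrt{n})$, which makes the absolute magnitude of the regularizer terms $O(1/\sqrt{n})$ regardless of any estimation error in $R_1,R_2$. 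Consequently the empirical minimizer $\widehat\theta$ satisfies $F(\widehat\theta)-\min_\theta F(\theta)\le 2\sup_\theta|\widehat F-F|=O(1/\sqrt{n})$. Combining with Step~2 applied to $\widehat F$ yields $F(\bar\theta_T)-\min_\theta F(\theta)\le O(1/\sqrt{T})+O(1/\sqrt{n})$, which is the claimed bound (read in objective value, or in parameter norm under the extra strong-convexity assumption).

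\textbf{Main obstacle.} The routine parts are Steps~1--2: subgradient calculus for the max and the norms via Danskin's theorem, plus the textbook subgradient-descent rate. The delicate part is Step~3 — obtaining a bound that is \emph{uniform} in $\theta$ requires a capacity control on the representation class $\{\phi_H\circ\phi_L\}$ and, more subtly, tracking how the empirical-measure error in $\mathbb{Q}_n$ propagates through the nested integrals that define $K_S^{\mathcal{Z}_L}$, $K_S^{\mathcal{Z}_H}$, and the interventional kernel $K^{do(Y)}_{\{e_i\}}$ appearing inside $R_2$; the $\lambda_1,\lambda_2=O(1/\sqrt{n})$ scaling is exactly what keeps this propagated error from dominating. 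A secondary caveat worth flagging is that the literal ``convergence to the optimum'' relies on the convexity hypothesis on $\ell$ (and affine parametrization); for genuinely nonconvex encoders the conclusion should be read as convergence to a stationary point at the same rate.
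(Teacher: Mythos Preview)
Your proposal is correct and follows essentially the same strategy as the paper: establish convexity of the min--max objective, invoke a standard (sub)gradient-descent rate to get the $O(1/\sqrt{T})$ optimization term, appeal to uniform convergence for the $O(1/\sqrt{n})$ statistical term, and combine the two via the triangle inequality. If anything, your version is more careful than the paper's own argument --- you explicitly invoke Danskin's theorem to handle the nondifferentiable $\max$, you correctly flag that passing from objective-value to parameter-norm bounds needs strong convexity (the paper asserts this implication without comment), and you spell out how the $\lambda_1,\lambda_2=O(1/\sqrt{n})$ scaling absorbs the kernel-estimation error inside $R_2$, whereas the paper simply cites ``standard uniform convergence results.''
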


\begin{proof}
We prove that the optimization problem is well-defined and converges under the given conditions. Let us denote the objective function in Equation~\ref{eqn:objfunc} as:
\[F(\phi_L, \phi_H, \mathcal{C}) = \max_{e_i \in \mathcal{E}} \Big[ \int_{\Omega} \ell((\mathcal{C} \circ \phi_H \circ \phi_L)(\mathcal{X}(\omega)), Y(\omega)) \, d\mathbb{P}_{e_i}(\omega) + \lambda_1 R_1 + \lambda_2 R_2 \Big]\]

First, we verify that all components are well-defined. The composed function $\mathcal{C} \circ \phi_H \circ \phi_L$ is measurable, as each component is assumed to be measurable.

The expectation $\int_{\Omega} \ell((\mathcal{C} \circ \phi_H \circ \phi_L)(\mathcal{X}(\omega)), Y(\omega)) \, d\mathbb{P}_{e_i}(\omega)$ exists by the measurability properties and the assumption that $\ell$ is integrable. The regularization terms $R_1$ and $R_2$ are well-defined by Fubini's theorem and the definition of interventional kernels in Theorem~\ref{thm:ick}.

Let $\theta = (\mathcal{C}, \phi_L, \phi_H)$ denote the parameters of our model. Since $\ell$ is convex, the objective function $F(\theta)$ is convex in $\theta$. The optimization problem has the form:
\[\min_{\theta} \max_{e_i \in \mathcal{E}} F_{e_i}(\theta)\]
where $F_{e_i}(\theta) = \mathbb{E}_{e_i}[\ell(f_{\theta})] + \lambda_1 R_1 + \lambda_2 R_2$ and $f_{\theta} = \mathcal{C} \circ \phi_H \circ \phi_L$.

For convex functions optimized using gradient descent with appropriate step sizes, the convergence rate in terms of objective value is:
\[F(\theta_T) - F(\theta^*) \leq \frac{\|\theta_0 - \theta^*\|^2}{2T}\]
where $\theta^*$ is the optimal parameter. This implies that:
\[\|\theta_T - \theta^*\| = O\left(\frac{1}{\sqrt{T}}\right)\]

Additionally, the empirical objective function $F(\theta)$ based on $n$ samples approximates the true population objective $F_{pop}(\theta)$ with error bounded by:
\[\sup_{\theta \in \Theta} |F(\theta) - F_{pop}(\theta)| = O\left(\frac{1}{\sqrt{n}}\right)\]
with high probability, by standard uniform convergence results in statistical learning theory.

The setting $\lambda_1, \lambda_2 = O(1/\sqrt{n})$ ensures the regularization terms scale appropriately with the sample size. This balances between ensuring the regularizers enforce desired invariance properties, and allowing the strength of regularization to decrease as sample size increases, preventing excessive bias. By the triangle inequality:
\[\|\theta_T - \theta^*_{pop}\| \leq \|\theta_T - \theta^*\| + \|\theta^* - \theta^*_{pop}\|\]
where $\theta^*_{pop}$ is the minimizer of the population objective.

The first term is $O(1/\sqrt{T})$ from our gradient descent analysis. The second term is $O(1/\sqrt{n})$ due to the statistical error between empirical and population objectives. Therefore:
\[\|\theta_T - \theta^*_{pop}\| = O\left(\frac{1}{\sqrt{T}}\right) + O\left(\frac{1}{\sqrt{n}}\right)\]

This establishes the claimed convergence bound, with the first term representing optimization error and the second term representing statistical error.
\end{proof}

\subsection{Theorem \ref{cor:acogb} (Anti-Causal OOD Generalization Bound)}
The ACIA framework provides theoretical guarantees for OOD generalization in anti-causal settings---the performance on the unseen test environment cannot be arbitrarily worse than the worst-case performance on training environments, with the gap controlled by the sample size.

\begin{restatable}[Anti-Causal OOD Generalization Bound]%{corollary}
{theorem}{coracogb}
\label{cor:acogb}
For optimal representations $\phi_L^*, \phi_H^*$ in an anti-causal setting, i.e., $\phi_H^*(\phi_L^*(\mathcal{X})) \perp\!\!\!\perp E \mid Y$ in all environments $\mathcal{E}$. With probability at least $1-\delta$, for any testing environment $e_{\text{test}}$ with sample size $n_{test}$ , its expected empirical loss $\mathbb{\hat{E}}_{e_{\text{test}}}[\ell(f^*)]$ under the optimal predictor  $f^* = \mathcal{C} \circ \phi_H^* \circ \phi_L^*$ is bounded below: 
\begin{equation}
\label{eqn:oodbound}
\mathbb{\hat{E}}_{e_{\text{test}}}[\ell(f^*)] \leq \max_{e \in \mathcal{E}} \mathbb{E}_{e}[\ell(f^*)] + O\left(\sqrt{\frac{\log(1/\delta)}{n_{test}}}\right)
\end{equation}
\end{restatable}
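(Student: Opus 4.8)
The plan is to split the bound into a population-level comparison that exploits the environment-independence of the learned representation, followed by a single-hypothesis concentration inequality on the test sample. Write $f^* = \mathcal{C}\circ\phi_H^*\circ\phi_L^*$ and, for $y\in\mathcal{Y}$, set $g(y) := \mathbb{E}\big[\ell(f^*(X),Y)\mid Y=y\big]$. Since $f^*$ depends on $X$ only through the representation $\phi_H^*(\phi_L^*(X))$, the conditional law of $\ell(f^*(X),y)$ given $Y=y$ is determined entirely by the conditional law of $\phi_H^*(\phi_L^*(X))$ given $Y=y$; by hypothesis $\phi_H^*(\phi_L^*(X)) \perp\!\!\!\perp E \mid Y$, so this law is the same in every environment (it is the anti-causal generative invariant underlying Thm.~\ref{thm:anti_causal} and Cor.~\ref{cor:kernel_property}, which $R_1$ enforces). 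Hence $g$ does not depend on the environment and $\mathbb{E}_e[\ell(f^*)] = \int_{\mathcal{Y}} g(y)\,dP_e(y)$ for every $e$, so the only environment dependence is through the label marginal $P_e(Y)$.

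First I would establish the population inequality $\mathbb{E}_{e_{\text{test}}}[\ell(f^*)] \le \max_{e\in\mathcal{E}}\mathbb{E}_e[\ell(f^*)]$. Under the framework's bounded-distributional-shift condition — made precise here as the assumption that $P_{e_{\text{test}}}(Y)$ lies in the convex hull of $\{P_e(Y)\}_{e\in\mathcal{E}}$ (in particular, whenever the label marginal is shared across environments) — write $P_{e_{\text{test}}}(Y) = \sum_{e\in\mathcal{E}}\alpha_e P_e(Y)$ with $\alpha_e\ge 0$ and $\sum_e\alpha_e = 1$. Then by linearity of the integral, $\mathbb{E}_{e_{\text{test}}}[\ell(f^*)] = \sum_e \alpha_e \int g\,dP_e = \sum_e \alpha_e\,\mathbb{E}_e[\ell(f^*)] \le \max_{e\in\mathcal{E}}\mathbb{E}_e[\ell(f^*)]$.

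Next, the concentration step is routine precisely because $f^*$ is fixed and does not depend on the test data: the $n_{\text{test}}$ losses $\ell(f^*(x_j),y_j)$ are i.i.d.\ draws under $P_{e_{\text{test}}}$, taking values in a bounded range $[0,M]$ (or, more generally, sub-Gaussian under each $P_e$), so Hoeffding's inequality gives, with probability at least $1-\delta$, $\mathbb{\hat{E}}_{e_{\text{test}}}[\ell(f^*)] \le \mathbb{E}_{e_{\text{test}}}[\ell(f^*)] + M\sqrt{\log(1/\delta)/(2 n_{\text{test}})}$. Chaining this with the population inequality of the previous paragraph yields Eqn.~\ref{eqn:oodbound}, with the $O(\cdot)$ absorbing $M$.

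The main obstacle is the first step, and specifically transferring the conditional invariance of the representation from the training environments to the unseen test environment: $R_1$ only (approximately) enforces $\phi_H^*(\phi_L^*(X))\perp\!\!\!\perp E\mid Y$ across $e\in\mathcal{E}$, so to conclude that $g$ is also the correct conditional-loss function under $P_{e_{\text{test}}}$ one must invoke the structural assumption that $P(\phi_H^*(\phi_L^*(X))\mid Y)$ coincides with the true $Y\to X$ generative mechanism, which is invariant over all environments, observed or not — this is exactly what the anti-causal modeling of Sec.~\ref{sec:framework} posits. A second, more quantitative subtlety is the label-marginal assumption: without the convex-hull (or shared-label) condition a test environment placing mass on high-$g$ labels can violate the bound, so this hypothesis should be stated explicitly rather than left implicit. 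If one only has $\varepsilon$-approximate invariance of $\phi^*$ (the practically attainable regime), the bound should be relaxed by an additive $O(\varepsilon)$ term obtained from a Lipschitz/total-variation perturbation of $g$.
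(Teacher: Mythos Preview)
Your proposal is correct and follows essentially the same two-step skeleton as the paper: (i) use the conditional invariance $\phi_H^*(\phi_L^*(X))\perp\!\!\!\perp E\mid Y$ to reduce cross-environment risk differences to differences in the label marginal $P_e(Y)$, and (ii) apply Hoeffding's inequality to the fixed predictor $f^*$ on the test sample. The paper's proof writes $\mathbb{E}_e[\ell(f^*)] = \int_{\mathcal{Y}}\int_{\mathcal{Z}_H}\ell(\mathcal{C}(z_H),y)\,dP(z_H\mid y)\,dP_e(y)$, which is exactly your function $g(y)$ integrated against the environment-specific label marginal, and then bounds $\mathbb{E}_{e_{\text{test}}}[\ell(f^*)] \le \max_e \mathbb{E}_e[\ell(f^*)] + \epsilon_{\text{struct}}$ before concentrating.

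Where you differ is in rigor at step (i): the paper absorbs the population comparison into an unspecified ``structural error'' $\epsilon_{\text{struct}}$ that is declared to vanish under appropriate regularization, whereas you make explicit the two hypotheses actually needed --- that the invariance extends to the unseen $e_{\text{test}}$ (so $g$ is the correct conditional loss there too) and that $P_{e_{\text{test}}}(Y)$ lies in the convex hull of the training label marginals (so the max dominates). Your identification of these as the real obstacles is sharper than the paper's treatment; the paper also throws in a union bound over training environments that is unnecessary given that the theorem's right-hand side uses true (not empirical) training risks. Your $O(\varepsilon)$ relaxation for approximate invariance likewise goes beyond what the paper proves.
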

\begin{proof}
We establish a generalization bound for optimal representations in anti-causal settings by leveraging both concentration inequalities and the invariance properties of the learned representations.

Let $\phi_L^*$ and $\phi_H^*$ be the optimal representations obtained from solving the optimization problem in Equation~\ref{eqn:objfunc}, and let $f^* = \mathcal{C} \circ \phi_H^* \circ \phi_L^*$ be the optimal composed function.

First, we establish concentration bounds for empirical vs. true risks. For any environment $e \in \mathcal{E}$, let $\hat{\mathbb{E}}_{e}[\ell(f^*)]$ be the empirical risk based on $n_e$ samples, and $\mathbb{E}_{e}[\ell(f^*)]$ be the true expected risk. Assume the loss function $\ell$ is bounded in $[0,M]$ for some constant $M > 0$.
 
By Hoeffding's inequality~\cite{bentkus2004hoeffding}, for any $\delta_e > 0$, with probability at least $1-\delta_e$:
\begin{equation*}
\left|\hat{\mathbb{E}}_{e}[\ell(f^*)] - \mathbb{E}_{e}[\ell(f^*)]\right| \leq M\sqrt{\frac{\log(2/\delta_e)}{2n_e}}
\end{equation*}

Next, we analyze the relationship between test and training environments under anti-causal structure. By the optimality of $\phi_L^*$ and $\phi_H^*$, the environment independence property is satisfied: $\phi_H^*(\phi_L^*(\mathcal{X})) \perp\!\!\!\perp E \mid Y$. This implies that the distribution of high-level representations depends only on the label $Y$ and not on the environment $E$ when conditioned on $Y$.

This implies the optimal representation extracts the invariant causal mechanism $Y \rightarrow X$ while filtering out the spurious correlation $E \rightarrow X$.
In other words, for any environments $e_1, e_2$ (including a test environment $e_{\text{test}}$), we have:
\begin{equation*}
P_{e_1}(Z_H | Y) = P_{e_2}(Z_H | Y) = P(Z_H | Y)
\end{equation*}
where $Z_H = \phi_H^*(\phi_L^*(\mathcal{X}))$ is the high-level representation.

We now relate the expected loss in test environment to those in training environments. The expected loss in any environment $e$ can be written as:
\begin{equation*}
\mathbb{E}_{e}[\ell(f^*)] = \int_{\mathcal{Y}} \int_{\mathcal{Z}_H} \ell(\mathcal{C}(z_H), y) \, dP(z_H|y) \, dP_e(y)
\end{equation*}

Due to the environment independence property, $P(z_H|y)$ is the same across all environments. Therefore, the differences in expected loss across environments arise only from differences in the label distribution $P_e(y)$. Define the maximum expected loss across training environments:
\begin{equation*}
\mathbb{E}_{\text{max}}[\ell(f^*)] = \max_{e \in \mathcal{E}} \mathbb{E}_{e}[\ell(f^*)]
\end{equation*}

Now, we bound the empirical risk in the test environment. For a test environment $e_{\text{test}}$, we have:
\begin{align*}
\hat{\mathbb{E}}_{e_{\text{test}}}[\ell(f^*)] &= \mathbb{E}_{e_{\text{test}}}[\ell(f^*)] + \left(\hat{\mathbb{E}}_{e_{\text{test}}}[\ell(f^*)] - \mathbb{E}_{e_{\text{test}}}[\ell(f^*)]\right)\\
&\leq \mathbb{E}_{e_{\text{test}}}[\ell(f^*)] + \left|\hat{\mathbb{E}}_{e_{\text{test}}}[\ell(f^*)] - \mathbb{E}_{e_{\text{test}}}[\ell(f^*)]\right|
\end{align*}

Due to the anti-causal structure and the invariance property of the optimal representations, the expected loss in the test environment is related to the losses in training environments. Specifically, since the learned representations capture the invariant causal mechanism from $Y$ to $X$ while removing environment-specific effects, we have:
\begin{equation*}
\mathbb{E}_{e_{\text{test}}}[\ell(f^*)] \leq \mathbb{E}_{\text{max}}[\ell(f^*)] + \epsilon_{\text{struct}}
\end{equation*}
where $\epsilon_{\text{struct}}$ is a small error  arising from   structural differences between test and training environments.

Finally, we apply union bound to combine the bounds. We set $\delta_e = \delta/|\mathcal{E}|$ for each training environment $e \in \mathcal{E}$ and $\delta_{\text{test}} = \delta/2$ for the test environment. By the union bound, with probability at least $1-\delta$:
\begin{align*}
\hat{\mathbb{E}}_{e_{\text{test}}}[\ell(f^*)] &\leq \mathbb{E}_{\text{max}}[\ell(f^*)] + \epsilon_{\text{struct}} + M\sqrt{\frac{\log(4/\delta)}{2n_{\text{test}}}}\\
&\leq \max_{e \in \mathcal{E}} {\mathbb{E}}_{e}[\ell(f^*)] + \max_{e \in \mathcal{E}}M\sqrt{\frac{\log(2|\mathcal{E}|/\delta)}{2n_e}} + \epsilon_{\text{struct}} + M\sqrt{\frac{\log(4/\delta)}{2n_{\text{test}}}}
\end{align*}
For optimal representations that successfully enforce the invariance property, 
$\epsilon_{\text{struct}}$ approaches zero as the regularization strength is appropriately tuned. Under the assumption that the test environment follows the same anti-causal structure as the training environments, we can simplify to:
\begin{equation*}
\hat{\mathbb{E}}_{e_{\text{test}}}[\ell(f^*)] \leq \max_{e \in \mathcal{E}} \mathbb{E}_{e}[\ell(f^*)] + O\left(\sqrt{\frac{\log(1/\delta)}{n_e}}\right)
\end{equation*}
\vspace{-4mm}
\end{proof}

\subsection{Theorem \ref{cor:oodgc} (Anti-Causal OOD Gap w.r.t. Kernels)}

The below theorem implies that performance gaps cannot be smaller than the fundamental difference between the interventional distribution and observational distribution.

\begin{restatable}[Anti-Causal OOD Gap w.r.t. Kernels]
{theorem}{thmacogk}
\label{cor:oodgc}
The gap between test performance and training performance w.r.t kernels is bounded as:
\begin{equation}
\label{eqn:oodgap}
\begin{split}
|\mathbb{E}_{e_{\text{test}}}[\ell(f^*)] - \mathbb{E}_{e_{\text{train}}}[\ell(f^*)]| \geq \min_{e \in \mathcal{E}} \|K_{\{e\}}^{do(X)}(\omega) - K_{\{e\}}(\omega)\|_{\mathcal{H}}
\end{split}
\end{equation}
where $K_{\{e\}}^{do(X)}$ is the interventional kernel and $K_{\{e\}}$ is the observational kernel.
\end{restatable}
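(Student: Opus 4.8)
The plan is to realize both expected losses as integrals of a single, environment-independent witness function against the appropriate causal kernels, and then to lower-bound their difference by the Hilbert-space discrepancy $\|K_{\{e\}}^{do(X)}(\omega) - K_{\{e\}}(\omega)\|_{\mathcal{H}}$. First I would set $h^{*}(\omega) = \ell\big((\mathcal{C}\circ\phi_H^{*}\circ\phi_L^{*})(\mathcal{X}(\omega)),\, Y(\omega)\big)$ and use the kernel characterization of Theorem~\ref{thm:anti_causal} together with Remark~\ref{rem:kernel_char} to express $\mathbb{E}_{e}[\ell(f^{*})]$ as an integral of $h^{*}$ against the joint law induced on $(\mathcal{X},\mathcal{Y})$ by the causal kernel $K_{\{e\}}$. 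In the anti-causal OOD setting the distributional shift of the test environment acts through the edge $E \to X$ and is therefore realized, via Theorem~\ref{thm:ick}, by replacing $K_{\{e_{\text{test}}\}}$ with the interventional kernel $K_{\{e_{\text{test}}\}}^{do(X)}$; hence $\mathbb{E}_{e_{\text{test}}}[\ell(f^{*})]$ is governed by $K^{do(X)}$ whereas $\mathbb{E}_{e_{\text{train}}}[\ell(f^{*})]$ is governed by the observational kernel $K$.

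Next I would exploit the invariance of the optimal representations, $\phi_H^{*}(\phi_L^{*}(\mathcal{X})) \perp\!\!\!\perp E \mid Y$, to argue that $h^{*}$ is the \emph{same} measurable object across all environments, so that the entire loss gap is attributable to the kernel pair $(K^{do(X)}, K)$ and not to any change in $f^{*}$. Embedding the signed kernel difference into $\mathcal{H}$ then yields, for the relevant environment $e$, the identity $|\mathbb{E}_{e_{\text{test}}}[\ell(f^{*})] - \mathbb{E}_{e_{\text{train}}}[\ell(f^{*})]| = \big|\langle h^{*},\, K_{\{e\}}^{do(X)}(\omega) - K_{\{e\}}(\omega)\rangle_{\mathcal{H}}\big|$. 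By Corollary~\ref{cor:iic}, the difference $\Delta K_{\{e\}} = K_{\{e\}}^{do(X)} - K_{\{e\}}$ vanishes on $\mathscr{H}_Y$ and is supported on $\mathscr{H}_{\mathcal{X}}$, i.e.\ it lives entirely in the $X$-direction of $\mathcal{H}$ --- exactly the subspace on which the anti-causal predictor $h^{*}$ (built from the causal dynamics of Theorem~\ref{thm:causal_dynamics}, which already integrates against the intervention measure $\mathbb{Q}$) is non-degenerate. Using the richness of $\phi_L$ --- its image being dense in $\mathcal{H}$ --- I would choose the normalization of $\mathcal{C}$ so that the relevant projection of $h^{*}$ is a unit vector aligned with $\Delta K_{\{e\}}$; the dual/variational formula for the Hilbert norm then turns the identity above into $\big|\langle h^{*}, \Delta K_{\{e\}}\rangle_{\mathcal{H}}\big| = \|\Delta K_{\{e\}}\|_{\mathcal{H}}$. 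Taking $\min_{e\in\mathcal{E}}$ --- the test environment can be no more favorable than the most favorable training environment with respect to this discrepancy --- gives the stated lower bound.

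\textbf{Main obstacle.} The genuinely delicate step is the \emph{direction} of the final inequality. Cauchy--Schwarz supplies only $|\langle h^{*}, \Delta K_{\{e\}}\rangle_{\mathcal{H}}| \le \|h^{*}\|_{\mathcal{H}}\,\|\Delta K_{\{e\}}\|_{\mathcal{H}}$, which is the wrong way around; converting this into a lower bound on the loss gap requires that the witness $h^{*}$ be \emph{aligned} with the kernel-difference direction rather than orthogonal to it. I would close this by combining two facts already present in the framework: (i) universality of the low-level representation $\phi_L$, so that the loss class $\{\ell(f(\cdot),\cdot)\}$ contains a scaled copy of $\Delta K_{\{e\}}/\|\Delta K_{\{e\}}\|_{\mathcal{H}}$, and (ii) the worst-case $\max_{e\in\mathcal{E}}$ structure of the ACIA objective in Eqn.~\ref{eqn:objfunc}, which forces $f^{*}$ to be tuned against precisely the environment that maximizes the interventional--observational discrepancy. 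Verifying that these two ingredients jointly license replacing the upper bound by an equality (up to the outer $\min_{e}$) is where the real work lies.
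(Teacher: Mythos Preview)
Your overall architecture---define $h^{*}=\ell(f^{*}(\mathcal{X}(\cdot)),Y(\cdot))$, write each expected loss as an integral of $h^{*}$ against the relevant kernel, and then compare kernels---is exactly the skeleton the paper uses. Two substantive differences are worth noting.

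First, the paper treats $\|\cdot\|_{\mathcal{H}}$ not as a Hilbert-space norm with inner products but as an \emph{integral probability metric} over a function class $\mathcal{H}$, i.e.\ $\|K_1-K_2\|_{\mathcal{H}}=\sup_{h\in\mathcal{H}}|\int h\,d(K_1-K_2)|$. So the paper never writes $\langle h^{*},\Delta K\rangle_{\mathcal{H}}$; it stays at the level of integrals and then passes to the $\sup$. Your RKHS-embedding picture is a legitimate alternative viewpoint, but it is not what the paper does.

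Second, the paper's pivotal intermediate step is different from yours. Rather than identifying $K_{\{e_{\text{test}}\}}$ with an interventional kernel, the paper asserts (heuristically, invoking that $K^{do(X)}$ is the distribution with the $E\to X$ edge removed) that for every $h$,
\[
\Big|\int h\,d(K_{\{e_{\text{test}}\}}-K_{\{e_{\text{train}}\}})\Big|\ \ge\ \min_{e\in\mathcal{E}}\Big|\int h\,d(K_{\{e\}}^{do(X)}-K_{\{e\}})\Big|,
\]
and then specializes to $h^{*}$. You instead argue that the test shift \emph{is} realized by $K^{do(X)}$; that is a stronger structural claim than the paper makes.

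On the ``main obstacle'': you have put your finger on the genuine weak point. Passing from $|\int h^{*}\,d(\Delta K_{\{e\}})|$ to $\sup_{h}|\int h\,d(\Delta K_{\{e\}})|=\|\Delta K_{\{e\}}\|_{\mathcal{H}}$ replaces the right-hand side by something \emph{larger}, which does not preserve a $\ge$ inequality. The paper's proof in fact glosses over exactly this step with a one-line appeal to the IPM supremum, so do not expect to find a clean mechanism there that you missed. However, your proposed repair---``choose the normalization of $\mathcal{C}$ so that the projection of $h^{*}$ is a unit vector aligned with $\Delta K_{\{e\}}$''---does not work as stated: $f^{*}=\mathcal{C}^{*}\circ\phi_H^{*}\circ\phi_L^{*}$ is the \emph{fixed} minimizer of the ACIA objective, so $h^{*}$ is not a free witness you may rotate. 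Universality of $\phi_L$ guarantees that \emph{some} $h\in\mathcal{H}$ is aligned with $\Delta K_{\{e\}}$, but gives no reason the particular optimizer $h^{*}$ is. The $\max_{e}$ in Eqn.~\ref{eqn:objfunc} also does not help here: it selects the worst environment for the \emph{loss}, which is a different functional from alignment with $\Delta K_{\{e\}}$. In short, your diagnosis of the obstacle is correct and matches the soft spot in the paper's own argument; your proposed fix, though, conflates ``the class $\mathcal{H}$ is rich'' with ``the specific $h^{*}$ attains the supremum.''
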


\begin{proof}
We analyze the fundamental lower bound on the gap between test and training performance by relating it to the difference between interventional and observational distributions in causal settings.

First, we express expected losses in terms of probability distributions. For any environment $e$, the expected loss under the optimal predictor $f^* = \mathcal{C} \circ \phi_H^* \circ \phi_L^*$ can be written as:
\begin{equation}
\mathbb{E}_{e}[\ell(f^*)] = \int_{\Omega} \ell(f^*(\mathcal{X}(\omega)), Y(\omega)) \, d\mathbb{P}_{e}(\omega)
\end{equation}

This expectation depends on the joint distribution of $(\mathcal{X}, Y)$ in environment $e$, which can be represented through the causal kernel $K_{\{e\}}$.

Then, we relate performance gap between test and training environments  to distributional differences.  
\begin{equation*}
|\mathbb{E}_{e_{\text{test}}}[\ell(f^*)] - \mathbb{E}_{e_{\text{train}}}[\ell(f^*)]| = \left|\int_{\Omega} \ell(f^*(\mathcal{X}(\omega)), Y(\omega)) \, d(\mathbb{P}_{e_{\text{test}}} - \mathbb{P}_{e_{\text{train}}})(\omega)\right|
\end{equation*}

The measure $(\mathbb{P}_{e_{test}} - \mathbb{P}_{e_{train}}) $ represents the difference between the joint probability distributions in test and training environments.
\begin{equation*}
\int_{\Omega} f(\omega) \, d(\mathbb{P}_{e_{test}} - \mathbb{P}_{e_{train}})(\omega) = \int_{\Omega} \int_{\Omega'} f(\omega') \, d(K_{\{e_{test}\}}(\omega, d\omega') - K_{\{e_{train}\}}(\omega, d\omega')) \, d\mathbb{P}(\omega)
\end{equation*}
Now, we establish a dual representation using integral probability metrics. For a function class $\mathcal{H}$ containing functions of the form $h(\omega) = \ell(f^*(\mathcal{X}(\omega)), Y(\omega))$, we can express the distributional difference as an integral probability metric:
\begin{equation*}
\|K_{\{e_{\text{test}}\}} - K_{\{e_{\text{train}\}}}\|_{\mathcal{H}} = \sup_{h \in \mathcal{H}} \left|\int h \, d(K_{\{e_{\text{test}}\}} - K_{\{e_{\text{train}}\}})\right|
\end{equation*}

Here, we relate environmental differences to causal structure using interventional kernels. In the anti-causal setting, the causal structure implies that the observational distribution $K_{\{e\}}$ differs from the interventional distribution $K_{\{e\}}^{do(X)}$ due to the spurious correlation introduced by $E$.
The difference $\|K_{\{e\}}^{do(X)} - K_{\{e\}}\|_{\mathcal{H}}$ quantifies the strength of this spurious correlation in environment $e$. Intuitively, if this difference is large, then $E$ has a strong influence on $X$ in that environment.

Next, we apply the variational characterization of integral probability metrics. For any function $h \in \mathcal{H}$, we have:
\begin{equation*}
\left|\int h \, d(K_{\{e_{\text{test}}\}} - K_{\{e_{\text{train}}}\})\right| \geq \min_{e \in \mathcal{E}} \left|\int h \, d(K_{\{e\}}^{do(X)} - K_{\{e\}})\right|
\end{equation*}

This inequality holds because the interventional distribution $K_{\{e\}}^{do(X)}$ represents the distribution when the spurious correlation $E \rightarrow X$ is removed. The minimum across all training environments represents the smallest possible spurious effect that must be overcome in generalization.

Ultimately, we complete the proof using the specific loss function. Let $h^*(\omega) = \ell(f^*(\mathcal{X}(\omega)), Y(\omega))$ be the composed loss function. Then:
\begin{equation*}
|\mathbb{E}_{e_{\text{test}}}[\ell(f^*)] - \mathbb{E}_{e_{\text{train}}}[\ell(f^*)]| = \left|\int h^* \, d(K_{\{e_{\text{test}}\}} - K_{\{e_{\text{train}}}\})\right| \geq \min_{e \in \mathcal{E}} \left|\int h^* \, d(K_{\{e\}}^{do(X)} - K_{\{e\}})\right|
\end{equation*}

Since this holds for the specific function $h^*$, and the integral probability metric takes the supremum over all $h \in \mathcal{H}$, we have:
\begin{equation*}
|\mathbb{E}_{e_{\text{test}}}[\ell(f^*)] - \mathbb{E}_{e_{\text{train}}}[\ell(f^*)]| \geq \min_{e \in \mathcal{E}} \|K_{\{e\}}^{do(X)} - K_{\{e\}}\|_{\mathcal{H}}
\end{equation*}
This establishes the lower bound on the performance gap in terms of the minimum distance between interventional and observational kernels across training environments.
\end{proof}

\subsection{Theorem \ref{cor:env_robustness}  (Environmental Robustness)}
Having established generalization bounds and performance gaps, we now examine how these translate to environmental robustness guarantees.

\begin{restatable}[Environmental Robustness]
{theorem}{corer}
\label{cor:env_robustness}
Denote the learnt two-level representations by ACIA as $\phi_L^*$ and $\phi_H^*$. 
Then for any new environment $e_{new}$, the distributional distance $d_{\mathcal{H}}(\mathbb{P}_{e_{new}}  \mathbb{P}_{\mathcal{E}})$ 
between $\mathbb{P}_{e_{new}}$ and $\mathbb{P}_{\mathcal{E}}$ over the function class $\mathcal{H}$ containing all predictors is bounded by 
 $$d_{\mathcal{H}}(\mathbb{P}_{e_{new}}  \mathbb{P}_{\mathcal{E}})  \leq 
 \delta_1 + \delta_2,$$  
where $\mathbb{P}_{\mathcal{E}} = \frac{1}{|\mathcal{E}|}\sum_{e \in \mathcal{E}} \mathbb{P}_e$ is the mixture distribution of training environments $\mathcal{E}$; $\delta_1$ and $\delta_2$ respectively measure the degree of invariance violation in below conditions 1 and 2.  
\begin{enumerate}
    \item The high-level representation is environment-independent: $\phi_H(\phi_L^*(\mathcal{X})) \perp E \mid Y$
    \item The low-level representation is invariant: $\Pr(\phi_L^*(\mathcal{X}) \mid Y)$ is constant across environments
\end{enumerate}
\end{restatable}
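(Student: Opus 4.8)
The plan is to view $d_{\mathcal{H}}$ as an integral probability metric, $d_{\mathcal{H}}(\mathbb{P},\mathbb{Q})=\sup_{h\in\mathcal{H}}|\mathbb{E}_{\mathbb{P}}[h]-\mathbb{E}_{\mathbb{Q}}[h]|$, and to exploit the fact that every predictor in $\mathcal{H}$ is of the form $g\circ\Phi$ with $\Phi:=\phi_H^{*}\circ\phi_L^{*}$, so that the quantity of interest equals the IPM between the pushforwards $\Phi_{\#}\mathbb{P}_{e_{\mathrm{new}}}$ and $\Phi_{\#}\mathbb{P}_{\mathcal{E}}$ on the high-level space $\mathcal{Z}_H$. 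Since in the anti-causal graph $Y\rightarrow X\leftarrow E$ the label is an exogenous root shared by every environment, the label marginal does not shift, so I would condition on $Y=y$ and integrate against $\mu_Y$ to obtain
\[
d_{\mathcal{H}}(\mathbb{P}_{e_{\mathrm{new}}},\mathbb{P}_{\mathcal{E}})\le\int_{\mathcal{Y}}d_{\mathcal{H}}\!\bigl(\Phi_{\#}\mathbb{P}_{e_{\mathrm{new}}}(\cdot\mid y),\ \Phi_{\#}\mathbb{P}_{\mathcal{E}}(\cdot\mid y)\bigr)\,d\mu_Y(y).
\]
It then suffices to bound the conditional IPM inside the integral, uniformly in $y$, by $\delta_1+\delta_2$. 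This is a purely population-level decomposition, so no concentration inequality is needed.

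For the inner bound I would insert an intermediate conditional law and apply the triangle inequality along the representation pipeline. Fix an arbitrary training environment $e_0\in\mathcal{E}$. First, $\Phi_{\#}\mathbb{P}_{e_{\mathrm{new}}}(\cdot\mid y)=(\phi_H^{*})_{\#}\bigl[(\phi_L^{*})_{\#}\mathbb{P}_{e_{\mathrm{new}}}(\cdot\mid y)\bigr]$, and by Condition 2 the low-level conditional law $(\phi_L^{*})_{\#}\mathbb{P}_{e_{\mathrm{new}}}(\cdot\mid y)$ differs from $(\phi_L^{*})_{\#}\mathbb{P}_{e_0}(\cdot\mid y)$ by at most the low-level invariance violation; pushing both forward through $\phi_H^{*}$ does not enlarge this discrepancy by the data-processing inequality for IPMs, which applies because $\{h\circ\phi_H^{*}:h\in\mathcal{H}\}$ lies inside the test-function class used to measure low-level invariance, predictors being functions of the representation. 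This leg contributes $\delta_2$. Second, by Condition 1 the high-level conditional law $\Phi_{\#}\mathbb{P}_{e_0}(\cdot\mid y)$ differs from the mixture conditional $\Phi_{\#}\mathbb{P}_{\mathcal{E}}(\cdot\mid y)=\tfrac{1}{|\mathcal{E}|}\sum_{e}\Phi_{\#}\mathbb{P}_e(\cdot\mid y)$ by at most the high-level environment-independence violation, using convexity of the IPM in its arguments to average over the mixture; this leg contributes $\delta_1$. Setting $\delta_1$ and $\delta_2$ to be the respective violations aggregated over $y$ (and over the environments where needed) and combining the two legs yields the claimed bound.

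The main obstacle I anticipate is the bookkeeping that attributes the two legs to exactly Conditions 1 and 2 without overlap: Condition 1 is stated for the \emph{training} environments, so the role of Condition 2 is precisely to transport $e_{\mathrm{new}}$'s unseen low-level conditional law onto a training environment before Condition 1 can be invoked, and one must verify that the data-processing step is lossless for the chosen function classes rather than merely non-increasing up to a constant. A secondary point is the reduction to conditional laws, which relies on the anti-causal assumption that $Y$ and $E$ are independent exogenous variables so that $\mu_Y$ is environment-invariant; were label shift permitted, an additional total-variation-in-$Y$ term would have to be added. The remaining manipulations — the triangle inequality, convexity of the IPM, and the observation that predictors factor through $\Phi$ — are routine given the stated hypothesis that $\mathcal{H}$ contains all predictors and is therefore stable under composition with the measurable maps $\phi_L^{*}$ and $\phi_H^{*}$.
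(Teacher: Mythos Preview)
Your proposal is correct but follows a genuinely different decomposition from the paper's. The paper does \emph{not} assume the label marginal $\mu_Y$ is environment-invariant; instead it performs the standard add-and-subtract split
\[
\mathbb{E}_{e_{\mathrm{new}}}[h]-\mathbb{E}_{\mathcal{E}}[h]
=\int_{\mathcal{Y}}\!\int_{\mathcal{Z}_H}\!h\,d\bigl(\mathbb{P}_{e_{\mathrm{new}}}(z_H\mid y)-\mathbb{P}_{\mathcal{E}}(z_H\mid y)\bigr)\,d\mathbb{P}_{e_{\mathrm{new}}}(y)
+\int_{\mathcal{Y}}\!\int_{\mathcal{Z}_H}\!h\,d\mathbb{P}_{\mathcal{E}}(z_H\mid y)\,d\bigl(\mathbb{P}_{e_{\mathrm{new}}}(y)-\mathbb{P}_{\mathcal{E}}(y)\bigr),
\]
bounds the first term by $\delta_1$ via Condition~1 (which it takes to apply to $e_{\mathrm{new}}$ directly, not just to training environments), and bounds the second term---the label-marginal shift---by $\delta_2$, which it attributes to Condition~2 together with the anti-causal structure. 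Your route instead freezes $\mu_Y$ using $Y\perp E$, inserts a training environment $e_0$ as a pivot, and spends $\delta_2$ on the $e_{\mathrm{new}}\to e_0$ leg (low-level invariance plus data processing) and $\delta_1$ on the $e_0\to$\,mixture leg (high-level independence among training environments, averaged via convexity). Your reading of $\delta_2$ as the low-level invariance violation matches the literal wording of Condition~2 more closely than the paper's use of $\delta_2$ as a label-shift bound; on the other hand, the paper's split accommodates label shift without invoking the $Y\perp E$ hypothesis you rely on, and it does not need the pivot $e_0$ or the data-processing step. Both arguments implicitly use $\|h\|_\infty\le 1$ and arrive at the same inequality.
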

\begin{proof}
Let $\mathbb{P}_{\mathcal{E}} = \frac{1}{|\mathcal{E}|}\sum_{e \in \mathcal{E}} \mathbb{P}_e$ be the mixture distribution of training environments $\mathcal{E}$, and $\mathbb{P}_{e_{new}}$ be the distribution of a new environment $e_{new}$. Let  $\mathcal{H}$ be the function class containing all predictors 
of the form 
$h = g \circ \phi_H^* \circ \phi_L^*$ for some measurable function $g$. 
The distributional distance $d_{\mathcal{H}}(\mathbb{P}_{e_{new}}, \mathbb{P}_{\mathcal{E}})$ is defined as an integral probability metric (IPM)~\cite{sriperumbudur2009integral}:
\begin{equation*}
d_{\mathcal{H}}(\mathbb{P}_{e_{new}}, \mathbb{P}_{\mathcal{E}}) = \sup_{h \in \mathcal{H}} \left|\mathbb{E}_{e_{new}}[h] - \mathbb{E}_{\mathcal{E}}[h]\right|
\end{equation*}
Next, we express expectations in terms of representations. For any function $h \in \mathcal{H}$, we can express expectations in terms of the learned representations:
\begin{equation*}
\mathbb{E}_{e}[h] = \int_{\mathcal{Y}} \int_{\mathcal{Z}_H} h(z_H) \, d\mathbb{P}_e(z_H|y) \, d\mathbb{P}_e(y)
\end{equation*}
where $z_H = \phi_H^*(\phi_L^*(\mathcal{X}))$ is the high-level representation.

By condition (1), the high-level representation is environment-independent given $Y$: $\phi_H^*(\phi_L^*(\mathcal{X})) \perp E \mid Y$,
which means for any environments $e_1, e_2$ (including a new environment $e_{new}$):
\begin{equation*}
\mathbb{P}_{e_1}(z_H|y) = \mathbb{P}_{e_2}(z_H|y).
\end{equation*}

Assume certain level of violation on condition (1), we rewrite
\begin{equation*}
\mathbb{P}_{e_1}(z_H|y) = \mathbb{P}_{e_2}(z_H|y) + \Delta(z_H, y, e_1, e_2)
\end{equation*}
where $\Delta(z_H, y, e_1, e_2)$ is a function quantifying the violation of perfect environment independence, with $\|\Delta\|_{\infty} \leq \delta_1$ for some small $\delta_1 \geq 0$ that depends on the degree to which property (1) is satisfied.

By condition (2), the low-level representation is invariant, $\Pr(\phi_L^*(\mathcal{X}) \mid Y)$ {is constant across environments}. 
This implies the distribution of $z_L = \phi_L^*(\mathcal{X})$ conditioned on $Y$ is same across environments. Assume the degree of violation on condition (2) is $\delta_2$. 

We now bound the difference in expectations. For any $h \in \mathcal{H}$, the difference in expectations between environments is:
\begin{align*}
\left|\mathbb{E}_{e_{new}}[h] - \mathbb{E}_{\mathcal{E}}[h]\right| &= \left|\int_{\mathcal{Y}} \int_{\mathcal{Z}_H} h(z_H) \, d\mathbb{P}_{e_{new}}(z_H|y) \, d\mathbb{P}_{e_{new}}(y) - \int_{\mathcal{Y}} \int_{\mathcal{Z}_H} h(z_H) \, d\mathbb{P}_{\mathcal{E}}(z_H|y) \, d\mathbb{P}_{\mathcal{E}}(y)\right| \nonumber \\
&= \Bigg|\int_{\mathcal{Y}} \int_{\mathcal{Z}_H} h(z_H) \, d(\mathbb{P}_{e_{new}}(z_H|y) - \mathbb{P}_{\mathcal{E}}(z_H|y)) \, d\mathbb{P}_{e_{new}}(y) \nonumber \\
&\quad + \int_{\mathcal{Y}} \int_{\mathcal{Z}_H} h(z_H) \, d\mathbb{P}_{\mathcal{E}}(z_H|y) \, d(\mathbb{P}_{e_{new}}(y) - \mathbb{P}_{\mathcal{E}}(y))\Bigg|
\end{align*}

By the triangle inequality:
\begin{align*}
\left|\mathbb{E}_{e_{new}}[h] - \mathbb{E}_{\mathcal{E}}[h]\right| &\leq \left|\int_{\mathcal{Y}} \int_{\mathcal{Z}_H} h(z_H) \, d(\mathbb{P}_{e_{new}}(z_H|y) - \mathbb{P}_{\mathcal{E}}(z_H|y)) \, d\mathbb{P}_{e_{new}}(y)\right| \\
& + \left|\int_{\mathcal{Y}} \int_{\mathcal{Z}_H} h(z_H) \, d\mathbb{P}_{\mathcal{E}}(z_H|y) \, d(\mathbb{P}_{e_{new}}(y) - \mathbb{P}_{\mathcal{E}}(y))\right|
\end{align*}

We apply the bounds on invariance violations. From condition (1), for each $y \in \mathcal{Y}$, we have:
\begin{equation*}
\left\|\mathbb{P}_{e_{new}}(z_H|y) - \mathbb{P}_{\mathcal{E}}(z_H|y)\right\|_{TV} \leq \delta_1
\end{equation*}
where $\|\cdot\|_{TV}$ denotes the total variation distance.

Using condition (2) and the anti-causal structure, the difference in label distributions is also bounded:
\begin{equation*}
\left\|\mathbb{P}_{e_{new}}(y) - \mathbb{P}_{\mathcal{E}}(y)\right\|_{TV} \leq \delta_2
\end{equation*}

To bridge the theoretical and practical results gap, we apply Hölder's inequality to bound the expectation difference. Assuming bounded functions $\|h\|_{\infty} \leq 1$, and using Hölder's inequality:
\begin{align*}
\left|\mathbb{E}_{e_{new}}[h] - \mathbb{E}_{\mathcal{E}}[h]\right| &\leq \int_{\mathcal{Y}} \|h\|_{\infty} \cdot \left\|\mathbb{P}_{e_{new}}(z_H|y) - \mathbb{P}_{\mathcal{E}}(z_H|y)\right\|_{TV} \, d\mathbb{P}_{e_{new}}(y) + \|h\|_{\infty} \cdot \left\|\mathbb{P}_{e_{new}}(y) - \mathbb{P}_{\mathcal{E}}(y)\right\|_{TV} \\
&\leq \delta_1 \cdot \int_{\mathcal{Y}} d\mathbb{P}_{e_{new}}(y) + \delta_2 = \delta_1 + \delta_2
\end{align*}

By taking the supremum over function class $\mathcal{H}$, we have: 
\begin{equation*}
d_{\mathcal{H}}(\mathbb{P}_{e_{new}}, \mathbb{P}_{\mathcal{E}}) = \sup_{h \in \mathcal{H}} \left|\mathbb{E}_{e_{new}}[h] - \mathbb{E}_{\mathcal{E}}[h]\right| \leq \delta_1 + \delta_2. 
\end{equation*}

This establishes that the distributional discrepancy between any new environment and the mixture of training environments is bounded by a term that depends only on how well these invariance properties are satisfied w.r.t the learnt two-level representations by ACIA.
\end{proof}

\section{Experimental Setup and Results}

\subsection{Datasets for Anti-Causal Learning}
\label{app:datasets}
\begin{figure}[h]
    \centering
    \begin{subfigure}[b]{0.63\textwidth}
        \centering        \includegraphics[width=\linewidth]{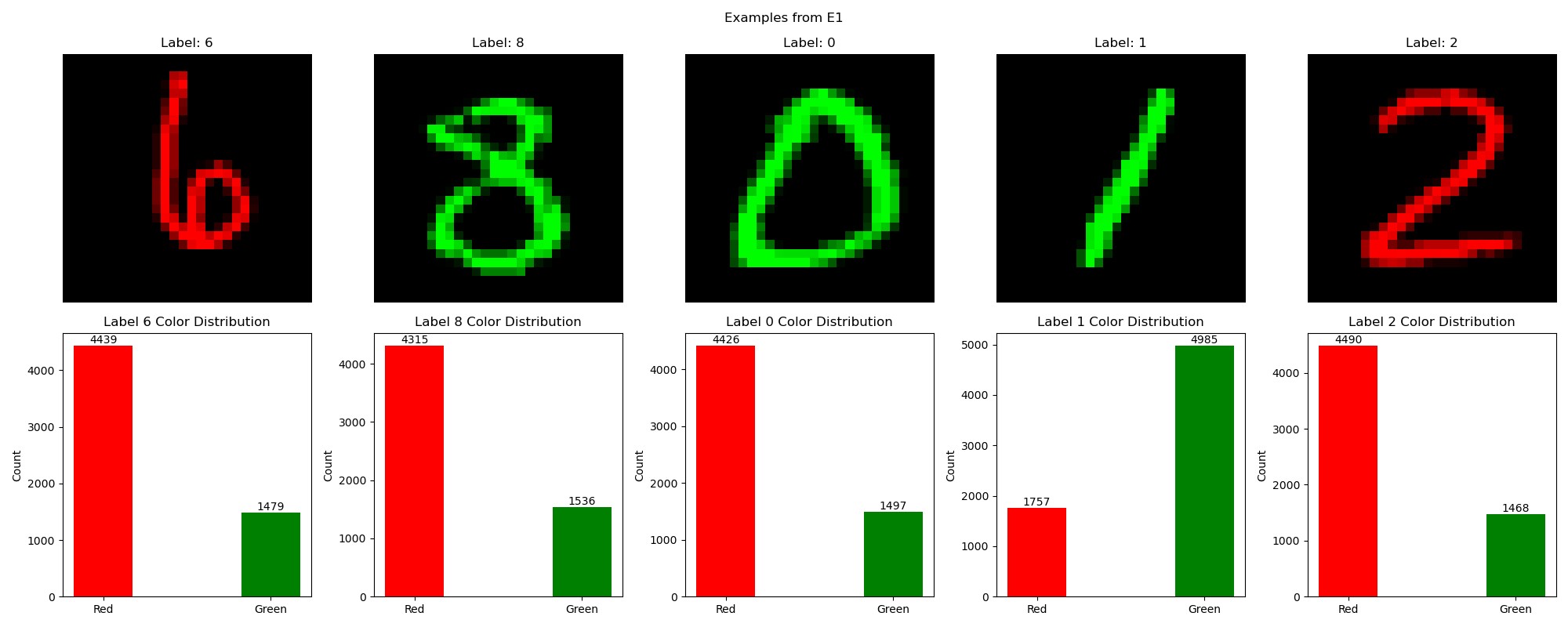}
        \label{fig:colored-mnist}
    \end{subfigure}
    \begin{subfigure}[b]{0.35\textwidth}
        \centering
    \includegraphics[width=\linewidth]{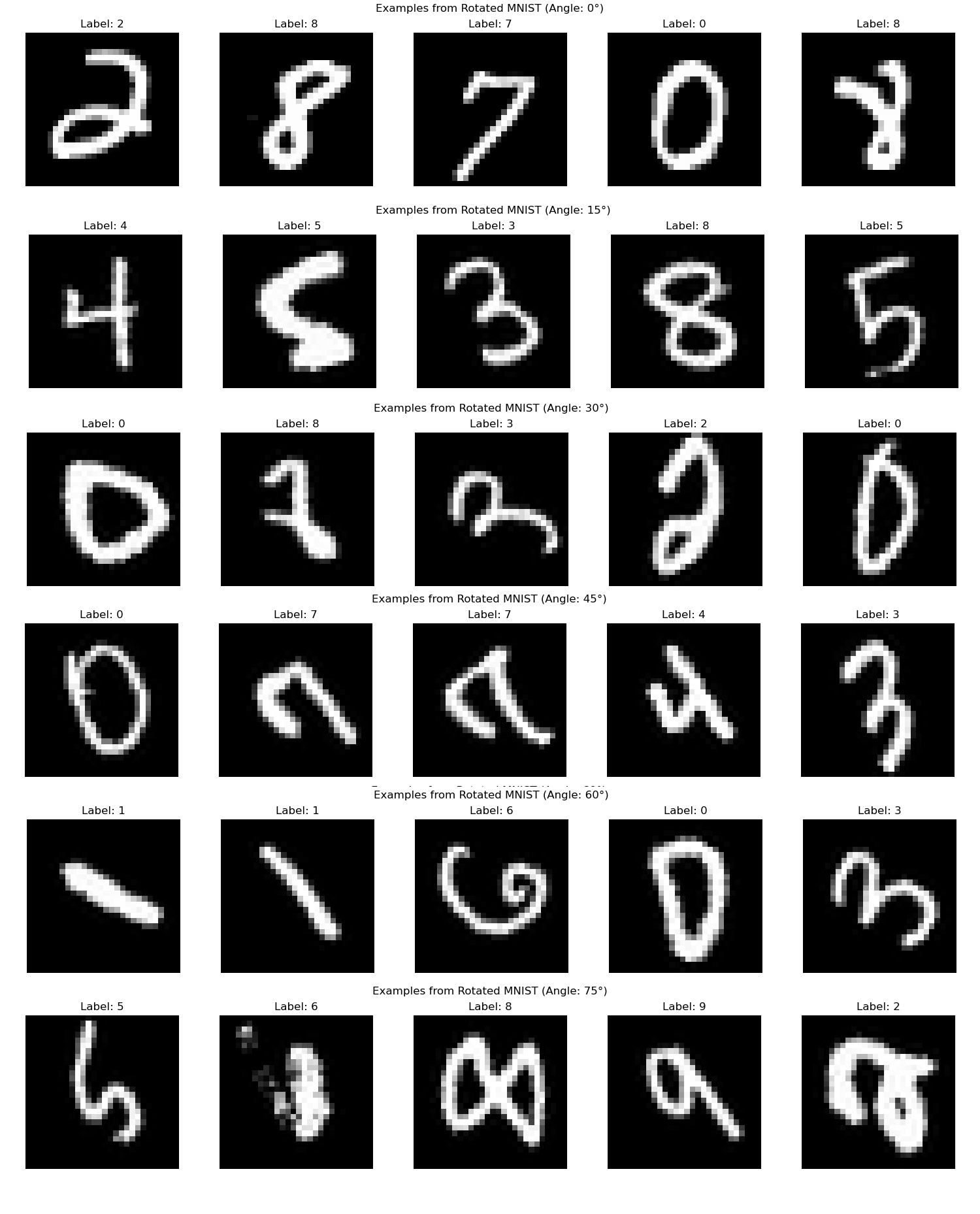}
        \label{fig:rotated-mnist}
    \end{subfigure}
    \caption{MNIST-based anti-causal datasets: (Left) CMNIST by color; (Righ) RMNIST by rotation.}
    \label{fig:mnist-datasets}
\end{figure}

\subsubsection{Colored MNIST (CMNIST)}
Let $Y \in \{0, 1, ..., 9\}$ be the digit label, $X_0 \in \mathbb{R}^{28 \times 28}$ be the original grayscale MNIST image, $E \in \{e_1, e_2\}$ be the environment variable, and $C \in \{R, G\}$ be the color variable (Red or Green). The generation process is as follows:
\begin{enumerate}[leftmargin=*]
    \item Sample $(Y, X_0)$ from the original MNIST dataset.
    \item Generate the color $C$ based on $Y$ and $E$:

   $P(C = R | Y, E) = \begin{cases} 
   0.75 & \text{if } Y \text{ is even and } E = e_1 \\
   0.25 & \text{if } Y \text{ is odd and } E = e_1 \\
   0.25 & \text{if } Y \text{ is even and } E = e_2 \\
   0.75 & \text{if } Y \text{ is odd and } E = e_2
   \end{cases}$
   \item Generate the colored image $X \in \mathbb{R}^{28 \times 28 \times 3}$:
   
   If $C = R$: $X[:,:,0] = X_0$, $X[:,:,1] = 0$, $X[:,:,2] = 0$
   
   If $C = G$: $X[:,:,0] = 0$, $X[:,:,1] = X_0$, $X[:,:,2] = 0$
\end{enumerate}

\textbf{Why is this an anti-causal problem?}

\begin{enumerate}[leftmargin=*]
    \item The digit label $Y$ causes the color $C$, which is part of the observed features $X$. This is opposite to the typical prediction task where we try to predict $Y$ from $X$.

\item The underlying causal graph is: $Y \rightarrow X \leftarrow E$. This is a classic anti-causal structure where the label $Y$ is a cause of the features $X$, and the environment $E$ influences $X$.
\end{enumerate}
\textbf{Preparation of the Colored MNIST}

According to Table \ref{tab:colored-mnist-distribution}, environment $e_1$ contains 60,000 images while $e_2$ contains 10,000 images, maintaining roughly a 6:1 ratio. A clear pattern emerges in the color distribution: even digits (0,2,4,6,8) in $e_1$ are predominantly colored red (~75\%), while odd digits (1,3,5,7,9) are predominantly green (~75\%). This pattern is inverted in $e_2$, where even digits are mostly green (~75\%) and odd digits are mostly red (~75\%). Despite these strong digit-color correlations within each environment, the overall color distribution remains relatively balanced in both environments - $e_1$ has 49.54\% red and 50.46\% green, while $e_2$ has 51.22\% red and 48.78\% green.
\begin{table}[htbp]
\vspace{-4mm}
    \centering
    \caption{Distribution of digits and colors in Colored MNIST Environments $e_1$ and $e_2$}
    \label{tab:colored-mnist-distribution}
    \small
    \begin{tabular}{|c|c|c|c|c|c|c|}
        \hline
        \multirow{2}{*}{Digit} & \multicolumn{3}{c|}{Environment $e_1$} & \multicolumn{3}{c|}{Environment $e_2$} \\
        \cline{2-7}
        & Count & Red & Green & Count & Red & Green \\
        \hline
        0 & 5923 & 4426 (74.73\%) & 1497 (25.27\%) & 980 & 284 (28.98\%) & 696 (71.02\%) \\
        1 & 6742 & 1757 (26.06\%) & 4985 (73.94\%) & 1135 & 881 (77.62\%) & 254 (22.38\%) \\
        2 & 5958 & 4490 (75.36\%) & 1468 (24.64\%) & 1032 & 280 (27.13\%) & 752 (72.87\%) \\
        3 & 6131 & 1548 (25.25\%) & 4583 (74.75\%) & 1010 & 753 (74.55\%) & 257 (25.45\%) \\
        4 & 5842 & 4365 (74.72\%) & 1477 (25.28\%) & 982 & 244 (24.85\%) & 738 (75.15\%) \\
        5 & 5421 & 1382 (25.49\%) & 4039 (74.51\%) & 892 & 658 (73.77\%) & 234 (26.23\%) \\
        6 & 5918 & 4439 (75.01\%) & 1479 (24.99\%) & 958 & 239 (24.95\%) & 719 (75.05\%) \\
        7 & 6265 & 1561 (24.92\%) & 4704 (75.08\%) & 1028 & 782 (76.07\%) & 246 (23.93\%) \\
        8 & 5851 & 4315 (73.75\%) & 1536 (26.25\%) & 974 & 222 (22.79\%) & 752 (77.21\%) \\
        9 & 5949 & 1438 (24.17\%) & 4511 (75.83\%) & 1009 & 779 (77.21\%) & 230 (22.79\%) \\
        \hline
        Total & 60000 & 29721 (49.54\%) & 30279 (50.46\%) & 10000 & 5122 (51.22\%) & 4878 (48.78\%) \\
        \hline
    \end{tabular}
\end{table}

This design creates a spurious correlation between digits and colors that varies across environments, making it an ideal dataset for testing anti-causal representation learning methods.

\textbf{How to create the causal space from CMNIST}

\begin{enumerate}[leftmargin=*]
    \item We first define the sample space $\Omega$:
    \begin{equation*}
        \Omega = \{(Y, X_0, E, C, X) \mid Y \in \{0,\ldots,9\}, \mathbf{V}_{L_0} \in \mathbb{R}^{28\times28}, E \in \{e_1, e_2\}, C \in \{R,G\}, \mathbf{V}_L \in \mathbb{R}^{28\times28\times3}\}
    \end{equation*}
    \item We then define the $\sigma$-algebras. $\mathscr{H}_{\mathbf{V}_L}$: $\sigma$-algebra on $\mathbf{V}_L$ (the colored images). and $\mathscr{H}_Y$: $\sigma$-algebra on $Y$ (the digit labels).

    \item Then define environments and datasets, causal spaces for each environment, and the product causal space based on Alg.\ref{alg:causal_dynamics}.

    \item Next, we construct the causal kernel $\mathbb{K}$:
    For the anti-causal CMNIST, construct $\mathbb{K}$ to reflect the causal structure $Y \rightarrow X \leftarrow E$. For example, for just one environment $e_i$ (so not dependant on $e_i$ in the formula anymore):
        \begin{equation*}
            K_S(\omega, A) = 
            \begin{cases}
                P(C = R \mid Y = y) & \text{if $A$ corresponds to red images} \\
                P(C = G \mid Y = y) & \text{if $A$ corresponds to green images}
            \end{cases}
        \end{equation*}
        where $P(C = R \mid Y = y)$ is given by the generation process:
        \begin{itemize}
            \item 0.75 if $Y$ is even and $E = e_1$, or $Y$ is odd and $E = e_2$
            \item 0.25 if $Y$ is odd and $E = e_1$, or $Y$ is even and $E = e_2$
        \end{itemize}
            For $S \in \mathscr{P}(T)$, $\omega \in \Omega$, and $A \in \mathscr{H}$:
    \begin{equation*}
        K_S(\omega, A) = P(X \in A \mid Y = y, E \in S)
    \end{equation*}
    where $y$ is the component of $\omega$ corresponding to $Y$, and $E \in S$ means the environments indexed by $S$. So, in general,
    \begin{equation*}
            K_S(\omega, A) = 
            \begin{cases}
                P(C = R \mid Y = y, \Omega_{e_1} \times \Omega_{e_2}) \text{ if A corresponds to red images} \\
                P(C = G \mid Y = y, \Omega_{e_1} \times \Omega_{e_2}) \text{ if A corresponds to green images}
            \end{cases}
        \end{equation*}

    \item For causal properties verification, we first check if $K_S(\omega, \{A|B\}) = K_S(\omega', \{A|B\})$ for all $\omega, \omega' \in \Omega$, $A \in \mathscr{H}_{\mathbf{V}_L}$, $B \in \mathscr{H}_Y$, $S \in \mathscr{P}(T)$ by comparing conditional probabilities across environments. This is verifying that the causal kernel $K_S$ is independent of the specific outcome $\omega$ when we condition on Y. It's checking if $P(X|Y)$ is the same for all instances, regardless of the environment, which reflects the anti-causal nature where Y causes X.
        
    \item The we verify key properties:
        \begin{itemize}
            \item $\Pr(\phi_L(\mathbf{V}_L) \mid Y)$ is invariant across $S \in \mathscr{P}(T)$
            \item $\phi_H(\phi_L(\mathbf{V}_L)) \perp E \mid Y$
        \end{itemize}
        
    To ensure the construction aligns with the anti-causal nature:
        \begin{itemize}
            \item $K_S(\omega, \{Y \in B \mid X = x\})$ should be invariant across environments
            \item $K_S^{do(X)}(\omega, \{Y \in B\}) = K_S(\omega, \{Y \in B\})$
            \item $K_S^{do(Y)}(\omega, \{X \in A\}) \neq K_S(\omega, \{X \in A\})$
        \end{itemize}
\end{enumerate}
    
\subsubsection{Rotated MNIST (RMNIST)}
Let $Y \in \{0, 1, ..., 9\}$ be the digit label, $X_0 \in \mathbb{R}^{28 \times 28}$ be the original grayscale MNIST image, $E \in \{0^\circ, 15^\circ, 30^\circ, 45^\circ, 60^\circ, 75^\circ\}$ be the environment variable, representing the rotation angle, and $R_\theta: \mathbb{R}^{28 \times 28} \rightarrow \mathbb{R}^{28 \times 28}$ be a rotation function that rotates an image by $\theta$ degrees. The generation process is as follows:

\begin{enumerate}[leftmargin=*]

\item Sample $(Y, X_0)$ from the original MNIST dataset.

\item For each environment $E = \theta$, generate the rotated image: $X_\theta = R_\theta(X_0)$
\end{enumerate}
To define the causal kernel $\mathbb{K}$:
For RMNIST, construct $\mathbb{K}$ to reflect the causal chain:
\begin{equation*}
    K_S(\omega, A) = P(X_\theta \in A \mid Y = y, E = \theta)
\end{equation*}
where $\theta$ is the rotation angle from environment $E$.

\textbf{Why is this an anti-causal problem?}
\begin{enumerate}[leftmargin=*]
\item The original relationship $Y \rightarrow X_0$ is anti-causal, as the digit label causes the original image.
\item The underlying causal graph is: $Y \rightarrow X_0 \rightarrow X_\theta \leftarrow E$, where rotation angle $E$ influences the final image.
\end{enumerate}
According to Table \ref{tab:rotated-mnist-distribution}, distribution of digits and rotations in RMNIST Environments $e_1$ and $e_2$ is observable. The task of predicting $Y$ from $X_\theta$ can be seen as an anti-causal problem (predicting cause from effect), but with an additional causal intervention ($E$) applied to the effect.

\begin{table}[htbp]
\vspace{-4mm}
    \centering
    \caption{Distribution of digits and rotations in Rotated MNIST Environments $e_1$ and $e_2$}
    \label{tab:rotated-mnist-distribution}
    \small
    \begin{tabular}{|c|c|c|c|c|c|c|}
        \hline
        \multirow{2}{*}{Digit} & \multicolumn{3}{c|}{Environment $e_1$} & \multicolumn{3}{c|}{Environment $e_2$} \\
        \cline{2-7}
        & Count & 15° & 75° & Count & 15° & 75° \\
        \hline
        0 & 5923 & 4442 (75.00\%) & 1481 (25.00\%) & 980 & 245 (25.00\%) & 735 (75.00\%) \\
        1 & 6742 & 1686 (25.00\%) & 5056 (75.00\%) & 1135 & 851 (75.00\%) & 284 (25.00\%) \\
        2 & 5958 & 4469 (75.00\%) & 1489 (25.00\%) & 1032 & 258 (25.00\%) & 774 (75.00\%) \\
        3 & 6131 & 1533 (25.00\%) & 4598 (75.00\%) & 1010 & 758 (75.00\%) & 252 (25.00\%) \\
        4 & 5842 & 4382 (75.00\%) & 1460 (25.00\%) & 982 & 246 (25.00\%) & 736 (75.00\%) \\
        5 & 5421 & 1355 (25.00\%) & 4066 (75.00\%) & 892 & 669 (75.00\%) & 223 (25.00\%) \\
        6 & 5918 & 4439 (75.00\%) & 1479 (25.00\%) & 958 & 240 (25.00\%) & 718 (75.00\%) \\
        7 & 6265 & 1566 (25.00\%) & 4699 (75.00\%) & 1028 & 771 (75.00\%) & 257 (25.00\%) \\
        8 & 5851 & 4388 (75.00\%) & 1463 (25.00\%) & 974 & 244 (25.00\%) & 730 (75.00\%) \\
        9 & 5949 & 1487 (25.00\%) & 4462 (75.00\%) & 1009 & 757 (75.00\%) & 252 (25.00\%) \\
        \hline
        Total & 60000 & 29747 (49.58\%) & 30253 (50.42\%) & 10000 & 5039 (50.39\%) & 4961 (49.61\%) \\
        \hline
    \end{tabular}
\end{table}

\subsubsection{Ball Agent}
\label{app:ballagent}
Following ~\cite{brehmer2022weakly}, let $Y \in \mathbb{R}^{2n}$ be ball positions (where $n$ is the number of balls), $X \in \mathbb{R}^{64 \times 64 \times 3}$ be the rendered image, and $E$ be the set of sparse interventions. The generation process is:
\begin{enumerate}[leftmargin=*]
    \item Sample initial ball positions $Y = [y_1, ..., y_{2n}]$ where each $y_i \sim U(0.1, 0.9)$.
    \item Generate the base image $X$ by rendering colored balls at positions $Y$.
    \item For each environment $e \in E$, apply sparse perturbations to specific ball coordinates.
\end{enumerate}

\textbf{Why is this an anti-causal problem?}
\begin{enumerate}[leftmargin=*]
\item Ball positions $Y$ cause the image appearance $X$
\item Interventions $E$ modify positions but not the underlying rendering process
\item The causal graph is: $Y \rightarrow X \leftarrow E$
\end{enumerate}
\textbf{How to create the causal space from Ball Environment}
\begin{enumerate}[leftmargin=*]
\item Define the sample space $\Omega$:
\begin{equation*}
\Omega = {(Y, E, X) \mid Y \in [0,1]^{2n}, E \in \mathcal{P}({1,...,2n}), X \in \mathbb{R}^{64 \times 64 \times 3}}
\end{equation*}
where $\mathcal{P}({1,...,2n})$ represents possible intervention subsets.
\item Define the causal kernel $\mathbb{K}$:
\begin{equation*}
    K_S(\omega, A) = P(X \in A \mid Y = y, E = e)
\end{equation*}
where $e$ represents specific intervention coordinates.

\item For sparse interventions, we should verify that:
\begin{itemize}
    \item Changes in $X$ are localized to intervened coordinates
    \item Non-intervened ball positions remain unchanged
    \item $K_S^{do(Y_i)}(\omega, \{X \in A\}) \neq K_S(\omega, \{X \in A\})$ for intervened coordinates
\end{itemize}
\end{enumerate}

Fig.\ref{fig:c17} shows that the Ball Agent dataset demonstrates a remarkably balanced intervention structure across 10,000 samples with 4 balls. Each ball receives interventions approximately 50\% of the time (ranging from 49.7\% to 50.8\%), indicating uniform intervention probability. The distribution of simultaneous interventions is uniform, with each possible number of interventions (0 to 4 balls) occurring in roughly equal proportions (19.6\% to 20.4\%). This uniformity is reflected in the high intervention pattern entropy of 1.609 bits, approaching the theoretical maximum for this scenario. Spatially, the balls maintain a minimum distance of 0.200 units, with a mean distance of 0.478 units between pairs, demonstrating effective enforcement of proximity constraints while allowing significant positional variation. This structured randomness makes the dataset particularly suitable for studying anti-causal relationships between ball positions, rendered images, and interventions.

\begin{figure*}[!t]
  \centering
  \includegraphics[width=0.32\linewidth]{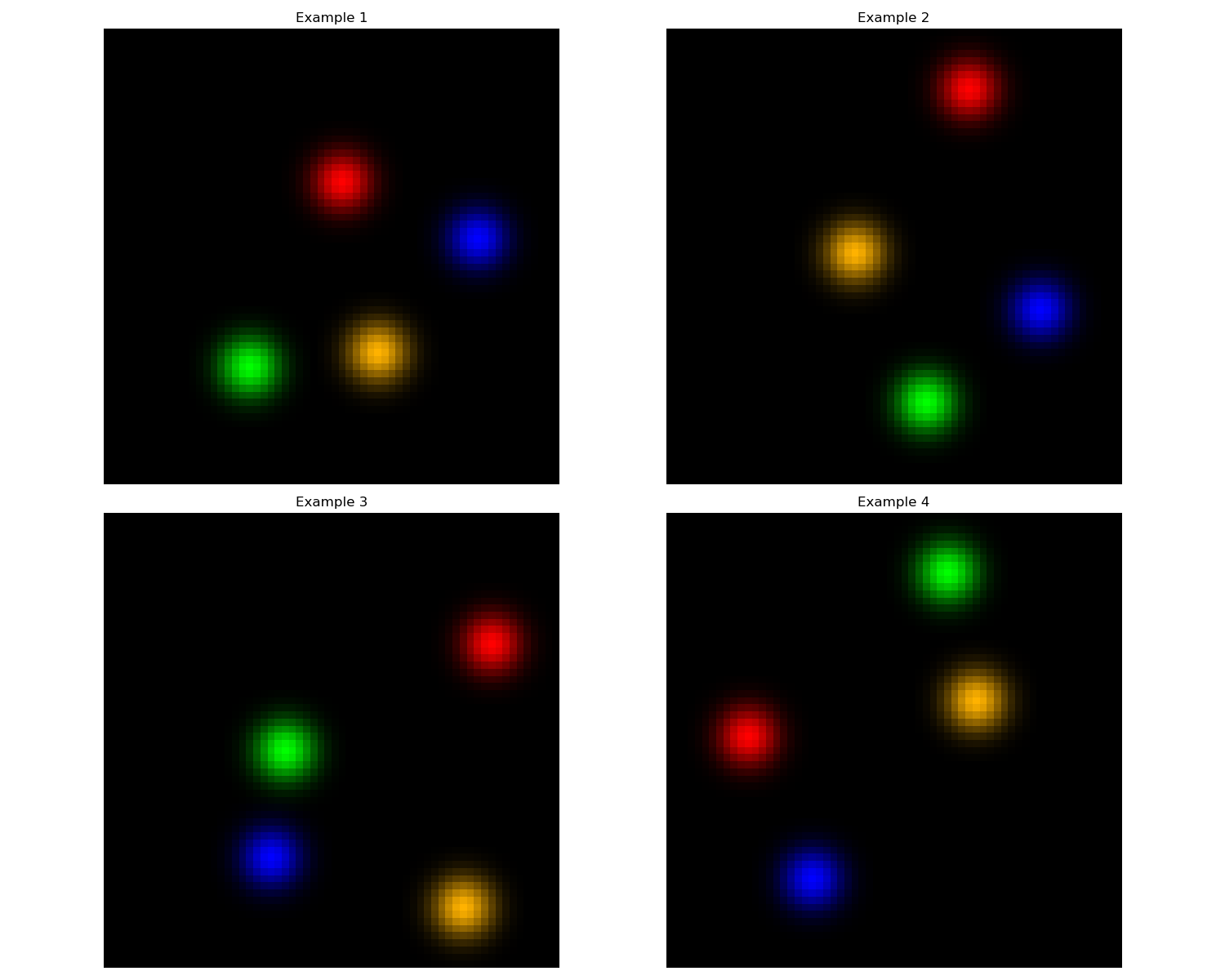}
  \includegraphics[width=0.4\linewidth]{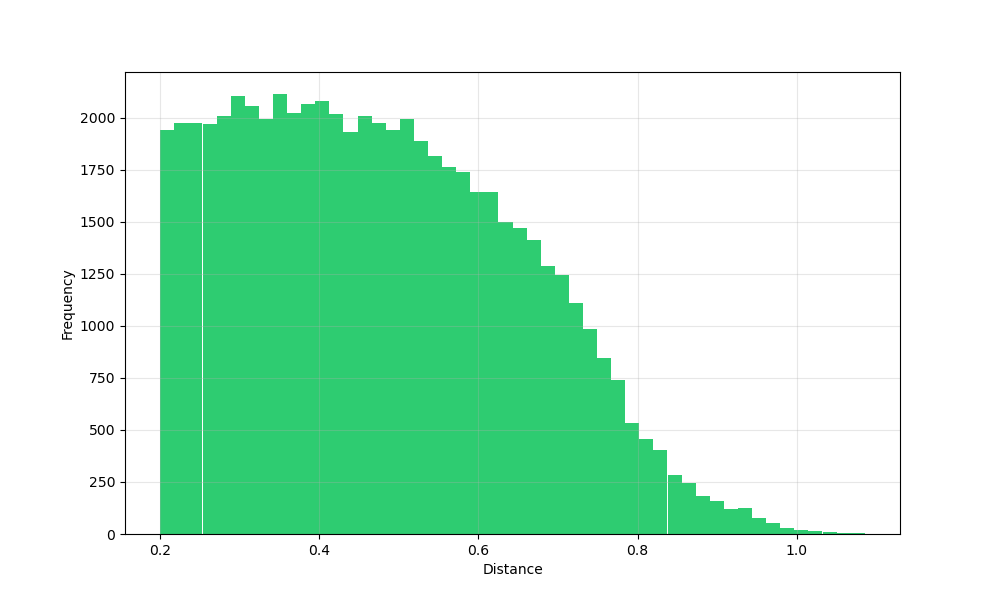}
  \caption{Visualization of the Ball Agent environment with 4 balls and 10,000 samples. (Left) Example configurations showing the colored balls spatial distribution. (Right) Analysis of inter-ball distances for the environment spatial constraints}
  \label{fig:c17}
\vspace{-4mm}
\end{figure*}

{\bf Intervention types}

In the Ball Agent environment, intervention types are categorized according to the number of ball coordinates simultaneously affected:

\begin{itemize}
    \item \textbf{None}: No intervention applied to any ball coordinates ($|I| = 0$)
    \item \textbf{Single}: Intervention affects exactly one ball coordinate ($|I| = 1$)
    \item \textbf{Double}: Interventions affect exactly two ball coordinates ($|I| = 2$)
    \item \textbf{Multiple}: Interventions affect three or more ball coordinates ($|I| \geq 3$)
\end{itemize}

These intervention patterns follow the intervention distribution defined in \cite{brehmer2022weakly}, where the intervention set $I \subseteq \{1,2,...,2n\}$ represents the indices of coordinates receiving interventions. The visualization reveals how ACIA's learned representations are organized based on these intervention patterns, demonstrating ACIAl's ability to identify invariant features despite varying intervention complexity.

\begin{figure*}[!t]
  \centering
  \includegraphics[width=0.3\linewidth]{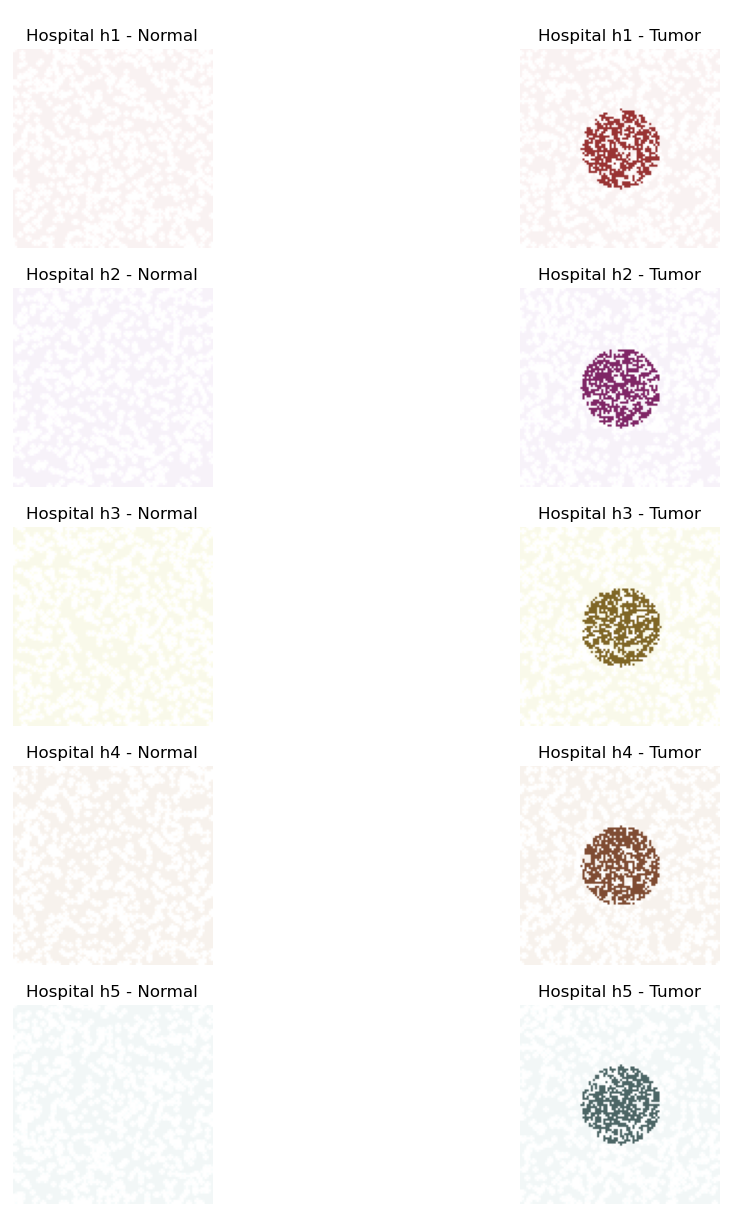}%
  \includegraphics[width=0.54\linewidth]{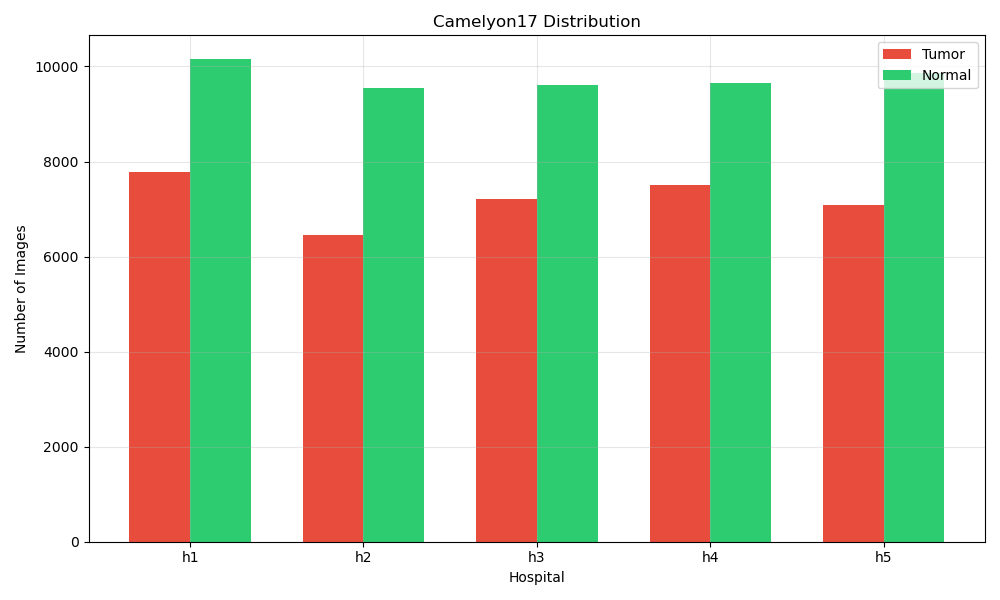}
  \caption{Camelyon17 shows hospital-specific tissue imaging variations. (Left) Examples of normal and tumor tissue images across five hospitals (h1-h5), illustrating distinct staining patterns. (Right) Distribution of normal and tumor samples across hospitals, showing the dataset's balanced nature.}
  \label{fig:Camelyon17}
  \vspace{-4mm}
\end{figure*}

\subsubsection{Camelyon17}
\label{app:camelyon}

Let $Y \in {0, 1}$ be the tumor label, $X \in \mathbb{R}^{96 \times 96 \times 3}$ be the tissue image patch, and $E \in {h_1, h_2, h_3, h_4, h_5}$ be the hospital identifier. The generation process follows the natural data collection:
\begin{enumerate}[leftmargin=*]
\item Sample tumor tissue $(Y = 1)$ or normal tissue $(Y = 0)$
\item Process tissue according to hospital protocol $E = h_i$
\item Generate patch $X$ from the processed tissue slide
\end{enumerate}

\textbf{Why is this an anti-causal problem?}
\begin{enumerate}[leftmargin=*]
    \item The presence of tumor ($Y$) causes specific visual patterns in the tissue ($X$).
    \item Hospital scanning and staining protocols ($E$) affect the image appearance but not the tumor status.
    \item The task of predicting tumor presence from images is inherently anti-causal. The causal graph is: $Y \rightarrow X \leftarrow E$
\end{enumerate}

\textbf{How to create the causal space from Camelyon17}
\begin{enumerate}[leftmargin=*]
\item Define the sample space $\Omega$:
\begin{equation*}
\Omega = {(Y, E, X) \mid Y \in {0,1}, E \in {h_1,...,h_5}, X \in \mathbb{R}^{96 \times 96 \times 3}}
\end{equation*}
\item Define the causal kernel $\mathbb{K}$ for each hospital:
\begin{equation*}
    K_S(\omega, A) = P(X \in A \mid Y = y, E = h_i)
\end{equation*}
where $h_i$ represents the specific hospital protocol.
\end{enumerate}

\begin{table}[h]
\vspace{-6mm}
    \centering
    \caption{Distribution of images across hospitals in Camelyon17 dataset}
    \label{tab:camelyon-distribution}
    \small
    \begin{tabular}{|c|c|c|c|}
        \hline
        Hospital & Total Images & Tumor & Normal \\
        \hline
        h$_1$ & 17,934 & 7,786 (43.41\%) & 10,148 (56.59\%) \\
        h$_2$ & 15,987 & 6,446 (40.32\%) & 9,541 (59.68\%) \\
        h$_3$ & 16,828 & 7,212 (42.86\%) & 9,616 (57.14\%) \\
        h$_4$ & 17,155 & 7,502 (43.73\%) & 9,653 (56.27\%) \\
        h$_5$ & 16,960 & 7,089 (41.80\%) & 9,871 (58.20\%) \\
        \hline
        Total & 84,864 & 36,035 (42.46\%) & 48,829 (57.54\%) \\
        \hline
    \end{tabular}
\end{table}
Visualization of Camelyon17 and its distribution is in Fig.\ref{fig:Camelyon17}, which exhibits systematic variations in tissue imaging across 5 hospitals while maintaining consistent tumor detection challenges. Each hospital maintains a relatively balanced distribution between tumor and normal samples, with tumor prevalence ranging from 40.32\% (h2) to 43.73\% (h4). The dataset encompasses a substantial total of 84,864 images, with individual hospital contributions ranging from 15,987 to 17,934 samples. These statistics are available in Table \ref{tab:camelyon-distribution}. The hospital-specific staining patterns are visually distinct, as evidenced by the color variations in the example images, yet the underlying tumor patterns remain consistent. This structure makes Camelyon17 an example of an anti-causal learning problem, where hospital-specific imaging protocols (E) affect the image appearance (X) but not the ground truth 
tumor status (Y).

For the Camelyon17 results (Figure~\ref{fig:Camelyon17_res}), the fourth panel shows a confidence visualization that maps the model's certainty levels across the representation space. The confidence levels are color-coded as:

\begin{itemize}
    \item \textbf{Green/Yellow}: Regions where the model maintains high confidence in tumor classification
    \item \textbf{Orange}: Intermediate confidence regions, typically at class boundaries
    \item \textbf{Red}: Low confidence regions requiring additional evidence for reliable classification
\end{itemize}

This uncertainty quantification follows \cite{gal2016dropout} and extends the predictive uncertainty estimation. The visualization reveals how ACIA preserves diagnostically relevant tissue features while discarding hospital-specific staining variations, which aligns with findings~\cite{tellez2019quantifying} on stain normalization effects in computational pathology. This confidence mapping is particularly important for medical applications where uncertainty awareness is critical for clinical decision support.

\subsection{Model Architecture and Hyperparameter Setting}
\label{app:hs}

\begin{table}[!ht]
    \centering
    \caption{Dataset configuration details and their properties.}
    \label{tab:dataset-overview}
    \resizebox{\columnwidth}{!}{%
    \begin{tabular}{lccccccc}
        \toprule
        Dataset & Type & Environments & Dimensionality & Label Space & Spurious Corr. & Train Size& Test Size\\
        \midrule
        CMNIST & Synthetic & 2 ($e_1$, $e_2$) & 28x28x3 & \{0,...,9\} & Color-digit & 60,000 & 10,000 \\
        RMNIST & Synthetic & \begin{tabular}[c]{@{}c@{}}2 train (15°, 75°),\\ 3 test (30°, 45°, 60°)\end{tabular} & 28x28x3 & \{0,...,9\} & Rotation-digit & 60,000 & 10,000 \\
        Ball Agent & Synthetic & 4 balls with interventions & 64x64x3 & $[0,1]^{2n}$ & Coord-coupling & 15,000 & 5,000 \\
        Camelyon17 & Real & \begin{tabular}[c]{@{}c@{}}3 train (hospitals 0-2),\\ 2 test (hospitals 3-4)\end{tabular} & 96x96x3& \{0,1\} & Hospital-stain & 50,916 & 33,944 \\
        \bottomrule
    \end{tabular}%
    }
\end{table}

We provide detailed hyperparameter settings for ACIA across all datasets to ensure reproducibility. We discuss the datasets properties in Table \ref{tab:dataset-overview}.

\textbf{Model architectures:} 
 We design the architecture with the following principles: the low-level representation should be sufficiently rich to capture both label-relevant and environment-dependent signals, while the high-level representation serves as an information bottleneck, filtering out environment-dependent variations, and thus is set to a lower dimension than the low-level representation.

We used consistent network architectures across all experiments in our ACIA. For low-level representation learner, we used ConvNet (3 layers) + FC layers $\rightarrow$ 32/256-dim latent space. For high-level representation learner, we used MLP (2 layers) $\rightarrow$ 128-dim latent space. For the on-top classifier, we used linear layer (output dimension varies by dataset).

{\bf Hyperparameter setting:} For all datasets, the  batch size is 32, the optimizer is Adam, the learning rate is 1e-4, and we use early stopping for Camelyon17 to avoid overfitting in the results. In our regularier, we chose $\lambda_1$ as $0.1/\sqrt{\text{batch\_size}}$ ($\approx$0.0177) for CMNIST, RMNIST, and Ball Agent, and chose 0.5 for Camelyon17. In addition, we chose $\lambda_2$ as $0.5/\sqrt{\text{batch\_size}}$ ($\approx$0.0884) for CMNIST, RMNIST, and Ball Agent, and chose 0.1 for Camelyon17. These hyperparameters were optimized based on validation performance and theoretical constraints from the ACIA framework requiring balanced regularization to achieve environment invariance without sacrificing predictive performance.

\subsection{Results on Imperfect Intervention}
\label{app:perimper}
\subsubsection{Imperfect Intervention Datasets Construction}

In  \textbf{CMNIST}, perfect interventions implement deterministic color assignment with probabilities restricted to be 0 or 1, creating a strict mapping between digits and colors. Imperfect interventions use continuous probabilities in [0,1], allowing for partial influence of the causal mechanism.

In  \textbf{RMNIST}, perfect interventions apply fixed and deterministic rotation angles for each digit class. Imperfect interventions introduce variability through probabilistic angle distributions, creating a blend between digit-specific and environment-specific rotations.

In \textbf{Ball Agent}, perfect interventions produce deterministic position shifts, completely overriding natural ball dynamics. Imperfect interventions implement probabilistic dynamics changes, where original positions are partially preserved while incorporating intervention effects.

In  \textbf{Camelyon17}, perfect interventions standardize tumor appearance through complete staining changes. Imperfect interventions apply partial staining changes, preserving some hospital-specific characteristics while normalizing others.

\subsubsection{Results on Imperfect Intervention}
The perfect and imperfect interventions both achieve accuracy values that exceed 99\%. The low values for environment independence and low-level invariance metrics indicate a successful disentanglement of label information from environment factors. Intervention robustness remains low for both types (0.025-0.028), suggesting that the model maintains consistent predictions under small perturbations regardless of the intervention mechanism. This supports the theoretical assertion that ACIA's measure-theoretic framework can effectively handle both perfect and imperfect interventions through its interventional kernel formulation.

For CMNIST, perfect interventions completely break the probabilistic relationship between digits and colors by implementing deterministic coloring. For example:
\begin{equation}
P(Color=red|Digit=even,Env=e_1) = 1.0
\end{equation}
\begin{equation}
P(Color=red|Digit=odd,Env=e_1) = 0.0
\end{equation}

Imperfect interventions preserve partial dependencies by blending original probabilities with the target values ($\alpha=0.5$ in our experiment.):
\begin{equation}
\begin{aligned}
& P(Color=red|Digit,Env) = 0.5 
\cdot P_{original}(Color=red|Digit,Env) \\
&+
0.5 \cdot P_{perfect}(Color=red|Digit,Env)
\end{aligned}
\end{equation}

\noindent {\bf RMNIST:}  Table~\ref{tab:cmpudit}  shows ACIA's effectiveness with rotation-based interventions. Perfect interventions yield a test accuracy of 99.1\%, marginally outperforming imperfect interventions (99.0\%). The environment independence metric shows the model's ability to progressively remove rotation-specific information, with perfect interventions achieving better invariance (0.455 vs. 0.480). This confirms that deterministic angle assignments facilitate more complete abstraction of spurious correlations.

In RMNIST, perfect interventions implement precise angle assignments for digit classes:
\begin{equation}
\theta_{\text{perfect}}(d) = 
\begin{cases}
\theta_{\text{base}} & \text{if } d \text{ is even} \\
\theta_{\text{base}} + 45^{\circ} & \text{if } d \text{ is odd}
\end{cases}
\end{equation}

Imperfect interventions implement probabilistic angle distributions:
\begin{equation}
\theta_{\text{imperfect}}(d) = (1-\alpha)\cdot\theta_{\text{base}} + \alpha\cdot\theta_{\text{perfect}}(d)
\end{equation}
where $\alpha=0.5$ represents the intervention strength.

The training dynamics reveal that R1 values (environment independence) decrease from initial values around 10.0 to final values of 0.46-0.48, while R2 values (causal structure alignment) decrease from 0.21 to 0.02-0.03. This convergence pattern quantitatively demonstrates ACIA's ability to progressively disentangle digit identity from rotation angle through its measure-theoretic framework.

\noindent {\bf Ball Agent:} Table~\ref{tab:cmpudit} demonstrates ACIA's exceptional performance in a continuous spatial regression task. Extended training reveals near-perfect position accuracy of 99.97\% (position error 0.0003) for both perfect and imperfect interventions. The environment independence metric shows perfect interventions achieve substantially better invariance (0.372 vs. 0.468), confirming that deterministic position shifts enable more complete abstraction of intervention-specific effects.

In  Ball Agent, perfect interventions implement deterministic position transformations:
\begin{equation}
P_{\text{perfect}}(x_i,y_i) = 
\begin{cases}
(x_i, y_i) & \text{if no intervention} \\
(x_i', y_i') & \text{if intervention on coordinate}
\end{cases}
\end{equation}

Imperfect interventions implement probabilistic position shifts:
\begin{equation}
P_{\text{imperfect}}(x_i,y_i) = (1-\alpha) \cdot (x_i,y_i) + \alpha \cdot (x_i',y_i')
\end{equation}
where $\alpha=0.5$ represents the intervention strength.

The R1 values are steadily decreasing from initial values of 1.02-1.20 to final values of 0.372-0.468, while R2 values decrease from 0.38-0.44 to 0.024-0.026. This progressive convergence pattern demonstrates ACIA's ability to effectively generalize from discrete classification to continuous regression tasks while achieving exceptional invariance properties and near-perfect positional accuracy.

\subsection{Computational Efficiency}

Table~\ref{tab:runtime} reports the runtime on a laptop with a single GPU (3.3 GHz, 32 GB RAM), using precision $\epsilon=0.01$ and failure probability $\delta=0.05$.

\begin{table}[!t]
\centering
\caption{Runtime performance of ACIA across different datasets}
\label{tab:runtime}
\begin{tabular}{lccc}
\toprule
\textbf{Dataset} & \textbf{\#Samples} & \textbf{Input Dimension} & \textbf{Time (seconds)} \\
\midrule
CMNIST & 60K & 2,352 & 580.7 \\
RMNIST & 60K & 784 & 973.6 \\
Ball Agent & 15K & 12,288 & 204.7 \\
Camelyon17 & 84K & 6,912 & 799.6 \\
\bottomrule
\end{tabular}
\end{table}

ACIA completes training in approximately 3--16 minutes under these settings, demonstrating practical scalability for real-world applications.

\section{Discussion}
\label{app:discussion}
\subsection{Why Kernel Formulation?}

{Interventional kernels play a pivotal role in distinguishing between causal and anti-causal relationships by capturing how distributions respond to interventions. In anti-causal structures, intervening on $X$ does not affect $Y$ (as shown in Corollary \ref{cor:iic}), while intervening on $Y$ changes the distribution of $X$. Interventional kernels formalize the counterfactual question: "How would features $X$ change if we intervened on label $Y$?" This enables learning representations that capture the generative process from $Y$ to $X$. 
In addition, the interventional kernel formulation enables both perfect and imperfect interventions, which is critical since real-world interventions are rarely perfect. The kernel independence property also helps separate the mechanisms $P(X|Y)$ from the environmental influences $P(X|E)$. This formalization through interventional kernels provides a rigorous foundation for causal representation learning that goes beyond mere statistical association.}

\subsection{Connection to Existing Methods}
\label{app:IRMConnection}
Our measure-theoretic approach connects to and extends several 
several existing frameworks. 
For instance, under linear representations and squared loss, the optimization problem in Eqn.\ref{eqn:objfunc} is an extended version of 
Invariant Risk Minimization (IRM) \cite{arjovsky2019invariant} that includes two levels of invariance:
\begin{equation}
\min_{\phi_L, \phi_H, \mathcal{C}} \sum_{e \in \mathcal{E}} R^e(\mathcal{C} \circ \phi_H \circ \phi_L) \quad \text{s.t.} \, 
\begin{cases}
\phi_H \circ \phi_L \in \arg\min_{\phi} R^e(\mathcal{C} \circ \phi) \quad \forall e \in \mathcal{E} \\
\phi_L \in \arg\min_{\psi} D_{\text{KL}}(P^{e_i}(\psi(X)|Y) \| P^{e_j}(\psi(X)|Y)) \quad \forall e_i, e_j \in \mathcal{E}
\end{cases}
\end{equation}
where $R^e$ is the risk in environment $e$ and $D_{\text{KL}}$ is the Kullback-Leibler divergence. 

Further, the min-max formulation in Eqn.\ref{eqn:objfunc} can be viewed as a form of Distributionally Robust Optimization (DRO) with an uncertainty set defined by the causal structure:
\begin{equation}
\min_{\phi_L, \phi_H, \mathcal{C}} \max_{P \in \mathcal{P}} \mathbb{E}_{(X,Y) \sim P}[\ell((\mathcal{C} \circ \phi_H \circ \phi_L)(X), Y)]
\end{equation}
where $\mathcal{P}$ is the set of distributions consistent with the anti-causal structure. This connection implies that our approach inherits the worst-case performance guarantees of DRO while enforcing causal consistency through the regularization terms.

\subsection{Anti-Causal Toy Example vs. Kernels}
{We illustrate how kernels capture conditional probabilities with a simple structural causal model (SCM).} See below:

\begin{figure}[ht]
\centering
\begin{tikzpicture}
    \node[circle, draw] (Y) at (0,0) {$Y$};
    \node[circle, draw] (X) at (2,0) {$X$};
    \node[circle, draw] (E) at (1,-1.5) {$E$};
    
    \draw[->, thick] (Y) -- (X);
    \draw[->, thick] (E) -- (X);
    
    \node[anchor=west] at (3,-0.75) {
        \begin{tabular}{l}
        $Y \sim \text{Bernoulli}(0.5)$ \\
        $E \sim \text{Bernoulli}(0.5)$ \\
        $X = f(Y, E, U_X)$ where: \\
        \quad $P(X=1|Y=0, E=0) = 0.2$ \\
        \quad $P(X=1|Y=0, E=1) = 0.4$ \\
        \quad $P(X=1|Y=1, E=0) = 0.6$ \\
        \quad $P(X=1|Y=1, E=1) = 0.8$ \\
        \end{tabular}
    };
\end{tikzpicture}
\caption{Toy Anti-Causal SCM showing how $Y$ and $E$ determine $X$}
\label{fig:toy_scm}
\end{figure}

In this model, $Y$ and $E$ both causally influence $X$. The environment $E$ shifts the probabilities, but the causal effect of $Y$ on $X$ remains consistent: $Y=1$ always increases the probability of $X=1$ by 0.4 compared to $Y=0$.

This example illustrates a key property of anti-causal structures: intervening on $Y$ changes the distribution of $X$, while intervening on $E$ also changes $X$. Our kernels $K_S^{\mathcal{Z}_L}$ and $K_S^{\mathcal{Z}_H}$ capture these conditional probabilities. The role of ACIA is to extract the invariant relationship between $Y$ and $X$ (the consistent +0.4 effect) while removing the effect of environment $E$.

For the provided SCM where $Y \rightarrow X \leftarrow E$, the causal kernel $K_S(\omega, A)$ represents the conditional probability $P(X \in A | Y=y, E=e)$, where $\omega$ contains components $(y,e)$. For specific realizations:
\begin{align*}
K_S(\omega, \{X=1\}) &= P(X=1 | Y=y, E=e)= \begin{cases}
0.2 & \text{if } y=0, e=0 \\
0.4 & \text{if } y=0, e=1 \\
0.6 & \text{if } y=1, e=0 \\
0.8 & \text{if } y=1, e=1
\end{cases}
\end{align*}

The product causal space combines environments by defining $\Omega = \Omega_{e_0} \times \Omega_{e_1}$ with $S \subseteq \{e_0, e_1\}$. For $S = \{e_0, e_1\}$, our kernel becomes:
\begin{align*}
K_{\{e_0, e_1\}}(\omega, \{X=1\}) &= P(X=1 | Y=y, E \in \{e_0, e_1\}) = \frac{P(X=1, E \in \{e_0, e_1\} | Y=y)}{P(E \in \{e_0, e_1\} | Y=y)} \\
&= \frac{P(X=1, E=e_0 | Y=y) + P(X=1, E=e_1 | Y=y)}{P(E=e_0 | Y=y) + P(E=e_1 | Y=y)} \\
&= \frac{P(X=1 | Y=y, E=e_0)P(E=e_0) + P(X=1 | Y=y, E=e_1)P(E=e_1)}{P(E=e_0) + P(E=e_1)}
\end{align*}

For $Y=0$ with equal environment probabilities, we get:
\begin{align*}
K_{\{e_0, e_1\}}(\omega, \{X=1\}) &= \frac{0.2 \cdot 0.5 + 0.4 \cdot 0.5}{0.5 + 0.5} = 0.3
\end{align*}

The interventional kernel under perfect intervention $do(Y=1)$ would be:
\begin{align*}
K_S^{do(Y=1)}(\omega, \{X=1\}) &= \int_{\Omega} K_S(\omega', \{X=1\}) \mathbb{Q}(d\omega'|\omega) = \begin{cases}
0.6 & \text{if } e=0 \\
0.8 & \text{if } e=1
\end{cases}
\end{align*}

For imperfect interventions with strength $\alpha \in [0,1]$:
\begin{align*}
K_S^{soft\_do(Y)}(\omega, \{X=1\}) &= (1-\alpha) \cdot K_S(\omega, \{X=1\}) + \alpha \cdot K_S^{do(Y)}(\omega, \{X=1\})
\end{align*}

Our low-level representation kernel $K_S^{\mathcal{Z}_L}$ integrates over the empirical distribution:
\begin{align*}
K_S^{\mathcal{Z}_L}(\omega, \{X=1\}) &= \int_{\Omega} K_S(\omega', \{X=1\}) d\mathbb{Q}(\omega') \approx \frac{1}{n}\sum_{i=1}^n K_S(\omega_i, \{X=1\})
\end{align*}

The high-level kernel $K_S^{\mathcal{Z}_H}$ integrates over low-level representations to abstract away environment-specific effects:
\begin{align*}
K_S^{\mathcal{Z}_H}(\omega, \{X=1\}) &= \int_{\mathcal{D}_{\mathcal{Z}_L}} K_S^{\mathcal{Z}_L}(\omega, \{X=1\}) d\mu(z) \approx P(X=1|Y=y)  \\
&= 0.5 \cdot P(X=1|Y=y,E=e_0) + 0.5 \cdot P(X=1|Y=y,E=e_1)
\end{align*}

In this way, ACIA captures the invariant relationship: $Y=1$ consistently increases $P(X=1)$ by 0.4 compared to $Y=0$, regardless of environment.

\subsection{Anti-Causal Representation Learning,  Robustness, and Privacy}

Anti-causal representation learning focuses on modeling settings where labels generate features rather than the reverse. This perspective is particularly relevant in domains such as healthcare, biology, and natural sciences, where the underlying causal process is generative (e.g., diseases causing observable symptoms or molecular mechanisms producing data patterns). By explicitly capturing these anti-causal dynamics, we conjecture that ACIA gains not only better generalization across environments but also stronger foundations for robustness and privacy.

{\bf Relationship to Adversarial Robustness:} 
Adversarial vulnerabilities often arise because conventional predictive models latch onto spurious correlations or environment-specific artifacts that are easily perturbed. Anti-causal representation learning mitigates this by forcing the model to encode how labels give rise to features. As a result, learned representations emphasize stable causal pathways instead of fragile correlations. For instance, in medical diagnoise, a robust anti-causal representation prioritizes disease-related structures rather than scanner-specific noise or hospital-specific imaging protocols. This alignment reduces the attack surface for adversaries, since perturbations that exploit non-causal features become less influential. In this sense, adversarial robustness can be viewed as an emergent property of faithfully modeling anti-causal structure.

{\bf Relationship to Privacy Protection:} 
Privacy risks (e.g., membership inference, data reconstruction) in machine learning models often arise when models overfit to environment- or individual-specific details. 
Anti-causal learning frameworks such as ACIA pursue  high-level causal abstractions that discard environment-specific and identity-revealing details while preserving task-relevant causal signals. By abstracting away non-essential variations, anti-causal representations naturally suppress features tied to identity or context, thereby reducing information leakage. For instance, an anti-causal model for health prediction may learn “disease → symptom” relationships while ignoring hospital IDs or demographic quirks, which protects patient privacy.

{\bf Relationship to Adversarially Robust and Privacy-Preserving Representation Learning:} 
Adversarially robust representation learning~\cite{zhu2020learning,zhou2022improving,zhang2025learning} seeks to construct representation spaces that remain stable under small but adversarially chosen perturbations, ensuring that the predicted labels remain unchanged across adversarial variants. A key limitation, however, is that such models may still rely on spurious correlations or environment-specific artifacts that are vulnerable to manipulation. In contrast, anti-causal representation learning explicitly models how labels generate features, thereby shifting learned representations toward invariant causal mechanisms rather than fragile correlations.  

Privacy-preserving representation learning~\cite{wang2021privacy,noorbakhsh2024inf2guard,arevalo2024task} aims to learn representations that prevent models from encoding or leaking sensitive, environment-specific, or identity-related attributes. These methods often employ regularization or information-theoretic constraints to balance utility and privacy. Anti-causal learning provides a principled complement: rather than obfuscating sensitive features post hoc, it avoids encoding them in the first place by aligning representations with label-to-feature generative mechanisms—retaining task-relevant causal features while discarding nuisance signals tied to individuals or contexts.  

\section{A Concrete Example to Illustrate ACIA}
\label{sec:_example}

In this section, we present an example  step-by-step to illustrate how ACIA works in practice. We focus on a simplified version of the Colored MNIST (CMNIST) dataset to make the concepts accessible.

\subsection{Step 0 (Problem Setup): The Colored Digit Example}

Imagine a scenario where we have images of handwritten digits that are colored either red or green. We have two environments with different digit-color correlations:

In causal learning, we predict causes from effects (e.g., predict disease from symptoms). But in anti-causal learning, we predict effects from causes (e.g., predict symptoms from disease). The digit label $Y$ (cause) generates the color feature in image $X$ (effect). The task is to predict $Y$ from $X$. The goal is to learn representations that capture the invariant causal relationship between $Y$ and $X$ while removing the spurious environment-specific correlations.

In environment $e_1$, even digits (0,2,4,6,8) are mostly colored red (75\%), while odd digits (1,3,5,7,9) are mostly colored green (75\%). In environment $e_2$, the correlation is reversed - even digits are mostly green (75\%), while odd digits are mostly red (75\%).

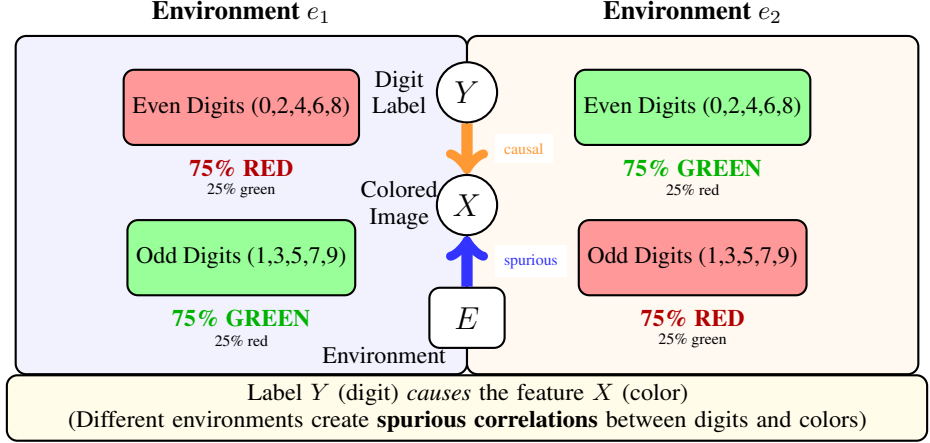
\begin{figure}[t]
\centering
\begin{tikzpicture}[scale=1,
    env/.style={draw, thick, rounded corners, minimum width=6cm, minimum height=4.5cm},
    digitbox/.style={draw, thick, rounded corners, minimum width=2.2cm, minimum height=1cm, font=\small},
    node distance=0.3cm]

\node[env, fill=blue!5] (env1) at (0, 1.5) {};
\node[font=\bfseries, above] at (0, 3.8) {Environment $e_1$};

\node[digitbox, fill=red!40] at (0, 2.8) {Even Digits (0,2,4,6,8)};
\node[font=\small\bfseries, text=red!70!black] at (0, 2) {75\% RED};
\node[font=\tiny] at (0, 1.7) {25\% green};

\node[digitbox, fill=green!40] at (0, 0.8) {Odd Digits (1,3,5,7,9)};
\node[font=\small\bfseries, text=green!70!black] at (0, 0) {75\% GREEN};
\node[font=\tiny] at (0, -0.3) {25\% red};

\node[env, fill=orange!5] (env2) at (6, 1.5) {};
\node[font=\bfseries, above] at (6, 3.8) {Environment $e_2$};

\node[digitbox, fill=green!40] at (6, 2.8) {Even Digits (0,2,4,6,8)};
\node[font=\small\bfseries, text=green!70!black] at (6, 2) {75\% GREEN};
\node[font=\tiny] at (6, 1.7) {25\% red};

\node[digitbox, fill=red!40] at (6, 0.8) {Odd Digits (1,3,5,7,9)};
\node[font=\small\bfseries, text=red!70!black] at (6, 0) {75\% RED};
\node[font=\tiny] at (6, -0.3) {25\% green};

\node[draw, circle, fill=white, thick, minimum size=0.8cm, font=\large] (Y) at (3, 3) {$Y$};
\node[draw, circle, fill=white, thick, minimum size=0.8cm, font=\large] (X) at (3, 1.5) {$X$};
\node[draw, rectangle, rounded corners, fill=white, thick, minimum width=1cm, minimum height=0.8cm, font=\large] (E) at (3, 0) {$E$};

\draw[->, line width=1.5mm, orange!80] (Y) -- (X) node[midway, right=3mm, font=\tiny, fill=white] {causal};
\draw[->, line width=1.5mm, blue!80] (E) -- (X) node[midway, right=3mm, font=\tiny, fill=white] {spurious};

\node[font=\small, align=center] at (2.1, 3) {Digit\\Label};
\node[font=\small, align=center] at (2.1, 1.5) {Colored\\Image};
\node[font=\small, align=center] at (1.9, -.5) {Environment};

\node[draw, fill=yellow!10, rounded corners, thick, align=center, font=\small, text width=12cm] at (3, -1.2) {
    Label $Y$ (digit) \textit{causes} the feature $X$ (color)\\
    (Different environments create \textbf{spurious correlations} between digits and colors)};

\end{tikzpicture}
\caption{Anti-causal structure in Colored MNIST. The digit label $Y$ causally generates the observed colored image $X$, while environment $E$ introduces spurious digit-color correlations.}
\label{fig:example_setup}
\end{figure}

\subsection{Step 1: Define Causal Kernels for Each Environment}
Causal kernels mathematically capture "how likely is a colored image given a digit label." They formalize the data generation process: $Y$ (digit) $\rightarrow$ color $\rightarrow$ $X$ (image).

First, we define causal kernels that capture the conditional probabilities in each environment. For environment $e_1$:
\begin{equation}
K_{e_1}(\omega, A) = 
\begin{cases}
0.75 & \text{if $A$ has red images and $Y$ is even} \\
0.25 & \text{if $A$ has red images and $Y$ is odd} \\
0.25 & \text{if $A$ has green images and $Y$ is even} \\
0.75 & \text{if $A$ has green images and $Y$ is odd}
\end{cases}
\end{equation}

For environment $e_2$, the probabilities are reversed:
\begin{equation}
K_{e_2}(\omega, A) = 
\begin{cases}
0.25 & \text{if $A$ has red images and $Y$ is even} \\
0.75 & \text{if $A$ has red images and $Y$ is odd} \\
0.75 & \text{if $A$ has green images and $Y$ is even} \\
0.25 & \text{if $A$ has green images and $Y$ is odd}
\end{cases}
\end{equation}

These kernels formalize how the probabilities of observable features (colors) depend on the label (digit) in each environment.

\subsection{Step 2: Construct the Low-Level Representation}

While the causal kernels in Step 1 define the \textit{theoretical} probabilities, we now need a \textit{practical} neural network $\phi_L$ to learn these relationships from data. Our low-level representation $\phi_L$ captures raw relationships from input data. For each input image $x \in \mathcal{X}$, $\phi_L(x)$ extracts features that include both digit-related patterns and color information.

In the product causal space combining both environments, we integrate over the empirical distribution:
\begin{equation}
K_S^{\mathcal{Z}_L}(\omega, A) = \int_{\Omega} K_S(\omega', A) \, d\mathbb{Q}(\omega')
\end{equation}

This integration averages the causal kernels across all observed samples, creating a low-level representation space that captures both digit shape information (invariant across environments), and color information (varies with environment). In practice, this integration is implemented as a weighted average over training samples:

\begin{equation}
K_S^{\mathcal{Z}_L}(\omega, A) \approx \frac{1}{|D_{e_1}|} \sum_{j=1}^{|D_{e_1}|} K_{e_1}(\omega_j, A) + \frac{1}{|D_{e_2}|} \sum_{j=1}^{|D_{e_2}|} K_{e_2}(\omega_j, A)
\end{equation}

This gives us the low-level causal dynamics $\mathcal{Z}_L = \langle \mathcal{X}, \mathbb{Q}, \mathbb{K}_L = \{K_S^{\mathcal{Z}_{L}}(\omega, A) : S \in \mathscr{P}(T), A \in \mathscr{H}\} \rangle$ that captures both digit-specific and environment-specific information. 
%For each input image $x$, we extract a low-level representation $\phi_L(x)$ using a neural network:

\begin{algorithm}[h]
\caption{Causal Dynamic: Construction of Low-Level Representations}
\begin{algorithmic}[1]
\State \textbf{Input:} Dataset $\{(x_i, y_i, e_i)\}_{i=1}^n$ with images, labels, environments
\State \textbf{Output:} Low-level representation function $\phi_L$
\State Initialize convolutional neural network $\phi_L$ with parameters $\theta_L$
\State \textbf{for} each mini-batch $(x, y, e)$ \textbf{do}
\State \quad $z_L \gets \phi_L(x; \theta_L)$ \Comment{Forward pass through low-level network}
\State \quad Compute environment-conditioned distributions $P(z_L | y, e)$
\State \quad Compute causal kernel $K_S^{\mathcal{Z}_L}$ using:
\State \quad $K_S^{\mathcal{Z}_L}(\omega, A) \approx \frac{1}{|D_S|}\sum_{j \in D_S} P(z_L \in A | y_j, e_j)$
\State \quad Update $\theta_L$ using gradient descent
\State \textbf{end for}
\State \textbf{return} $\phi_L$
\end{algorithmic}
\label{alg:causal_dynamics}
\end{algorithm}

In practice, we extract a low-level representation $\phi_L(x)$ using a neural network as below: 
\begin{align}
\phi_L(x) = \text{Linear}(\text{Flatten}(\text{CNN}(x)))
\end{align}

\subsection{Step 3: Construct the High-Level Representation}

The high-level representation $\phi_H$ builds on the low-level representation to extract only the invariant causal features. For our digit-color example, this means distilling information that is consistently related to the digit across environments, while discarding the spurious color-digit correlation.

The high-level kernel is defined by integrating over the domain of low-level representations:
\begin{equation}
K_S^{\mathcal{Z}_H}(\omega, A) = \int_{\mathcal{D}_{\mathcal{Z}_L}} K_S^{\mathcal{Z}_L}(\omega, A) \, d\mu(z)
\end{equation}
This creates a causal abstraction $\mathcal{Z}_H = \langle \mathbf{V}_H, \mathbb{K}_H \rangle$ that focuses on the digit shape rather than its color. %The high-level representation 
$\phi_H$ processes the low-level features to extract environment-invariant features:
In practice, $\phi_H$ is implemented as a multi-layer perceptron:
\begin{align}
\phi_H(z_L) = \text{MLP}(z_L) = W_2(\text{ReLU}(W_1 z_L + b_1)) + b_2
\end{align}
The high-level kernel $K_S^{\mathcal{Z}_H}$ integrates over the domain of low-level representations to create a representation that is invariant to environment-specific features.

\begin{algorithm}[h]
\caption{Causal Abstraction: Construction of High-Level Representations}
\begin{algorithmic}[1]
\State \textbf{Input:} Low-level representations $\{z_{L_i}\}_{i=1}^n$, labels $\{y_i\}_{i=1}^n$, environments $\{e_i\}_{i=1}^n$
\State \textbf{Output:} High-level representation function $\phi_H$
\State Initialize MLP network $\phi_H$ with parameters $\theta_H$
\State \textbf{for} each mini-batch $(z_L, y, e)$ \textbf{do}
\State \quad $z_H \gets \phi_H(z_L; \theta_H)$ \Comment{Forward pass through high-level network}
\State \quad \textbf{for} each label value $y_k$ \textbf{do}
\State \quad \quad \textbf{for} environments $e_i, e_j$ where $i \neq j$ \textbf{do}
\State \quad \quad \quad Compute $z_{H,e_i,y_k} = \text{Avg}(\{z_H^{(l)} | y^{(l)}=y_k, e^{(l)}=e_i\})$
\State \quad \quad \quad Compute $z_{H,e_j,y_k} = \text{Avg}(\{z_H^{(l)} | y^{(l)}=y_k, e^{(l)}=e_j\})$
\State \quad \quad \quad Compute distance $d_{ij,k} = \|z_{H,e_i,y_k} - z_{H,e_j,y_k}\|_2$
\State \quad \quad \textbf{end for}
\State \quad \textbf{end for}
\State \quad Update $\theta_H$ to minimize distances $d_{ij,k}$ (environment independence)
\State \textbf{end for}
\State \textbf{return} $\phi_H$
\end{algorithmic}
\label{alg:causal_abstraction}
\end{algorithm}

\subsection{Step 4: Joint Optimize on Loss and Regularizations}

The complete optimization objective is:
\begin{equation}
\min_{\mathcal{C},\phi_L,\phi_H} \max_{e_i \in \mathcal{E}} \Big[ \mathbb{E}_{e_i}[\ell(f(X), Y)] + \lambda_1 R_1 + \lambda_2 R_2 \Big]
\end{equation}
where $f = \mathcal{C} \circ \phi_H \circ \phi_L$ is the complete predictive model.

\paragraph{$R_1$ - Environment Independence Regularizer:} 
For each label $y_k$, we compute the mean high-level representation for each environment, then measure the distance between these means across environments:
\begin{equation}
R_1 = \sum_{y_k}\sum_{e_i \neq e_j} \|\mathbb{E}[z_H | y_k, e_i] - \mathbb{E}[z_H | y_k, e_j]\|_2
\end{equation}
Minimizing $R_1$ ensures that representations for the same digit are similar regardless of environment (color).

\paragraph{$R_2$ - Causal Structure Consistency Regularizer:}
We measure how well the high-level representation preserves the causal structure:
\begin{equation}
R_2 = \sum_{e_i} \|P(y | z_H, e_i) - P(y | \text{do}(z_H), e_i)\|_2
\end{equation}
This can be approximated by adding noise to the input (simulating intervention) and ensuring that the predicted output distribution remains consistent with the theoretical interventional distribution.

\begin{algorithm}[h]
\caption{ACIA: Anti-Causal Representation Learning}
\begin{algorithmic}[1]
\State \textbf{Input:} Dataset $\{(x_i, y_i, e_i)\}_{i=1}^n$, initialized $\phi_L$ and $\phi_H$
\State \textbf{Output:} Optimized $\phi_L$, $\phi_H$, and classifier $\mathcal{C}$
\State Initialize classifier $\mathcal{C}$ with parameters $\theta_C$
\State \textbf{for} each epoch \textbf{do}
\State \quad \textbf{for} each mini-batch $(x, y, e)$ \textbf{do}
\State \quad \quad $z_L \gets \phi_L(x)$ \Comment{Low-level representations}
\State \quad \quad $z_H \gets \phi_H(z_L)$ \Comment{High-level representations}
\State \quad \quad $\hat{y} \gets \mathcal{C}(z_H)$ \Comment{Predictions}
\State \quad \quad Compute classification loss $\mathcal{L}(\hat{y}, y)$
\State \quad \quad Compute $R_1$ regularizer: \Comment{Environment independence}
\State \quad \quad \quad $R_1 = \sum_{y_k}\sum_{e_i \neq e_j} \|\mathbb{E}[z_H | y_k, e_i] - \mathbb{E}[z_H | y_k, e_j]\|_2$
\State \quad \quad Compute $R_2$ regularizer: \Comment{Causal structure consistency}
\State \quad \quad \quad Estimate conditional distributions $P(y | z_H)$ and $P(y | \text{do}(z_H))$
\State \quad \quad \quad $R_2 = \sum_{e_i} \|P(y | z_H, e_i) - P(y | \text{do}(z_H), e_i)\|_2$
\State \quad \quad Update all parameters using $\mathcal{L}_{total} = \mathcal{L}_{cls} + \lambda_1 R_1 + \lambda_2 R_2$
\State \quad \textbf{end for}
\State \textbf{end for}
\State \textbf{return} $\phi_L$, $\phi_H$, $\mathcal{C}$
\end{algorithmic}
\label{alg:optimization}
\end{algorithm}

\textbf{Practical Implementation of $R_1$:} For each label $y_k$, we first compute the mean high-level representation for each environment, and then measure the distance between these means across environments. Minimizing this distance ensures that representations for the same digit are similar regardless of the environment.

\textbf{Practical Implementation of $R_2$:} We measure how well the high-level representation preserves the causal structure. In practice, this can be approximated by adding noise to the input (simulating intervention) and ensuring that the predicted output distribution remains consistent with the theoretical interventional distribution.

\subsection{Step 5 (Outcome): What ACIA Achieves?}
Figure~\ref{fig:example_outcome} illustrates the outcome visually. Given a new colored digit image $x$ from a potentially unseen environment, the low-level representation $\phi_L(x)$ extracts features related to both digit shape and color. The high-level representation $\phi_H(\phi_L(x))$ focuses only on features invariantly associated with the digit label, discarding the spurious color correlation. Finally, the classifier $\mathcal{C}$ predicts the digit based only on these invariant features.

\begin{figure}[h]
    \centering
    \begin{tikzpicture}
        \node[draw, minimum width=2cm, minimum height=2cm, font=\small, align=center] (input) at (0,0) {
            \textbf{Input:}\\[2pt]
            Red ``2''\\[1pt]
            {\tiny (from $e_1$ or $e_2$?)}
        };
        
        \node[draw, minimum width=2.8cm, minimum height=2cm, fill=blue!10, font=\small, align=center] (low) at (3.5,0) {
            $\phi_L(x)$:\\[2pt]
            {\tiny Digit: [0.9, 0.85, ...]}\\
            {\tiny Color: [0.95 red]}\\[2pt]
            \textbf{Both preserved}
        };
        
        \node[draw, minimum width=2.8cm, minimum height=2cm, fill=purple!10, font=\small, align=center] (high) at (7,0) {
            $\phi_H(\phi_L(x))$:\\[2pt]
            {\small Digit: [0.9, 0.85, ...]}\\
            {\small Color: \textcolor{red}{[filtered]}}\\[2pt]
            \textbf{Only digit kept}
        };
        
        \node[draw, minimum width=2cm, minimum height=2cm, font=\small, align=center] (classifier) at (10,0) {
            $\mathcal{C}$:\\[2pt]
            Prediction =2 \\[2pt]
        };
        
        \draw[->, ultra thick] (input) -- (low);
        \draw[->, ultra thick] (low) -- (high);
        \draw[->, ultra thick] (high) -- (classifier);
        
        \draw[dashed, rounded corners, thick] (2,-1.8) rectangle (5,1.8);
        \node[font=\footnotesize] at (3.5,-2.1) {Environment-dependent};
        
        \draw[dashed, rounded corners, blue, thick] (5.5,-1.8) rectangle (11.5,1.8);
        \node[blue, font=\footnotesize] at (8,-2.1) {Environment-invariant};
    \end{tikzpicture}
    \caption{The outcome of ACIA with concrete values. The low-level representation captures both digit shape (0.9, 0.85) and color (0.95 for red), while the high-level representation filters out color and retains only digit features, enabling robust prediction regardless of environment.}
    \label{fig:example_outcome}
    \vspace{-4mm}
\end{figure}
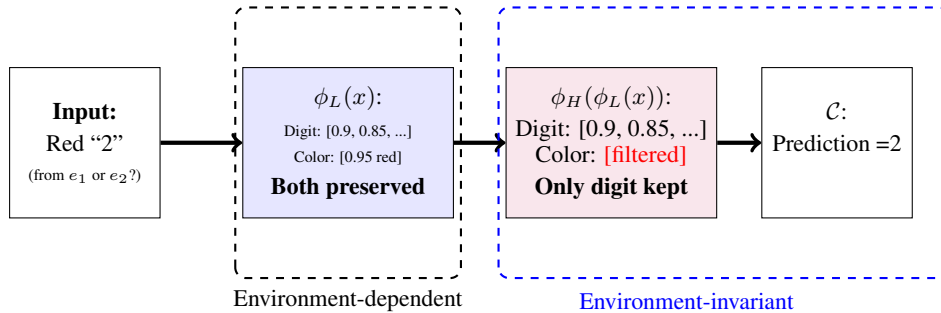

\section{Impact Statement}
\label{app:impact}
Our work advances causality and machine learning  by developing robust representations for anti-causal settings. Our experiments on real-world datast Camelyon17 demonstrate potential for improving diagnostic tools that generalize across hospital systems. We recognize that in medical contexts, some hospital-specific variations may actually contain diagnostically relevant information. ACIA addresses this through its two-level representation architecture, where low-level features can preserve environment-specific details while high-level abstractions capture invariant causal relationships. The regularization parameters in our optimization problem in Equation \ref{eqn:objfunc} can be adjusted to balance between robustness and retention of clinically significant variations.

While not the primary focus of our work, ACIA's ability to disentangle spurious correlations from causal mechanisms aligns with fairness  in algorithmic decision-making. For instance, our framework could help identify and mitigate data biases by distinguishing environment-specific associations from invariant causal relationships. ACIA could enhance systems that must operate across diverse environments (e.g., autonomous vehicles, climate models), hence improving safety and reliability. However, practitioners should validate that invariant features identified by our method remain meaningful for their specific application context.

\end{document}